\definecolor{mydarkblue}{rgb}{0,0.08,0.45}
\newcolumntype{P}[1]{>{\centering\arraybackslash}p{#1}}
\newcommand{\R}{\mathbb{R}}
\newcommand{\C}{\mathbb{C}}
\newcommand{\bigO}{\mathcal{O}}
\newcommand{\bgO}[1]{\mathcal{O}\left(#1\right)}
\newcommand{\acf}{\underaccent{\bar}{\rho}}
\renewcommand{\P}{\mathcal{P}}
\DeclareMathOperator*{\argmax}{arg\,max}
\DeclareMathOperator*{\argmin}{arg\,min}
\DeclareMathOperator*{\Sp}{Sp}
\DeclareMathOperator*{\diag}{diag}
\DeclareMathOperator*{\Id}{Id}
\DeclareMathOperator*{\interior}{int}
\DeclarePairedDelimiterX{\inp}[2]{\langle}{\rangle}{#1, #2}
\declaretheorem[name=Theorem]{thm}
\declaretheorem[name=Theorem, numbered=no]{thm*}
\declaretheorem[name=Proposition]{prop}
\declaretheorem[name=Lemma]{lemma}
\declaretheorem[name=Corollary]{cor}
\declaretheorem[name=Definition, style=definition]{definition}
\declaretheorem[name=Remark, style=remark]{rmk}
\newcommand{\Span}{\textbf{Span}}
\newcommand{\J}{\textbf{J}}
\crefname{thm}{Thm.}{theorems}
\crefname{fig}{Fig.}{fig.}
\crefname{prop}{Prop.}{propositions}
\crefname{cor}{Cor.}{corollaries}
\crefname{lemma}{Lem.}{lem.}
\crefname{asm}{Assumption}{assumptions}
\crefname{ex}{Example}{examples}
\definecolor{ao}{rgb}{0.55, 0.7, 0.0}
\newif\ifcompress
\newcommand{\varvspace}[1]{
\ifcompress
    \vspace{#1}
\fi
}
\begin{document}

\runningauthor{Waïss Azizian, Damien Scieur, Ioannis Mitliagkas, Simon Lacoste-Julien, Gauthier Gidel}

\twocolumn[

\aistatstitle{Accelerating Smooth Games by Manipulating Spectral Shapes}

\aistatsauthor{ Waïss Azizian$^{1,\dag}$\, Damien Scieur$^{2}$ \, Ioannis Mitliagkas$^{3,\ddagger}$ \, Simon Lacoste-Julien$^{2,3,\ddagger}$\, Gauthier Gidel$^{3}$}
\aistatsaddress{ $^1$École Normale Supérieure, Paris  $\;\; ^2$Mila \& SAIT AI Lab, Montreal   $\;\; ^3$Mila \& DIRO, Université de Montréal} ]

\begin{abstract}
We use matrix iteration theory to characterize acceleration in smooth games. We define the {\em spectral shape} of a family of games as the set containing all eigenvalues of the Jacobians of standard gradient dynamics in the family. Shapes restricted to the real line represent well-understood classes of problems, like minimization. Shapes spanning the complex plane capture the added numerical challenges in solving smooth games. In this framework, we describe gradient-based methods, such as extragradient, as transformations on the spectral shape. Using this perspective, we propose an optimal algorithm for bilinear games. For smooth and strongly monotone operators, we identify a continuum between convex minimization, where acceleration is possible using Polyak's momentum, and the worst case where gradient descent is optimal. Finally, going beyond first-order methods, we propose an accelerated version of consensus optimization. 
\end{abstract}

\section{Introduction}
Recent successes of multi-agent formulations in various areas of deep learning~\citep{goodfellowGenerativeAdversarialNets2014,pfauConnectingGenerativeAdversarial2016} have caused a surge of interest in the theoretical understanding of first-order methods for the solution of differentiable multi-player games~\citep{palaniappan2016stochastic,gidelVariationalInequalityPerspective2018a,balduzziMechanicsNPlayerDifferentiable2018b,meschederNumericsGANs2017a,meschederWhichTrainingMethods,mazumdarFindingLocalNash2019a}.
This exploration hinges on a key question:
\begin{center}
    \varvspace{-2.0mm}
    \emph{How fast can a first-order method be?}
    \varvspace{-2.0mm}
\end{center}
In convex minimization, \citet{nesterovMethodSolvingConvex1983,nesterovIntroductoryLecturesConvex2004} answered this question with lower bounds for the rate of convergence and an accelerated, momentum-based algorithm matching that optimal lower bound.

\newcommand{\colorone}{blue!100!purple}
\newcommand{\colortwo}{red!100!orange}
\newcommand{\colorthree}{green!100!yellow}


\begin{figure}[t]
    \centering
    \newcounter{t}
    \setcounter{t}{80}
    \newcounter{L}
    \setcounter{L}{10}
    \begin{tikzpicture}[scale=0.67]
        \begin{axis}[
            grid=major,
            axis equal image,
            yticklabel={
            $\pgfmathprintnumber{\tick}i$
            },
        	xmin=0,   xmax={\value{L} + 2},
        	ymin={-\value{L} },   ymax={\value{L}},
        	]
          \draw [domain=-180:180, draw=\colortwo, fill=\colortwo!50!white, thick, fill opacity=0.5, samples=65] plot (axis cs: {\value{L}*(1 + cos(\value{t}))/2 + (cos(\x) - sin(abs(2*\x))^2/4 )* \value{L}*(1 - cos(\value{t}))/2}, {sin(\x) * \value{L}*(1 - cos(\value{t}))});
          \node [circle, anchor=south west] (L) at (axis cs: {\value{L}/2}, 0) {${\color{red}K}$};
      \end{axis}
    \end{tikzpicture}
    \begin{minipage}[c]{.01\textwidth}
        \vspace{-5cm}
        $\overset{\varphi}{
        \longrightarrow}$
    \end{minipage}
        \setcounter{t}{80}
    \setcounter{L}{10}
    \newcounter{mu1}
    \setcounter{mu1}{1}
    \newcounter{mu2}
    \setcounter{mu2}{2}
    \newcounter{L2}
    \setcounter{L2}{6}
        \begin{tikzpicture}[scale=0.67]
                \hspace{-4mm}
        \begin{axis}[
            grid=major,
            axis equal image,
            yticklabel={
            $\pgfmathprintnumber{\tick}i$
            },
        	xmin=0,   xmax={\value{L} + 1},
        	ymin={-\value{L} + 3},   ymax={\value{L} - 3},
        	]
          \draw [domain=-180:180, thick, draw=\colorone, fill=\colorone!50!white, fill opacity=0.25, samples=65] plot (axis cs: {(\value{L} + \value{mu1})/2 + cos(\x) * (\value{L} - \value{mu1})/2}, {sin(\x) * sqrt(\value{mu1}*\value{L})});
          \draw [domain=-180:180, draw=\colortwo, fill=\colortwo!50!white, thick, fill opacity=0.45, samples=65] plot (axis cs: {
          \value{L}*(1 + cos(\value{t}))/2 + (cos(\x) - sin(abs(2*\x))^2/4 )* \value{L}*(1 - cos(\value{t}))/2
          - 1/\value{L}/3 * (\value{L}*(1 + cos(\value{t}))/2 + (cos(\x) - sin(abs(2*\x))^2/4 )* \value{L}*(1 - cos(\value{t}))/2)^2 
          + 1/\value{L}/3 * (sin(\x) * \value{L}*(1 - cos(\value{t})))^2
          }, 
          {
          sin(\x) *\value{L}*(1 - cos(\value{t}))/2
          -1/\value{L} /3 * (sin(\x) * \value{L}*(1 - cos(\value{t})))*(\value{L}*(1 + cos(\value{t}))/2 + (cos(\x) - sin(abs(2*\x))^2/4 )* \value{L}*(1 - cos(\value{t}))/2) 
          });
          \draw [domain=-180:180, thick, draw=\colorthree, fill=\colorthree!50!white, fill opacity=0.45, samples=65] plot (axis cs: {(\value{L2} + 1/4 + \value{mu2})/2 + cos(\x) * (\value{L2} + 8/10 - \value{mu2})/2}, {sin(\x) * sqrt(\value{mu2}*(\value{L2}-2 - 1/5)});
          \node [circle, anchor=south east] (L) at (axis cs: {\value{L}/2}, 0) {${\color{red}\varphi(K)}$};
      \end{axis}
    \end{tikzpicture}
    \caption{\small
    Transformation of the spectral shape {\color{red}$K$} ({\color{red}in red} from left to right) by the extragradient operator $\varphi: \lambda \mapsto \lambda(1-\eta\lambda)$. Any ellipse (e.g. in {\color{blue}blue}) that contains the transformed red shape {\color{red}$\varphi(K)$} provides a upper convergence bound using extragradient with Polyak momentum (with step-size and momentum that depends on the {\color{blue}ellipse} parameters). Any ellipse included in it (e.g. in {\color{ao}green}) provides a lower bound. See \S\ref{sec:geometric_interpretation}.
\vspace{-2ex}}\label{figure:geometric_example}
    \vspace{-1.0mm}
\end{figure}

The dynamics of numerical methods is often described by a vector field, $F$, and summarized in the spectrum of its Jacobian. 
In minimization problems, the eigenvalues of the Jacobian lie on the real line.
On strongly convex problems, the {\em condition number} (the dynamic range of eigenvalues) is at the heart of Nesterov's upper and lower bound results, characterizing the hardness of an minimization problem.

Our understanding of differentiable games is nowhere close to this point.
There, the eigenvalues of the Jacobian at the solution are distributed on the complex plane,
suggesting a richer, more complex set of dynamics \citep{meschederNumericsGANs2017a,balduzziMechanicsNPlayerDifferentiable2018b}.
Some old papers \citep{g.m.korpelevichExtragradientMethodFinding1976,tsengLinearConvergenceIterative1995} and many recent ones \citep{nemirovskiProxMethodRateConvergence2004,chenAcceleratedSchemesClass2014,palaniappan2016stochastic,meschederNumericsGANs2017a,gidelVariationalInequalityPerspective2018a,gidelNegativeMomentumImproved2018b,daskalakisTrainingGANsOptimism2017a,mokhtariUnifiedAnalysisExtragradient2019,azizian2019tight} suggest new methods and provide better upper bounds.

All of the above work relies on bounding the magnitude or the real part of the eigenvalues of submatrices of the Jacobian.
This coarse-grain approach can be oblivious to the dependence of upper and lower bounds on the exact distribution of eigenvalues on the complex plane. 
More importantly, the questions of acceleration and optimality have not been answered for smooth games.

In this paper, we take a different approach. 
We use matrix iteration theory to characterize acceleration in smooth games.
Our analysis framework revolves around the {\em spectral shape} of a family of games, defined as the set containing all eigenvalues of the Jacobians of natural gradient dynamics in the family (cf.~\S\ref{sub:problem class}).
This fine-grained analysis framework can captures the dependence of upper and lower bounds on the specific shape of the spectrum.
Critically, it allows us to establish acceleration in specific families of smooth games.

\textbf{Contributions.}
Our main contribution is a geometric interpretation of the conditioning of a game (via its \emph{spectral shape} as illustrated in Fig.~\ref{figure:geometric_example}, and discussed with more details in \cref{sec:geometric_interpretation}). 
Our result links the ``hardness'' of a game to the distribution of the eigenvalues of its Jacobian of the game at the optimum. %
Using our framework, we make the following contributions.\\
    \textbf{1.} 
    We show a reduction from bilinear games to games with \textit{real} eigenvalues, where acceleration is possible through momentum. We provide lower bounds and design an optimal algorithm for this class. \\
    \textbf{2.} Showing that acceleration persists even if there is an ``imaginary perturbation", we propose an accelerated version of extragradient (EG) for bilinear games.\\
    \textbf{3.} 
    We accelerate \textit{consensus optimization} (CO), a \textit{cheap} second-order method. %
    We combine it with momentum to achieve a nearly-accelerated rate, improving the best rate previously known for this method.

\textbf{Organisation.} 
We recall the definition of the \textit{asymptotic convergence factor} in \cref{sec:lowerbounds} and use it to show that acceleration is \textit{not possible} for the general class of smooth and strongly monotone games. In \cref{sec:acceleration_in_game} we show that bilinear games or games with a ``small imaginary perturbation" can be accelerated. Finally, in \cref{sec:non_conventional} we improve the rate of CO by using momentum.

\section{Related work}
\vspace{-2mm}
\textbf{Matrix iteration theory.} There is extensive literature on iterative methods for linear systems, due to their countless applications. An important line of work considers the design of iterative methods through the lens of approximation problems by polynomials on the complex plane. \citet{eiermannConstructionSemiterativeMethods1983} then used complex analysis tools to define, for a given compact set, its \emph{asymptotic convergence factor}: it is the optimal asymptotic convergence rate a first-order method can achieve for all linear systems with spectrum in the set. %
\citet{niethammerAnalysisOfkstepIterative1983} bring tools from summability theory to analyze multi-step iterative methods in this framework and provide optimal methods, in particular, the momentum method for ellipses. \citet{eiermannStudySemiiterativeMethods1985} continued in this direction, summarizing and improving the previous results.
Finally \citet{eiermannHybridSemiIterativeMethods1989} study how polynomial transformations of the spectrum help compute the asymptotic convergence factor and the optimal method for a given set, potentially yielding faster convergence.

\textbf{Acceleration and lower bounds.}
Lower bounds of convergence are standard in convex optimization~\citep{nesterovIntroductoryLecturesConvex2004} but are often non-asymptotic or cast in an infinite-dimensional space. \citet{arjevaniLowerUpperBounds2016,arjevaniIterationComplexityOblivious2016a} showed non-asymptotic lower bounds using a framework called $p$-SCLI close to matrix iteration theory. \citet{ibrahimLowerBoundsConditioning2019,azizian2019tight} extended this framework to multi-player games, but they consider lower and upper-bounds on the eigenvalues of the Jacobian of the game rather than their distribution in the complex plane. Two main acceleration methods in convex optimization achieve these lower bounds, Polyak's momentum \citep{polyakMethodsSpeedingConvergence1964} and Nesterov's acceleration \citet{nesterovMethodSolvingConvex1983}. The latter is the only one that has global convergence guarantees for convex functions.
Nevertheless, Polyak's momentum still plays a crucial role in the training of large scale machine learning models \citet{sutskeverImportanceInitializationMomentum2013}. 

\textbf{Acceleration for games.}
Recent work applied acceleration techniques to game optimization. \citet{gidelNegativeMomentumImproved2018b} showed that negative momentum with alternating updates converges on bilinear games, but with the same geometrical rate as EG.  \citet{chenAcceleratedSchemesClass2014} provided a  version of the mirror-prox method which improves the constant but not its rate. In the context of minimax optimization, \citet{palaniappan2016stochastic} used Catalyst~\citep{lin2015universal}, a generic acceleration method, to improve the convergence of variance-reduced algorithms for min-max problems. In the context of variational inequalities, the standard assumptions on the operator are Lipschitzness and (strong) monotonicity~\citep{tsengLinearConvergenceIterative1995,nesterovDualExtrapolationIts2003}.
\citet{nemirovskiProxMethodRateConvergence2004} provided a lower bound in $\bigO(1/t)$ on the convergence rate for smooth monotone games, which suggests that EG is nearly optimal in the strongly monotone case.  In our work, we show that acceleration is possible by substituting the smoothness and monotonicity assumptions on the operator into more precise assumptions on the \emph{eigenvalues of its Jacobian}. 

\section{Setting and notation}\label{section: games}
\vspace{-2mm}
We consider the problem of finding a stationary point $\omega^* \in \R^d$ of a vector field $F:\R^d \rightarrow \R^d$, i.e., $F(\omega^*) = 0.$, the solution of an unconstrained \emph{variational inequality} problem~\citep{harker1990finite}. A relevant special case is a $n$-player convex game, where $\omega^*$ corresponds to a Nash equilibrium \citep{vonneumannTheoryGamesEconomic1944,balduzziMechanicsNPlayerDifferentiable2018b}. 
Consider $n$ players $i = 1,\ldots, n$ who want to minimize their loss $l_i(\omega^{(i)},\omega^{(-i)})$. The notation $\cdot^{(-i)}$ means all indexes but $i$. A Nash equilibrium satisfies
\[
    (\omega^*)^{(i)} \in \argmin_{\omega^{(i)}\in\mathbb{R}^{d_i}} l_i(\omega^{(i)},(\omega^*)^{(-i)}) \quad \forall i \in \{1,\ldots,n\}.
    \varvspace{-2mm}
\] 
In this situation no player can unilaterally reduce its loss. The vector field of the game is
\[
    F(\omega) = 
    \begin{bmatrix}
    \nabla_{\omega_1} l_1^T(\omega^{(1)},\omega^{(-1)}),...,
    \nabla_{\omega_n} l_n^T(\omega^{(n)},\omega^{(-n)})
    \end{bmatrix}^T.
    \varvspace{-2mm}
\]

\subsection{First-order methods}
To study lower bounds of convergence, we need a class of algorithms. We consider the classic definition\footnote{Technically, first-order algorithms are more generally methods that have access only to first-order oracles.} of first-order methods from \cite{nemirovskyProblemComplexityMethod1983}.
\begin{definition}\label{def: first-order method}
    A \emph{first-order method} generates
    \varvspace{-1mm}
    \[
        \omega_t \in \omega_0 + \Span\{ F(\omega_0),\, \ldots,\, F(\omega_{t-1}) \}\,, \quad  t\geq 1\,.
        \varvspace{-2mm}
    \]
\end{definition}
This class is widely used in large-scale optimization, as it involves only gradient computation. For instance, Nesterov's acceleration belongs to the class of first-order methods. On the contrary, this definition does not cover Adagrad \citep{duchi2011adaptive}, that could conceptually be also considered as first-order. This is due to the diagonal re-scaling, so $\omega_{t}$ can go \textit{outside} the span of gradients. The next proposition
gives a way to easily identify first-order methods that fit our definition.

\begin{prop}[{restate=[name=]propObliviousFirstOrder}]\label{prop:oblivious_first_order}\citep{arjevaniIterationComplexityOblivious2016a}
    first-order methods can be written as
    \begin{equation} \label{eq: first-order method}
        \textstyle \omega_{t+1} =\sum_{k = 0}^t \alpha_k^{(t)} F(\omega_k) + \beta_k^{(t)} \omega_k,
    \end{equation}
    where $\sum_{k=0}^t \beta_k^{(t)} = 1$. The method is called \textit{oblivious} if the coefficients $\alpha_k^{(t)}$ and $\beta_k^{(t)}$ are known \textit{in advance}.
\end{prop}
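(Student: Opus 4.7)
The plan is to establish this as a two-sided equivalence between the span-based definition in Definition \ref{def: first-order method} and the explicit recursive form, which amounts to a short induction on $t$ together with a careful accounting of the role of the normalization $\sum_{k}\beta_k^{(t)}=1$.

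First I would handle the easy direction. Starting from any first-order method, one has by definition $\omega_{t+1} = \omega_0 + \sum_{k=0}^{t} \gamma_k^{(t)} F(\omega_k)$ for some scalars $\gamma_k^{(t)}$. This can immediately be put in the required form by choosing $\alpha_k^{(t)} = \gamma_k^{(t)}$, $\beta_0^{(t)} = 1$, and $\beta_k^{(t)} = 0$ for $k \geq 1$; the constraint $\sum_{k} \beta_k^{(t)} = 1$ is then automatic. So the content of the proposition is really the converse: that the recursive form of \eqref{eq: first-order method}, under the constraint on the $\beta_k^{(t)}$, stays inside the affine span of $\omega_0$ and the observed gradients.

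For the converse I would argue by induction on $t$. Assume that for each $k \leq t$ we can write $\omega_k = \omega_0 + v_k$ with $v_k \in \Span\{F(\omega_0),\ldots,F(\omega_{k-1})\}$ (with $v_0 = 0$), which is the base case. Substituting this into \eqref{eq: first-order method} gives
\[
\omega_{t+1} \;=\; \sum_{k=0}^{t} \alpha_k^{(t)} F(\omega_k) \;+\; \sum_{k=0}^{t} \beta_k^{(t)} \bigl(\omega_0 + v_k\bigr) \;=\; \Bigl(\sum_{k=0}^{t} \beta_k^{(t)}\Bigr)\omega_0 \;+\; \sum_{k=0}^{t} \alpha_k^{(t)} F(\omega_k) \;+\; \sum_{k=0}^{t} \beta_k^{(t)} v_k.
\]
The assumption $\sum_{k=0}^{t} \beta_k^{(t)} = 1$ is exactly what makes the coefficient of $\omega_0$ equal to one, so the right-hand side lies in $\omega_0 + \Span\{F(\omega_0),\ldots,F(\omega_t)\}$, closing the induction.

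The only real ``obstacle'' is noticing the role of the normalization $\sum_k \beta_k^{(t)} = 1$: it is the precise algebraic ingredient that lets one pass between the affine description ``$\omega_0$ plus the span of observed gradients'' and a recursive description as a linear combination of iterates and gradients. Without it, an arbitrary combination of the $\omega_k$'s would in general fail to lie in the affine subspace required by Definition \ref{def: first-order method}, which is what distinguishes genuine first-order methods from schemes such as Adagrad alluded to in the text.
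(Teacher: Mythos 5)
Your proof is correct and follows essentially the same route as the paper: the forward direction is handled by a trivial choice of coefficients (the paper simply calls it immediate), and the converse is the same induction, where recentering the iterates at $\omega_0$ and using $\sum_{k}\beta_k^{(t)}=1$ keeps $\omega_{t+1}$ in $\omega_0 + \Span\{F(\omega_0),\ldots,F(\omega_t)\}$.
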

Oblivious methods allow the knowledge of ``side information" on the function, like its smoothness constant. Most of first-order methods belong to this class, but it excludes for instance methods with adaptive step-sizes. We show how standard methods fit into this framework.

\textbf{Gradient method.} Consider the gradient method with time-dependant step-size: $\omega_{t+1} = \omega_t - \eta_t F(\omega_t)$.
This is a first-order method, where $\alpha_t^{(t)} = -\eta_t$, $\beta_t^{(t)} = 1$ and all the other coefficients set to zero.

\textbf{Momentum method.} The momentum method defines iterates as $\omega_{t+1} = \omega_t - \alpha F(\omega_t) + \beta(\omega_t - \omega_{t-1})$. It fits into the previous framework with $\alpha_t^{(t)} = -\alpha$, $\beta_t^{(t)} = 1 + \beta$, $\beta_{t-1}^{(t)} = - \beta$.%

\textbf{Extragradient method.} Though slightly trickier, the extragradient method (EG) is also encompassed by this definition. The iterates of EG are defined by $\omega_{t+1} = \omega_t - \eta F(\omega_t - \eta F(\omega_t))$ where
\begin{align*}
    & \begin{cases}
        \beta_t^{(t)} = 0,\,\,\beta_{t-1}^{(t)} = 1 \quad \text{if $t$ is odd (update)}\,,\\
        \beta_t^{(t)} = 1,\,\, \beta_{t-1}^{(t)} = 0 \quad \text{if $t$ is even (extrapolation)} \,,
    \end{cases}
    \varvspace{-1mm}
\end{align*}
and $\alpha_t^{(t)} = -\eta$ the step size.%

\medskip
The next (known) lemma shows that when $F$ is linear, first-order methods can be written using \textit{polynomials}.
\begin{lemma}[{restate=[name=]lemmaFirstOrderMethod}]\label{lemma: first-order method}\citep[e.g.][]{chihara2011introduction}
    If $F(\omega) = A\omega + b$,
    \varvspace{-1mm}
    \begin{equation}\label{eq:iterates_polynomial}
        \omega_{t} - \omega^* = p_t(A)(\omega_0 - \omega^*) \,,
        \varvspace{-1mm}
    \end{equation}
    where $\omega^*$ satisfies $A\omega^*+b = 0$ and $p_t$ is a real polynomial of degree at most $t$ such that $p_t(0) = 1$.
\end{lemma}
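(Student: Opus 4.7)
The plan is to proceed by induction on $t$, using \cref{prop:oblivious_first_order} to express each iterate in the required polynomial form. First I would observe that since $A\omega^* + b = 0$, the affine vector field rewrites as $F(\omega) = A(\omega - \omega^*)$, which is the key linearisation that will allow the polynomials to act on $\omega_0 - \omega^*$ directly.

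For the base case $t = 0$, take $p_0 \equiv 1$, which trivially satisfies $\deg p_0 \leq 0$ and $p_0(0) = 1$. For the inductive step, assume that for every $k \leq t$ there exists a real polynomial $p_k$ of degree at most $k$ with $p_k(0) = 1$ such that $\omega_k - \omega^* = p_k(A)(\omega_0 - \omega^*)$. Starting from the representation
\[
    \omega_{t+1} = \sum_{k=0}^t \alpha_k^{(t)} F(\omega_k) + \beta_k^{(t)} \omega_k
\]
given by \cref{prop:oblivious_first_order}, I would subtract $\omega^* = \left(\sum_{k=0}^t \beta_k^{(t)}\right) \omega^*$ (using the normalisation $\sum_k \beta_k^{(t)} = 1$) and substitute $F(\omega_k) = A(\omega_k - \omega^*)$ to obtain
\[
    \omega_{t+1} - \omega^* = \sum_{k=0}^t \left[\alpha_k^{(t)} A + \beta_k^{(t)} \Id\right](\omega_k - \omega^*).
\]

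Applying the induction hypothesis inside the sum then yields $\omega_{t+1} - \omega^* = p_{t+1}(A)(\omega_0 - \omega^*)$ with
\[
    p_{t+1}(X) := \sum_{k=0}^t \left[\alpha_k^{(t)} X + \beta_k^{(t)}\right] p_k(X).
\]
The degree of $p_{t+1}$ is at most $t+1$ since each summand has degree at most $k+1 \leq t+1$, and evaluating at $X=0$ gives $p_{t+1}(0) = \sum_{k=0}^t \beta_k^{(t)} p_k(0) = \sum_{k=0}^t \beta_k^{(t)} = 1$, closing the induction.

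There is no real obstacle here: the only subtlety is noticing that the constraint $\sum_{k=0}^t \beta_k^{(t)} = 1$ is precisely what guarantees both the affine-recentering step (so that $\omega_k$ can be replaced by $\omega_k - \omega^*$) and the normalisation $p_{t+1}(0) = 1$. Everything else is bookkeeping on polynomial degrees.
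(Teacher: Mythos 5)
Your proof is correct and follows essentially the same route as the paper: induction via \cref{prop:oblivious_first_order}, rewriting $F(\omega) = A(\omega - \omega^*)$, recentering with $\sum_{k}\beta_k^{(t)} = 1$, and defining $p_{t+1}(X) = \sum_{k=0}^t \bigl[\alpha_k^{(t)} X + \beta_k^{(t)}\bigr] p_k(X)$. No gaps to report.
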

We denote by $\mathcal{P}_t$ the set of real polynomials of degree at most $t$ such that $p_t(0) = 1$. Hence, the convergence of a first-order method can be analyzed through the sequence of polynomials $(p_t)_t$ it defines.

\subsection{Problem class}
\label{sub:problem class}
In the previous section, when $F$ is the linear function $F = Ax+b$, the iterates $\omega_t$ follow the relation \eqref{eq:iterates_polynomial} involving the polynomial $p_t$. Since all first-order methods can be written using polynomials~\eqref{eq: first-order method}, they follow
\begin{equation} \label{eq:convergence_rate}
    \|\omega_t-\omega^*\|_2 = \|p_t(A)(\omega_0-\omega^*)\|_2 \,.
\end{equation}
This gives the rate of convergence of the method for a specific matrix $A$. Instead, we consider a larger class of problems. It consists of a set $\mathcal{M}_K$ of matrices $A$ whose eigenvalues belong to a set $K$ on the complex plane,
\begin{equation} \label{eq:class_problem}
    \mathcal{M}_K \coloneqq \{ A \in \mathbb{R}^d :\; \Sp(A) \subset K \subset \mathbb{C}_+\},
\end{equation}
where $\Sp(A)$ is the set of eigenvalues of $A$ and $\mathbb{C}_+$ is the set of complex numbers with positive real part. Moreover, we assume that $d \geq 2$ to avoid trivial cases.

\subsection{Geometric intuition} \label{sec:geometric_interpretation}

Our paper is entirely based on the study of the support $K$ of the eigenvalues of the Jacobian of the operator $F$, denoted by $\J_F(\omega^*)$. Before detailing our theoretical results, we give a high-level explanation of our objectives. This geometric intuition comes from the fact that the standard assumptions made in the literature correspond to particular problem classes $\mathcal M_K$.

\textbf{Smooth and strongly convex minimization.}
Consider the minimization of a twice-differentiable, $L$-smooth and \mbox{$\mu$-strongly} convex function $f$, 
\[
    \mu \textbf{I} \preceq \nabla^2 f(\omega) \preceq L\textbf{I} \quad \forall \omega\in \mathbb{R}^d.
\]
There is a link between minimization problems and games, since the vector field $F$ becomes the gradient of the objective, and its Jacobian $\J_F(\omega)$ is the Hessian $\nabla^2 f(\omega)$. Thus, the class corresponding to the minimization of smooth, strongly convex functions is
\begin{equation*}
    \{F:\, \forall \omega\in\mathbb{R}^d,\,\,\Sp \J_F(\omega) \subset [\mu, L]\}\,,\;\;  0 < \mu \leq L\}.
    \varvspace{-2mm}
\end{equation*}

\textbf{Bilinear games.} Consider the following problem, 
\varvspace{-1mm}
\begin{equation*}
    \min_{x \in \R^d}\max_{y \in \R^d} x^\top A y \,.
    \varvspace{-1mm}
\end{equation*}
Its Jacobian $\J_F(\omega)$ is constant and skew-symmetric. It is a standard linear algebra result (see \cref{lemma: spectrum bilinear game}) to show that $\Sp \J_F(\omega) \in \pm [i\sigma_{\min}(A),i\sigma_{\max}(A)]$.

\textbf{Variational inequalities.}
The Lipchitz assumption
\begin{equation} \label{eq:Lipschitz}
    \|F(\omega) - F(\omega')\|_2^2 \leq L \|\omega - \omega'\|_2^2
\end{equation}
implies an upper bound on the magnitude of the eigenvalues of $\J_F(\omega^*)$. The strong monotonicity assumption 
\begin{equation} \label{eq:StrongMonotone}
(\omega-\omega')^T(F(\omega)-F(\omega')) \geq \mu\|\omega-\omega'\|_2^2
\end{equation}
implies a lower bound on the real part of the eigenvalues of $\J_F(\omega^*)$ (see \cref{lemma: strongly monotone problem class} in \S\ref{section: proof of general lemmas}) which thus belong to
\begin{equation*}
     K = \{ \lambda \in \mathbb{C}:\, 0<\mu\leq\Re \lambda, \, |\lambda| \leq L  \}.
\end{equation*}
This set is the intersection between a circle and a half-plane, as shown in Figure~\ref{figure: visual proof lb} (left).

\textbf{Fine-grained bounds.} \citet{nemirovskiProxMethodRateConvergence2004} provides a lower-bound for the class of strongly monotone and Lipschitz operators (see \S\ref{subsection: asymptotic convergence factor}) excluding the possibility of acceleration in that general setting.  It motivates the adoption of more refined assumptions on the eigenvalues of $\J_F(\omega^*)$. We consider the class of games where these eigenvalues belong to a specified set $K$. Since $\J_F(\omega^*)$ is real, its spectrum is symmetric w.r.t.~the real axis, so we assume that $K$ is too. For this class of problem, we have a simple method to compute lower and upper convergence bounds using a class of well studied shapes: ellipses. 

\begin{prop}[Ellipse method for lower and upper bound (Informal)]
\label{prop:ellipses method}
Let $K \subset \mathbb{C}_+$ be a compact set, then any ellipse symmetric w.r.t.~the real axis that includes (resp. is included in) $K$ provides an upper (resp. lower) convergence bound for the class of problem $\mathcal M_K$ using Polyak momentum with a step-size and a momentum depending on the ellipse.  
\end{prop}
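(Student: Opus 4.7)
The plan is to reduce the convergence analysis to a polynomial approximation problem on the complex plane via \cref{lemma: first-order method}. On the class $\mathcal{M}_K$, every first-order method produces iterates satisfying $\|\omega_t-\omega^*\|_2 = \|p_t(A)(\omega_0-\omega^*)\|_2$ for some $p_t\in\mathcal{P}_t$, so the worst-case asymptotic rate over $\mathcal{M}_K$ is controlled by the \emph{asymptotic convergence factor} $\rho(K) = \inf_{(p_t)_t}\limsup_{t\to\infty} \sup_{\lambda\in K}|p_t(\lambda)|^{1/t}$ introduced by \citet{eiermannConstructionSemiterativeMethods1983}. The key structural property of $\rho$ is monotonicity under set inclusion: $K_1\subset K_2$ implies $\rho(K_1)\le\rho(K_2)$. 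The proposition then follows by sandwiching $K$ between two ellipses for which both $\rho$ and the optimal polynomial sequence are explicit.

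For the upper-bound direction, let $E$ be an ellipse symmetric with respect to the real axis that contains $K$. By monotonicity, $\rho(K)\le\rho(E)$. On such an ellipse, the extremal polynomials for the min-max problem on $\mathcal{P}_t$ are shifted and scaled Chebyshev polynomials, whose three-term recurrence coincides exactly with the two-step recurrence $\omega_{t+1}=\omega_t-\alpha F(\omega_t)+\beta(\omega_t-\omega_{t-1})$ of Polyak's momentum method, provided $\alpha$ and $\beta$ are chosen as explicit functions of the center and semi-axes of $E$ \citep{niethammerAnalysisOfkstepIterative1983,eiermannStudySemiiterativeMethods1985}. Plugging these polynomials into \eqref{eq:convergence_rate} yields an upper bound on the convergence rate of Polyak's momentum on $\mathcal{M}_K$ that depends only on the parameters of $E$.

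For the lower-bound direction, let $E\subset K$ be an ellipse symmetric with respect to the real axis. Any first-order algorithm that converges on $\mathcal{M}_K$ must a fortiori converge on the smaller class $\mathcal{M}_E$, hence $\rho(K)\ge\rho(E)$. Again by the Chebyshev-extremality argument on $E$, the value $\rho(E)$ is known in closed form and is achieved (asymptotically) by the Polyak iteration tuned to $E$. This gives the advertised lower bound on the best achievable asymptotic rate over $\mathcal{M}_K$.

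The main obstacle is converting the informal statement into a rigorous asymptotic one. First, one must invoke the complex-approximation machinery of Eiermann to identify $\rho(E)$ with the reciprocal of the exterior conformal radius of $E$ (this is where the restriction $K\subset\mathbb{C}_+$ and the symmetry assumption enter, so that the Chebyshev polynomials remain real). Second, the lower-bound step requires exhibiting a matrix $A\in\mathcal{M}_E$ that actually saturates the polynomial bound; since $p_t(0)=1$, this forces one to use non-normal $2\times 2$ (or larger) blocks whose pseudo-spectrum fills $E$, which is why the standing assumption $d\ge 2$ is used. The remaining computations — extracting the explicit step-size and momentum from the geometric parameters of the ellipse — are routine and are deferred to the formal statement of the proposition and its proof in the appendix.
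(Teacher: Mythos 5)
Your proposal is correct in outline and shares the paper's skeleton: sandwich $K$ between ellipses, use monotonicity of the worst case under inclusion, and invoke exact knowledge of the optimal method and rate on ellipses. Where you differ is in how the ellipse facts are certified. The paper (\cref{thm: ellipses}, proved in \cref{section: ellipses}) never goes through Chebyshev polynomials: the algorithmic upper bound comes from the fact that the $\rho$-convergence region $S(\alpha,\beta,\rho)$ of Polyak momentum is \emph{exactly} an ellipse (\cref{lemma: momentum convergence region}, after \citet{niethammerAnalysisOfkstepIterative1983}), combined with the augmented-operator spectral-radius argument of \cref{lemma: local convergence momentum}; the matching value $\acf(E(a,b,c))=\rho(a,b,c)$ is obtained by a Green-function/Joukowsky-map computation (\cref{prop: acf for momentum convergence region 2}); and the lower bound is transferred from $\mathcal M_E\subset\mathcal M_K$ via \cref{prop: kappa lower bound}. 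Your polynomial route can be made rigorous but needs two repairs. First, translated-and-scaled Chebyshev polynomials are in general only \emph{asymptotically} extremal on complex ellipses, and the residual polynomials of constant-coefficient heavy ball are the stationary limit of the Chebyshev semi-iteration rather than literally identical to it; neither issue affects the asymptotic factor (and extremality is not even needed for the upper bound), but ``coincides exactly'' is not accurate, and for non-normal $A\in\mathcal M_K$ you must also pass from $\max_{\lambda\in E}|p_t(\lambda)|$ to $\|p_t(A)\|$ asymptotically, which the paper sidesteps by working with spectral radii. Second, the lower-bound construction does not require non-normal blocks whose pseudospectrum fills $E$: for each $t$ it suffices to place a maximizer $\lambda_t\in\argmax_{z\in E}|p_t(z)|$ in the spectrum using the \emph{normal} $2\times2$ rotation block $C(\lambda_t)$ of \cref{lemma: representation of complex numbers} (this is where $d\ge 2$ and the symmetry of $E$ enter), exactly as in the proof of \cref{prop: kappa lower bound}. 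In short, your route buys a self-contained, purely approximation-theoretic picture; the paper's buys exactness (momentum's convergence region is precisely an ellipse, with explicit $\alpha,\beta$) and a clean optimality proof via potential theory.
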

See \cref{subsection: app ellipses}, \cref{thm: ellipses} for the precise result on ellipses. The proposition extends to any shape whose optimal algorithm (resp. lower bound) is known. This proposition, illustrated in Fig.~\ref{figure:geometric_example}, heavily relies on the fact that, the optimal method for ellipses is Polyak momentum \citep{niethammerAnalysisOfkstepIterative1983}.

Any first-order method can be seen as a way to transform the set $K$. In order to illustrate that we consider Lemma~\ref{lemma: first-order method}: since a first-order method update for a linear operator $F = Ax + b$ can be written using a polynomial $p$, the eigenvalues to consider are not the ones of $A$ but the ones of $p(A)$. Thus, the set of interest is $p(K)$.

As an example, consider EG with momentum. This consists in applying the momentum method to the transformed vector field $\omega \mapsto F(\omega - \eta F(\omega))$. From a spectral point of view, this is equivalent to first transforming the shape $K$ into $\varphi(K)$ with the extragradient mapping $\varphi_\eta:\lambda \mapsto \lambda(1-\eta\lambda)$, then study the effect of momentum on $\varphi(K)$. This example of transformation is illustrated in Fig.~\ref{figure:geometric_example}, and this idea is used in \cref{subsection: extrgradient bilinear}.

\section{Asymptotic convergence factor} \label{sec:lowerbounds}

We recall known results that compute lower bounds for some classes of games using the \textit{asymptotic convergence factor} \citep{eiermannConstructionSemiterativeMethods1983,eiermannStudySemiiterativeMethods1985, nevanlinnaConvergenceIterationsLinear1993}. Then, we illustrate them on two particular classes of problems.

\subsection{Lower bounds for a class of problems}

We now show how to lower bound the worst-case rate of convergence of a \textit{specific} method over the class $\mathcal{M}_K$ \eqref{eq:class_problem}, with the worst possible initialisation $\omega_0$. We start with equation \eqref{eq:convergence_rate}, but this time we pick the worst-case over all matrices $A\in \mathcal{M}_K$, i.e.,
\varvspace{-1mm}
\[
    \max_{A\in \mathcal{M}_K} \|p_t(A)(\omega_0-\omega^*)\|_2.
    \varvspace{-1mm}
\]
Now, we can pick an arbitrary bad initialisation $\omega_0$, in particular, the one that corresponds to the largest eigenvalue of $p_t(A)$ in magnitude. This gives \varvspace{-1mm}
\begin{align}
    \exists \omega_0 : \|\omega_t-\omega^*\|_2 & \geq \max_{A\in \mathcal{M}_K} \rho\Big(p_t(A)\Big)\|\omega_0-\omega^*\|_2 \nonumber\\
    &  = \max_{\lambda \in K} |p_t(\lambda)| \|\omega_0-\omega^*\|_2 \,.
    \label{eq:lower_bound_specific_method}
    \varvspace{-1mm}
\end{align}
It remains to lower bound $\max_{\lambda \in K} |p_t(\lambda)|$ over \emph{all possible} first-order methods. This is called the \textit{asymptotic convergence factor}, presented in the next section.

\subsection{Asymptotic convergence factor}\label{subsection: asymptotic convergence factor}
Here we recall the definition of the \textit{asymptotic convergence factor} \citep{eiermannConstructionSemiterativeMethods1983}, which gives a lower bound for the rate of convergence over matrices which belong to the class $\mathcal{M}_k$ \eqref{eq:class_problem}, for all possible first-order methods. We mainly follow the definition of \citet{nevanlinnaConvergenceIterationsLinear1993} (see Rmk.~\ref{rmk: definition acf litterature} in \S\ref{section: proof of general lemmas} for details).

The simplest way to lower bound $\|\omega_t-\omega^*\|_2$ is given by minimizing \eqref{eq:lower_bound_specific_method} over all polynomials corresponding to a first-order method. By Lemma \ref{lemma: first-order method}, this class of polynomials is given by $\mathcal{P}_t$. Thus, for some $\omega_0$,
\varvspace{-1mm}
\[
    \|\omega_t-\omega^*\| \geq \min_{p_t\in \mathcal{P}_t}  \max_{\lambda \in K} |p_t(\lambda)| \cdot \|\omega_0-\omega^*\|_2 .
    \varvspace{-1mm}
\]
The \textit{asymptotic convergence factor} $\acf(K)$ for the class $K$ is given by taking the \textit{minimum average} rate of convergence over $t$ for any $t$, i.e.,
\varvspace{-1mm}
\begin{equation}\label{eq: asymptotic convergence factor}
    \acf(K) = \inf_{t>0} \min_{p_t\in \mathcal{P}_t}  \max_{\lambda \in K}  \sqrt[t]{|p_t(\lambda)|} \,.
    \varvspace{-1mm}
\end{equation}
This way, by construction, $\acf(K)$ gives a lower-bound on the \textit{worst-case} rate of convergence for the class $\mathcal{M}_K$. We formalize this statement in the proposition below.

\begin{prop}[{restate=[name=]propKappaLowerBound}]\label{prop: kappa lower bound}\citep{nevanlinnaConvergenceIterationsLinear1993}
Let $K \subset \C$ be a subset of $\C$ symmetric w.r.t.~the real axis, which does not contain $0$ and such that $\mathcal{M}_K \neq \emptyset$. Then, any oblivious first-order method (whose coefficients only depend on $K$) satisfies,
\varvspace{-1mm}
\[
    \forall t \geq 0,\,\exists A\in\mathcal{M}_K,\, \exists \omega_0 : \|\omega_t-\omega^*\|_2 \geq \acf(K)^t\|\omega_0-\omega^*\|_2.
    \varvspace{-2mm}
\]

\end{prop}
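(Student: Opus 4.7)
The plan is to reduce the statement to a scalar polynomial extremal problem via the linear iterate structure, then invoke the definition of $\acf(K)$, and finally realize the worst-case eigenvalue by a concrete matrix in $\mathcal{M}_K$.

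Since the method is oblivious, Proposition~\ref{prop:oblivious_first_order} fixes coefficients $\alpha_k^{(t)}, \beta_k^{(t)}$ in advance, depending only on the problem class $K$ and on $t$, not on the particular operator $F$. Applied to any affine field $F(\omega) = A(\omega - \omega^*)$ with $A \in \mathcal{M}_K$, Lemma~\ref{lemma: first-order method} gives $\omega_t - \omega^* = p_t(A)(\omega_0 - \omega^*)$ where $p_t \in \mathcal{P}_t$ is determined by those coefficients and, crucially, does not depend on $A$. By the definition~\eqref{eq: asymptotic convergence factor} of $\acf(K)$,
\[
\sup_{\lambda \in K} |p_t(\lambda)| \;\geq\; \min_{q \in \mathcal{P}_t} \sup_{\lambda \in K} |q(\lambda)| \;\geq\; \acf(K)^t .
\]

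I then select $\lambda_0 \in K$ with $|p_t(\lambda_0)| \geq \acf(K)^t$ and exhibit a matrix $A \in \mathcal{M}_K$ together with a unit vector $v \in \R^d$ such that $\|p_t(A)\,v\|_2 = |p_t(\lambda_0)|$. If $\lambda_0 \in \R$, take $A = \lambda_0 I_d$ and any unit $v$. If $\lambda_0 = a + i\beta$ with $\beta \neq 0$, the symmetry of $K$ gives $\bar{\lambda}_0 \in K$, and the real $2\times 2$ block $J = \left(\begin{smallmatrix} a & -\beta \\ \beta & a \end{smallmatrix}\right)$ has spectrum $\{\lambda_0, \bar{\lambda}_0\} \subset K$. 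Since for any real polynomial $q$,
\[
q(J) = \begin{pmatrix} \Re q(\lambda_0) & -\Im q(\lambda_0) \\ \Im q(\lambda_0) & \Re q(\lambda_0) \end{pmatrix}
\]
is a scaled rotation of factor $|q(\lambda_0)|$, every unit vector is mapped to a vector of norm $|q(\lambda_0)|$. To handle $d > 2$, I embed $J$ as a block of a block-diagonal matrix whose remaining blocks have eigenvalues in $K$, using the fact that $\mathcal{M}_K \neq \emptyset$ forces $K$ to admit enough conjugate pairs and/or real points to fill out the dimension.

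Setting $b = -A\omega^*$ and $\omega_0 = \omega^* + v$ then yields
\[
\|\omega_t - \omega^*\|_2 = \|p_t(A)\,v\|_2 = |p_t(\lambda_0)| \geq \acf(K)^t \|\omega_0 - \omega^*\|_2,
\]
which is the desired bound. I expect the main technical point to be the matrix realization step---specifically, verifying that any complex eigenvalue of $K$ can be realized as an eigenvalue of some $A \in \mathcal{M}_K \subset \R^{d\times d}$ while the corresponding real invariant subspace captures the full gain $|p_t(\lambda_0)|$; the rest of the argument is bookkeeping around the definitions of oblivious method, $\mathcal{P}_t$, and $\acf(K)$.
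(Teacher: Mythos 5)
Your proposal is correct and follows essentially the same route as the paper's proof: reduce to the polynomial $p_t$ via Lemma~\ref{lemma: first-order method} (using obliviousness so $p_t$ is independent of $A$), lower bound $\max_{\lambda\in K}|p_t(\lambda)|$ by $\acf(K)^t$ from the definition, and realize the extremal eigenvalue with the real $2\times 2$ rotation-scaling block (the paper's $C(\lambda)$), padded to dimension $d$ by a block-diagonal matrix with spectrum in $K$ whose existence is exactly the content of the paper's Lemma~\ref{lemma: emptiness of problem class}. The ``bookkeeping'' you defer is precisely that lemma's parity argument (odd $d$ forces $K\cap\R\neq\emptyset$; even $d$ uses conjugate-pair blocks), which your sketch correctly anticipates.
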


However, the object $\acf(K)$ may be complicated to obtain as it depends on the solution of a minimax problem \textit{over a set $K\subset{\mathbb{C}_+}$}. If the set is simple enough, we can lower-bound the asymptotic rate of convergence. We start by giving the two extreme cases: when $K$ is a segment on the real line (convex and smooth minimization) or $K$ is a disc (monotone and smooth games). 

\subsection{Extreme cases: real segments and discs}
\label{subsection: lower bound strongly monotone games}
\textbf{Smooth and strongly convex minimization.}

In the case where we are interested in lower-bounds, we can consider the restricted class of functions where $J_F(\omega)(=\nabla^2 f(\omega))$ is constant, i.e., independent of $\omega$. This corresponds to quadratic minimization, and our restricted class becomes
\varvspace{-2mm}
\[
    \mathcal{M}_K \quad \text{where}\;\; K=[\mu,L].
    \varvspace{-1mm}
\]
For this specific class, where $K$ is a segment in the real line, the solution to the subproblem associated to the \textit{asymptotic rate of convergence} \eqref{eq: asymptotic convergence factor}, i.e.,
\varvspace{-1mm}
\begin{equation}\label{eq: optimization problem polynomial real axis}
    \min_{p \in \P_t}\max_{\lambda \in [\mu, L]} |p(\lambda)|
    \varvspace{-1mm}
\end{equation}
is well-known. The optimal polynomial $p_t^*$ is a properly scaled and translated Chebyshev polynomial of the first kind of degree $t$ \citep{golubChebyshevSemiiterativeMethods1961,manteuffelTchebychevIterationNonsymmetric1977}. The rate of convergence of $p_t$ evolves with $t$, but asymptotically converges to
\varvspace{-1mm}
\begin{equation*}
    \textstyle \acf([\mu, L]) = \frac{\sqrt L - \sqrt \mu}{\sqrt L + \sqrt \mu} \,.
    \varvspace{-1mm}
\end{equation*}
This is the lower bound of \citet[Thm.~2.1.13]{nesterovIntroductoryLecturesConvex2004}, which corresponds to an accelerated linear rate. The condition number $L/\mu$ appears as a square root unlike for the rate of the plain gradient descent, which implies a huge (asymptotic) improvement. 

In this section, we have seen that when the spectrum is constrained to be on a segment in the real line, one can expect acceleration. The next section shows that this is not the case for the class of discs.

\textbf{Discs and strongly monotone vector fields}
\label{subsection: lower bound strongly montone games}
Consider a disc with a real positive center
\begin{equation*}
K = \{z \in \C:|z - c| \leq r\}, \quad\text{with } 0 < c < r.
\end{equation*}
This time again, the shape is simple enough to have an explicit solution for the optimal polynomials 
\varvspace{-1mm}
\[
    p_t^*(\lambda) = \argmin _{p_t \in \P_t} \max_{\lambda \in K} |p_t(\lambda)|.
    \varvspace{-2mm}
\]
In this case, the optimal polynomial reads $p_t^*(\omega) = (1 - \omega/c)^t$, and this corresponds to gradient descent with step-size $\eta = 1/c$. Hence, with this specific shape, gradient method is optimal \citep[\S 6.2]{eiermannStudySemiiterativeMethods1985}; \citet[Example 3.8.2]{nevanlinnaConvergenceIterationsLinear1993}. %
A direct consequence of this result is a lower bound of convergence for the class of Lipshitz, strongly monotone vector fields, i.e., vector fields $F$ that satisfies~\eqref{eq:Lipschitz}-\eqref{eq:StrongMonotone}.
For linear vector fields parameterized by the matrix $A$ as in Lemma \ref{lemma: first-order method}, this is included in the set
\begin{equation}\label{eq: strongly montone problem class}
    \mathcal{M}_K,\,\, K = \{ \lambda \in \mathbb{C}:\, 0<\mu\leq\Re \lambda, \, |\lambda| \leq L  \}.
\end{equation}
This set is the intersection between a circle and a half-plane, as shown in Figure~\ref{figure: visual proof lb} (left). Notice that the disc of center $\frac{\mu + L}{2}$ and radius $\frac{L - \mu}{2}$ actually fits in $K$, as illustrated by Fig.~\ref{figure: visual proof lb}. Since this disc in \textit{included} in $K$, a lower bound for the disc also gives a lower bound for $K$, as stated in the following corollary.


\begin{figure}
    \centering
    \setcounter{t}{80}
    \setcounter{L}{10}
    \hspace{-4mm}
    \begin{tikzpicture}[scale=0.65]
        \begin{axis}[
            grid=major,
            axis equal image,
            yticklabel={
            $\pgfmathprintnumber{\tick}i$
            },
        	xmin=0,   xmax={\value{L} + 2},
        	ymin={-\value{L} - 1},   ymax={\value{L} + 1},
        	]
           \draw [domain=-\value{t}:\value{t}, draw=yellow, thick, fill=yellow!50!white, fill opacity=0.5, samples=65] plot (axis cs: {\value{L}*cos(\x)}, {\value{L}*sin(\x)});
           \draw [draw=yellow, thick](axis cs: {\value{L}*cos(\value{t})}, {\value{L}*sin(\value{t})}) -- (axis cs: {\value{L}*cos(\value{t})}, {\value{L}*sin(-\value{t})});
           \draw [domain=-180:180, draw=red, fill=red!50!white, thick, fill opacity=0.5, samples=65] plot (axis cs: {\value{L}*(1 + cos(\value{t}))/2 + cos(\x) * \value{L}*(1 - cos(\value{t}))/2}, {sin(\x) * \value{L}*(1 - cos(\value{t}))/2});
           \node[anchor=south east] (mu) at (axis cs: {\value{L} * cos(\value{t})}, 0) {$\mu$};
           \draw [fill=black] (axis cs: {\value{L} * cos(\value{t})}, 0) circle (1pt);
           \node [circle, anchor=south west] (L) at (axis cs: {\value{L}}, 0) {$L$};
           \draw [fill=black] (axis cs: {\value{L}}, 0) circle (1pt);
       \end{axis}
    \end{tikzpicture}
    \hspace{-3mm}
    \newcounter{mu}
    \setcounter{mu}{2}
    \setcounter{L}{28}
    \begin{tikzpicture}[scale=0.65]
        \begin{axis}[
            grid=major,
            axis equal image,
            yticklabel={
            $\pgfmathprintnumber{\tick}i$
            },
        	xmin=0,   xmax={\value{L} + 2},
        	ymin={-\value{L}/2},   ymax={\value{L}/2},
        	]
           \draw [domain=-180:180, thick, draw=blue, fill=blue!50!white, fill opacity=0.5, samples=65] plot (axis cs: {(\value{L} + \value{mu})/2 + cos(\x) * (\value{L} - \value{mu})/2}, {sin(\x) * sqrt(\value{mu}*\value{L})});
           \draw [domain=0:1, thick, draw=blue!80!white] plot (axis cs:{(\x * \value{mu} + (1 - \x) * \value{L}}, 0);
           \draw [domain=-1:1, thick, draw=blue!80!white] plot (axis cs:{0.5 * (\value{mu} + \value{L})},{\x * sqrt(\value{mu}*\value{L})});
            \node[anchor=south east] (mu) at (axis cs: {\value{mu}}, 0) {$\mu$};
           \draw [fill=black] (axis cs: {\value{mu}}, 0) circle (1pt);
           \node [anchor=south west] (L) at (axis cs: {\value{L}}, 0) {$L$};
           \draw [fill=black] (axis cs: {\value{L}}, 0) circle (1pt);
           \node [anchor=west] (epsilon) at (axis cs:{0.5 * (\value{mu} + \value{L})},{0.5 * sqrt(\value{mu}*\value{L})}) {\textcolor{blue}{$\epsilon$}};
        \end{axis}
    \end{tikzpicture}
    \hspace{-4mm}
    \caption{\small
    \textbf{Left:} Illustration of the proof of \cref{cor: lb strongly monotone}. The yellow set correspond to $K$, the set of strongly monotone problems while the red disc is the disc of center $\frac{1}{2}(\mu + L)$ and radius $\frac{1}{2}(L- \mu)$ which fits inside.
    \textbf{Right:} Illustration of $K_\epsilon$ of \cref{prop: perturbed acceleration} with $\epsilon = \sqrt{\mu L}.$
    \varvspace{-2ex}
    }\label{figure: visual proof lb}
\end{figure}

\begin{cor}\label{cor: lb strongly monotone}
Let $K$ be defined in \eqref{eq: strongly montone problem class}. Then,
\varvspace{-1mm}
\begin{equation*}
    \textstyle \acf(K) > \frac{L - \mu}{L + \mu} = 1 - \frac{2\mu}{L + \mu}\,.
    \varvspace{-2mm}
\end{equation*}
\end{cor}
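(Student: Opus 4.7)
The plan is to exploit the inscribed disc observation highlighted in Figure~\ref{figure: visual proof lb}: let $D$ denote the closed disc of center $c = (L+\mu)/2$ and radius $r = (L-\mu)/2$. Since $D \subset K$, monotonicity of $\acf$ in the underlying set, combined with the closed-form $\acf(D) = r/c$ recalled in \cref{subsection: lower bound strongly monotone games}, immediately yields the non-strict bound $\acf(K) \geq (L-\mu)/(L+\mu)$; promoting this to a strict inequality is the one extra step needed.

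The containment is a direct computation. Parametrizing $\lambda \in D$ as $\lambda = c + r e^{i\theta}$, one has $\Re\lambda = c + r\cos\theta \geq c - r = \mu$ and $|\lambda|^2 = c^2 + 2 c r \cos\theta + r^2 \leq (c+r)^2 = L^2$, so $\lambda \in K$. The monotonicity is built into \eqref{eq: asymptotic convergence factor}: for every $p_t \in \P_t$, $\max_{\lambda \in D}|p_t(\lambda)| \leq \max_{\lambda \in K}|p_t(\lambda)|$, and this survives $\min_{p_t}$, $t$-th roots, and $\inf_t$. Recalling from the disc discussion that the minimax over $D$ is attained by $p_t^*(\lambda) = (1-\lambda/c)^t$ with value $(r/c)^t$, we obtain $\acf(K) \geq \acf(D) = (L-\mu)/(L+\mu)$.

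The main obstacle is the strict inequality. I would argue by contradiction: if $\min_{p_t \in \P_t}\max_{\lambda \in K}|p_t(\lambda)| = (r/c)^t$ for some $t$, then the $K$-minimizer would also attain the disc optimum, and by uniqueness of the Chebyshev-type solution on the disc it would have to coincide with $p_t^*(\lambda) = (1 - \lambda/c)^t$. But picking $\lambda_0 = L e^{i\theta_0} \in K \setminus D$ with $\theta_0 \neq 0$, one gets $|p_t^*(\lambda_0)|^2 = (1 - 2(L/c)\cos\theta_0 + (L/c)^2)^t > (1 - L/c)^{2t} = (r/c)^{2t}$, contradicting the assumption. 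Hence $\sqrt[t]{\min_{p_t}\max_{\lambda \in K}|p_t(\lambda)|} > r/c$ for every $t$. Turning this per-$t$ strictness into a strict inequality for $\inf_t$ is the delicate final step; the cleanest route is to invoke the potential-theoretic characterization of $\acf$ via the logarithmic capacity of $\C \setminus K$ used in \citet{nevanlinnaConvergenceIterationsLinear1993}, which is strictly monotone under strict containment of sufficiently regular compact sets—a regime $D \subsetneq K$ comfortably satisfies.
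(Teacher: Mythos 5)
Your proposal is correct, and for everything the paper actually proves it follows the same route: the paper offers no separate proof of \cref{cor: lb strongly monotone} beyond the inscribed-disc observation — the disc $D$ of center $c=\frac{L+\mu}{2}$ and radius $r=\frac{L-\mu}{2}$ lies in $K$, monotonicity of $\max_{\lambda\in\cdot}|p_t(\lambda)|$ under inclusion passes through the min, the $t$-th root and the infimum, and $\acf(D)=r/c$ by \cref{lemma: optimality gradient method} — which is exactly your first two paragraphs. Where you go beyond the paper is the strict inequality: the paper's argument only delivers $\acf(K)\ge\frac{L-\mu}{L+\mu}$ (and its subsequent use of the corollary needs no more), so the strictness is left implicit there. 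Your per-$t$ contradiction (uniqueness of the minimizer on the disc plus evaluating $(1-\lambda/c)^t$ at a point of $K\setminus D$) is valid, and you are right that per-$t$ strictness alone does not survive $\inf_t$. For the finishing step, the invariant you want is not the logarithmic capacity itself but the Green function at $0$, i.e.\ the characterization $\acf(\cdot)=e^{-g(0)}$ that the paper records in \cref{thm: link asf green function}: writing $g_D$ and $g_K$ for the Green functions of the unbounded components of the complements of $D$ and $K$ (both complements are connected since $D$ and $K$ are convex, and $0$ lies in them), the difference $g_D-g_K$ is harmonic on $\C\setminus K$ (the logarithmic singularities at infinity cancel), nonnegative there by the minimum principle (it vanishes asymptotically up to $\log\frac{\mathrm{cap}(K)}{\mathrm{cap}(D)}\ge 0$ at infinity and $g_K\to 0$ on $\partial K$ while $g_D\ge 0$), and not identically zero because $g_D(z)=\log\frac{|z-c|}{r}>0$ at boundary points of $K$ strictly exterior to $D$, such as $\mu+i\sqrt{L^2-\mu^2}$; the strong maximum principle then gives $g_K(0)<g_D(0)=\log\frac{c}{r}$, hence $\acf(K)>\frac{r}{c}=\frac{L-\mu}{L+\mu}$. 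Strict monotonicity of capacity under strict containment is neither quite the right statement nor by itself sufficient, so replace that sentence by the Green-function comparison above; with that substitution your argument is complete and in fact supplies a justification of the strict inequality that the paper itself omits.
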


The rate of \cref{cor: lb strongly monotone} is already achieved by first-order methods, without momentum or acceleration, such as EG. Thus, acceleration is \textit{not possible} for the general class of smooth, strongly monotone games.

\section{Acceleration in games}
\label{sec:acceleration_in_game}

We present our contributions in this section. The previous section highlights a big contrast between optimization and games. In the former, acceleration is possible, but this does not generalize for the latter. Here, we explore acceleration via a sharp analysis of intermediate cases, like imaginary segments (bilinear games) or thin ellipses (perturbed acceleration), via lower and upper bounds. Since we use spectral arguments, the convergence guarantees of our algorithms are local, but lower bounds remain valid globally.

\subsection{Local convergence of optimization methods for nonlinear vector fields}

Before presenting our result, we recall the classical local convergence theorem from \citet{polyakMethodsSpeedingConvergence1964}. In this section, we are interested in finding the fixed point $\omega^*$ of a vector field $V$, i.e, $V(\omega^*) = \omega^*$. $V$ here plays the role of an iterative optimization methods and defines iterates according to the fixed-point iteration
\varvspace{-1mm}
\begin{equation}
    \omega_{t+1} = V(\omega_t). \label{eq:power method}
    \varvspace{-2mm}
\end{equation}

Analysing the properties of the vector field $V$ is usually challenging, as $V$ can be any nonlinear function. However, under mild assumption, we can simplify the analysis by considering the linearization
    $\textstyle V(\omega) \approx V(\omega^*) + \mathbf{J}_V(\omega^*)(\omega - \omega^*)$,
where $\mathbf{J}_V(\omega)$ is the Jacobian of $V$ evaluated at $\omega^*$. 
The next theorem shows we can deduce the rate of convergence of \eqref{eq:power method} using the spectral radius of $\mathbf{J}_V(\omega^*)$, denoted by $\rho(\mathbf{J}_V(\omega^*))$.
\begin{thm}[\citet{polyakIntroductionOptimization1987a}]\label{thm: local convergence}
Let $V: \R^d \longrightarrow \R^d$ be continuously differentiable and let $\omega^*$ one of its fixed-points. 
Assume that there exists $\rho^* > 0$ such that,
\begin{equation*}
    \rho(\mathbf{J}_V(\omega^*)) \leq \rho^* < 1.
\end{equation*}
For $\omega_0$ close to $\omega^*$, \eqref{eq:power method} converges linearly to $\omega^*$ at a rate $\bigO((\rho^*+\epsilon)^t)$. If $V$ is linear, then $\epsilon = 0$.
\end{thm}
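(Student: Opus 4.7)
The plan is to linearize $V$ around the fixed point and reduce the iteration to a contraction in a carefully chosen norm. First I would write the Taylor expansion $V(\omega) = \omega^* + \mathbf{J}_V(\omega^*)(\omega - \omega^*) + r(\omega)$, where continuous differentiability gives $\|r(\omega)\| = o(\|\omega - \omega^*\|)$ as $\omega \to \omega^*$. Setting $e_t \coloneqq \omega_t - \omega^*$, the iteration $\omega_{t+1} = V(\omega_t)$ becomes $e_{t+1} = \mathbf{J}_V(\omega^*)\,e_t + r(\omega_t)$, so the task reduces to bounding a matrix–vector product plus a small nonlinear perturbation.

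Next I would convert the spectral radius assumption into an operator norm bound. Fix $\epsilon > 0$; by the standard lemma (equivalent to Gelfand's formula, and usually obtained by rescaling a Jordan normal form) there exists a norm $\|\cdot\|_*$ on $\R^d$, equivalent to the Euclidean norm, such that $\|\mathbf{J}_V(\omega^*)\|_* \leq \rho(\mathbf{J}_V(\omega^*)) + \epsilon/2 \leq \rho^* + \epsilon/2$. By continuity of $\mathbf{J}_V$ at $\omega^*$, the remainder satisfies $\|r(\omega)\|_* \leq (\epsilon/2)\|\omega - \omega^*\|_*$ on a sufficiently small ball $B$ around $\omega^*$. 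Combining these two estimates gives
\[
    \|e_{t+1}\|_* \leq (\rho^* + \epsilon)\,\|e_t\|_*
\]
whenever $\omega_t \in B$. Choosing $\omega_0$ close enough to $\omega^*$ keeps every iterate inside $B$ by induction, so $\|e_t\|_* \leq (\rho^* + \epsilon)^t\|e_0\|_*$, and norm equivalence converts this into the desired Euclidean bound $\bigO((\rho^*+\epsilon)^t)$.

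For the linear case, the remainder $r$ vanishes identically and the iteration becomes $e_t = \mathbf{J}_V(\omega^*)^t e_0$. There is no nonlinear error term to absorb, so Gelfand's formula $\|\mathbf{J}_V(\omega^*)^t\|^{1/t} \to \rho(\mathbf{J}_V(\omega^*)) \leq \rho^*$ applies directly and no slack $\epsilon > 0$ is needed in the adapted norm (polynomial factors from nontrivial Jordan blocks at the spectral radius, if any, are absorbed into the hidden constant of $\bigO$). This explains the sharpening $\epsilon = 0$ in the statement.

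The main obstacle is the construction of the adapted norm $\|\cdot\|_*$: it relies on the Jordan decomposition, and to obtain a genuinely \emph{real} norm one must pair conjugate eigenvalues and their blocks when $\mathbf{J}_V(\omega^*)$ has complex spectrum, which is exactly the regime of interest for games. Once this norm is in hand, everything else is a routine Banach-style contraction argument localized to the ball $B$.
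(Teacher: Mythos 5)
The paper never proves this theorem: it is imported as a known result from Polyak's book, and the appendix only invokes it (e.g.\ to obtain Lem.~\ref{lemma: local convergence momentum} by system augmentation), so there is no in-paper proof to compare against. Your argument is the standard proof of this classical statement and it is sound for the main claim: write $V(\omega)=\omega^*+\mathbf{J}_V(\omega^*)(\omega-\omega^*)+r(\omega)$ using that $\omega^*$ is a fixed point, pass to an $\epsilon$-adapted norm with $\|\mathbf{J}_V(\omega^*)\|_*\le\rho^*+\epsilon/2$ (Jordan-form rescaling, with conjugate blocks paired to keep the norm real), control $r$ by continuity of the Jacobian (mean value inequality) on a small ball, close the induction by contraction with factor $\rho^*+\epsilon<1$, and convert back to the Euclidean norm by equivalence of norms. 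This is exactly the textbook route.

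One inaccuracy worth fixing is your treatment of the linear case: you assert that polynomial factors from nontrivial Jordan blocks at modulus $\rho^*$ ``are absorbed into the hidden constant of $\bigO$''. They are not: if $\rho(\mathbf{J}_V(\omega^*))=\rho^*$ with a defective peripheral eigenvalue, $\|\mathbf{J}_V(\omega^*)^t e_0\|$ can grow like $t^k\rho^{*t}$, which is not $\bigO(\rho^{*t})$ for any constant. The claim $\epsilon=0$ is exact when the eigenvalues of modulus $\rho(\mathbf{J}_V(\omega^*))$ are semisimple, or when $\rho(\mathbf{J}_V(\omega^*))<\rho^*$ strictly (then $t^k\rho(\mathbf{J}_V(\omega^*))^t=\bigO(\rho^{*t})$ indeed), or if the linear statement is read asymptotically as $\limsup_t\|\omega_t-\omega^*\|^{1/t}\le\rho^*$. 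This imprecision is inherited from the loose phrasing of the cited theorem itself and does not affect the nonlinear part of your argument, nor any use the paper makes of the result, but the justification you give for $\epsilon=0$ should be stated under one of these qualifications rather than by absorbing $t^k$ into a constant.
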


Recent works such as \citet{meschederNumericsGANs2017a,gidelNegativeMomentumImproved2018b, daskalakisLimitPointsOptimistic2018} used this connection to study game optimization methods.

\cref{thm: local convergence} can be applied directly on methods which use only the last iterate, such as gradient or EG. For methods that do not fall into this category, such as momentum, a small adjustment is required, called \textit{system augmentation}.

Consider that $V:\R^d \times \R^d \rightarrow \R^d$ follows the recursion
\begin{equation}
    \omega_{t+1} = V(\omega_t, \omega_{t-1}). \label{eq:power method momentum}
\end{equation}
Instead we consider its \emph{augmented operator}
\[
    \begin{bmatrix}
        \omega_{t}\\
        \omega_{t+1}
    \end{bmatrix}
    = V_{\text{augm}}(\omega_t,\omega_{t-1}) = 
    \begin{bmatrix}
        \omega_t\\
        V(\omega_t, \omega_{t-1})
    \end{bmatrix},
\]
to which we can now apply the previous theorem. This technique is summarized in the following lemma.
\begin{lemma}[{restate=[name=]lemmaAugmentedOperatorSpectralRate}]
\label{lemma: local convergence momentum}
Let $V: \R^d \times \R^d \longrightarrow \R^d$ be continuously differentiable and let $\omega^*$ satisfies
$V(\omega^*,\omega^*) = \omega^*\,.$
Assume there exists $\rho^* > 0$ such that,
$
    \rho(\mathbf{J}_{V_{\text{augm}}}(\omega^*)) \leq \rho^* < 1
$.
If $\omega_0$ and $\omega_1$ are close to $\omega^*$, then \eqref{eq:power method} converges linearly to $\omega^*$ at rate $\big(\rho^*+\epsilon\big)^t$. If $V$ is linear, then $\epsilon = 0$.
\end{lemma}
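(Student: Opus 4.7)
The plan is to reduce the two-step recursion \eqref{eq:power method momentum} to a one-step recursion on a doubled state space $\R^{2d}$ and then invoke \cref{thm: local convergence} (Polyak's local convergence result) applied to $V_{\text{augm}}$. Concretely, I would introduce the augmented state $z_t = (\omega_t, \omega_{t-1}) \in \R^{2d}$ so that the recursion $\omega_{t+1} = V(\omega_t, \omega_{t-1})$ becomes the one-step fixed-point iteration $z_{t+1} = V_{\text{augm}}(z_t)$, matching exactly the setting of \cref{thm: local convergence} but with dimension $2d$ rather than $d$.

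Next, I would verify the hypotheses of \cref{thm: local convergence} for $V_{\text{augm}}$ at the candidate fixed point $z^* := (\omega^*, \omega^*)$. Continuous differentiability of $V_{\text{augm}}$ is inherited from $V$ since its components are either the identity or $V$ itself. The point $z^*$ is a fixed point because $V_{\text{augm}}(\omega^*, \omega^*) = (\omega^*, V(\omega^*, \omega^*)) = (\omega^*, \omega^*)$ by assumption. The spectral radius hypothesis $\rho(\J_{V_{\text{augm}}}(\omega^*)) \le \rho^* < 1$ is exactly what is assumed. Applying the theorem then yields, for $z_0$ close enough to $z^*$, linear convergence $\|z_t - z^*\| = \bigO((\rho^* + \epsilon)^t)$.

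To translate this back, note that the condition ``$z_0$ close to $z^*$'' means precisely ``$\omega_0$ and $\omega_1$ close to $\omega^*$'', which is the initialization condition stated in the lemma. The bound $\|\omega_t - \omega^*\|_2 \le \|z_t - z^*\|_2$, which is immediate because $\omega_t$ is one of the two blocks of $z_t$, transfers the rate from $z_t$ to $\omega_t$. For the final claim, observe that if $V$ is linear then $V_{\text{augm}}$ is also linear (its two blocks are the projection $(x,y)\mapsto x$ and the linear map $V$), so the ``linear case'' addendum of \cref{thm: local convergence} applies directly and gives $\epsilon = 0$.

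The proof is essentially a bookkeeping reduction, so there is no serious obstacle; the only subtle point is to confirm that the ``closeness'' and the ``linear vs.\ nonlinear'' distinction in \cref{thm: local convergence} are preserved exactly under the augmentation, both of which follow from the block structure of $V_{\text{augm}}$ and its Jacobian. No further spectral computation is needed here because \cref{thm: local convergence} is invoked as a black box on the doubled operator.
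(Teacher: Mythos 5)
Your proposal is correct and matches the paper's argument, which simply states that the lemma is a direct application of \cref{thm: local convergence} to the augmented operator $V_{\text{augm}}:\R^d\times\R^d \rightarrow \R^d\times\R^d$; you have merely spelled out the routine verifications (fixed point, differentiability, spectral radius, block projection, preservation of linearity) that the paper leaves implicit.
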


\subsection{Acceleration for bilinear games}
For convex minimization, adding momentum results in an accelerated rate for strongly convex functions we have discuss above. For instance, if $ \Sp \nabla F(\omega^*) \subset [\mu, L]$, the Polyak's Heavy-ball method (see the full statement in\cref{subsection: app bilinear games}), \citet[Thm.~9]{polyakMethodsSpeedingConvergence1964}
\begin{align}
    \omega_{t+1} & = V^{\text{Polyak}}(\omega_{t}, \omega_{t-1}) \nonumber \\
     & := \omega_t - \alpha F(\omega_t) + \beta(\omega_t - \omega_{t-1}) \label{eq:polyak_hb}
\end{align}
converges (locally) with the accelerated rate
\begin{equation*}
     \textstyle \rho(\J_{V^{\text{Polyak}}}(\omega^*, \omega^*)) \leq \frac{\sqrt L - \sqrt \mu}{\sqrt L + \sqrt \mu}\,.
\end{equation*}

Another example are bilinear games. Most known methods converge at a rate of $(1 - c\sigma_{min}(A)^2/\sigma_{\max}(A)^2)^t$ for some $c > 0$  \citep{daskalakisTrainingGANsOptimism2017a,meschederNumericsGANs2017a, gidelVariationalInequalityPerspective2018a, gidelNegativeMomentumImproved2018b, liangInteractionMattersNote2018,abernethyLastiterateConvergenceRates2019a}. Using results from \citet{eiermannHybridSemiIterativeMethods1989}, we show that this rate is suboptimal.

For bilinear games, the eigenvalues of the Jacobian $\J_F$ are  purely imaginary (see \cref{lemma: spectrum bilinear game} in\cref{subsection: app bilinear games}), i.e.,
\begin{equation*}
    K = [i\sigma_{\min}(A), i\sigma_{\max}(A)] \cup [-i\sigma_{\min}(A), -i\sigma_{\max}(A)].
\end{equation*}
A method that follows strictly the vector field $F$ does not converge, as its flow is composed by only concentric circles, thus leading to oscillations. This problem is avoided if we transform the vector field into another one with better properties. For example, the transformation 
\begin{equation}\label{eq:real_transform}
    F^{\text{real}}(\omega) = \tfrac{1}{\eta}(F(\omega  - \eta F(\omega)) - F(\omega))
\end{equation}
can be seen as a finite-difference approximation of $\nabla \left(\tfrac{1}{2}\|F\|_2^2\right)$. It is easier to find the equilibrium of $V$ since the eigenvalues of $\J_V(\omega) =  - \J_F^2(\omega)$ are located on a real segment. Thus, we can use standard minimization methods like the Polyak Heavy-Ball method.

\begin{prop}[{restate=[name=]propBilinearOpt}]\label{prop: bilinear opt}
Let $F$ be a vector field such that $\Sp \nabla F(\omega^*) \subset [ia, ib] \cup [-ia, -ib]$, for $0<a<b$. Setting $\sqrt{\alpha}= \frac{2}{a+b}, \sqrt{\beta} = \frac{b -a}{b+a}$, the Polyak Heavy-Ball method \eqref{eq:polyak_hb} on the transformation \eqref{eq:real_transform}, i.e.,
    \varvspace{-1mm}
\begin{align*}
    \omega_{t+1} 
    = \omega_t - \alpha F^{\text{real}}(\omega_t) + \beta(\omega_t - \omega_{t-1})\,.
    \varvspace{-1mm}
\end{align*}
converges locally at a linear rate $O\big((1 - \frac{2a}{a +b})^t\big)$.
\end{prop}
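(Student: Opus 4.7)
The plan is to reduce the proposition to a standard Polyak heavy-ball analysis on a ``virtual'' minimization problem obtained by linearising $F^{\text{real}}$ at $\omega^*$. Since the Polyak step sizes already suggest $\mu = a^2$, $L = b^2$, the natural strategy is to prove that $\J_{F^{\text{real}}}(\omega^*)$ has real spectrum in $[a^2,b^2]$ and then invoke the classical momentum bound via the augmented-operator lemma (\cref{lemma: local convergence momentum}).

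First, I would compute the Jacobian of $F^{\text{real}}$ at $\omega^*$. Writing $J \coloneqq \J_F(\omega^*)$ and using $F(\omega^*)=0$, a first-order Taylor expansion around $\omega^*$ gives, for small $h$,
\[
F(\omega^*+h) = Jh + o(\|h\|), \qquad F(\omega^*+h-\eta F(\omega^*+h)) = J(h-\eta J h) + o(\|h\|).
\]
Substituting into the definition of $F^{\text{real}}$ and dividing by $\eta$ yields
\[
F^{\text{real}}(\omega^*+h) = -J^2 h + o(\|h\|),
\]
so $\J_{F^{\text{real}}}(\omega^*) = -J^2$ independently of $\eta$. In particular $\omega^*$ is still a zero of $F^{\text{real}}$.

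Second, I would locate the spectrum of $-J^2$. By assumption, $\Sp(J) \subset [ia,ib]\cup[-ia,-ib]$; writing a generic eigenvalue as $\pm i\mu$ with $\mu\in[a,b]$, one has $(\pm i\mu)^2 = -\mu^2$, hence $\Sp(-J^2) \subset [a^2,b^2]$. Thus $F^{\text{real}}$ behaves infinitesimally like the gradient of a quadratic whose Hessian has condition number $L/\mu = b^2/a^2$.

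Third, I would plug this into the standard analysis of Polyak's heavy-ball for the problem class $\mathcal{M}_{[a^2,b^2]}$. The chosen parameters $\sqrt{\alpha}=2/(a+b)$ and $\sqrt{\beta}=(b-a)/(b+a)$ are exactly the optimal momentum parameters for $\mu=a^2$, $L=b^2$ (since $\sqrt{\alpha}=2/(\sqrt{\mu}+\sqrt{L})$ and $\sqrt{\beta}=(\sqrt{L}-\sqrt{\mu})/(\sqrt{L}+\sqrt{\mu})$). The classical spectral computation for the augmented momentum operator (eigenvalue-by-eigenvalue via the $2\times 2$ companion blocks associated to each eigenvalue of $-J^2$) gives
\[
\rho\bigl(\J_{V^{\text{Polyak}}_{\text{augm}}}(\omega^*,\omega^*)\bigr) \le \sqrt{\beta} = \frac{b-a}{b+a} = 1-\frac{2a}{a+b}.
\]
Fourth, applying \cref{lemma: local convergence momentum} to the augmented operator of the Polyak iteration on $F^{\text{real}}$ converts this spectral-radius bound into local linear convergence at rate $\mathcal{O}\bigl((\tfrac{b-a}{b+a}+\epsilon)^t\bigr)$, which is the claimed rate. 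The main (and really only) obstacle is the linearisation step: one must verify that the $\mathcal{O}(\eta\|h\|^2)$ and $o(\|h\|)$ terms in expanding $F^{\text{real}}$ are controlled uniformly so that \cref{lemma: local convergence momentum} applies, but since $F$ is continuously differentiable and $F^{\text{real}}$ inherits $C^1$-regularity from $F$, this is routine; the $\epsilon$ absorbs exactly these higher-order errors and vanishes in the linear case.
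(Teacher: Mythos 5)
Your proposal is correct and follows essentially the same route as the paper: linearise $F^{\text{real}}$ at $\omega^*$ to get $\J_{F^{\text{real}}}(\omega^*)=-\J_F(\omega^*)^2$, map the imaginary spectrum to $[a^2,b^2]$, and apply the Polyak heavy-ball spectral bound with $\mu=a^2$, $L=b^2$ together with the augmented-operator local convergence lemma. The only cosmetic difference is that you re-derive the momentum spectral-radius bound via the $2\times 2$ companion blocks, whereas the paper simply invokes Polyak's theorem (Thm.~9), and it uses the spectral mapping theorem in place of your explicit eigenvalue computation.
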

Using results from \citet{eiermannHybridSemiIterativeMethods1989}, we show that this method is optimal. Indeed, for this set, we can compute explicitly $\acf(K)$ from \eqref{eq: asymptotic convergence factor}, the lower bound for the local convergence factor.
\begin{prop}[{restate=[name=]propKappaBilinear}]\label{prop: kappa bilinear}
    Let $K = [ia, ib] \cup [-ia, -ib]$ for $0 < a < b$. Then,
    $\acf(K) = \sqrt{\frac{b - a }{b + a}}.$
\end{prop}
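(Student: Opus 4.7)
The plan is to reduce the problem to the classical Chebyshev minimax on the real segment $[a^2,b^2]$ via the two-to-one polynomial map $\varphi:\lambda \mapsto -\lambda^2$, which sends $K = [ia,ib] \cup [-ib,-ia]$ onto $[a^2,b^2]$. This is the sort of "polynomial preconditioning" studied by \citet{eiermannHybridSemiIterativeMethods1989}, and it converts an imaginary-spectrum problem into a strongly-convex-like real one, where the asymptotic minimax rate $\bigl(\tfrac{b-a}{b+a}\bigr)^s$ on $[a^2,b^2]$ is classical.

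For the upper bound $\acf(K) \leq \sqrt{(b-a)/(b+a)}$, I would exhibit an explicit polynomial. Let $T_s^*$ be the shifted/rescaled Chebyshev polynomial of degree $s$ on $[a^2,b^2]$ normalized so that $T_s^*(0) = 1$; it is classical that $\max_{\mu \in [a^2,b^2]} |T_s^*(\mu)| \leq 2 \bigl(\tfrac{b-a}{b+a}\bigr)^s / (1 + \bigl(\tfrac{b-a}{b+a}\bigr)^{2s})$. Define $p_{2s}(\lambda) := T_s^*(-\lambda^2) \in \mathcal{P}_{2s}$. For $\lambda = \pm i y$ with $y \in [a,b]$, one has $-\lambda^2 = y^2 \in [a^2,b^2]$, so $\max_{\lambda \in K} |p_{2s}(\lambda)| \leq 2\bigl(\tfrac{b-a}{b+a}\bigr)^s (1+o(1))$. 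Taking $t = 2s$ and the $t$-th root then letting $t \to \infty$ yields the claimed upper bound.

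For the matching lower bound, the key step is an even/odd decomposition. Any $p_t \in \mathcal{P}_t$ splits uniquely as $p_t(\lambda) = A(\lambda^2) + \lambda\, B(\lambda^2)$ for real polynomials $A,B$ of degrees at most $\lfloor t/2 \rfloor$ and $\lfloor (t-1)/2 \rfloor$ respectively, and $p_t(0) = A(0) = 1$, so $A \in \mathcal{P}_{\lfloor t/2 \rfloor}$. Evaluating at $\lambda = iy$ gives $p_t(iy) = A(-y^2) + iy\, B(-y^2)$, so
\begin{equation*}
|p_t(iy)|^2 = A(-y^2)^2 + y^2 B(-y^2)^2 \geq A(-y^2)^2.
\end{equation*}
Setting $\mu = -y^2 \in [-b^2,-a^2]$ and using the reflection $\mu \mapsto -\mu$ to bring this to the interval $[a^2,b^2]$ (absorbing signs into $A$, which stays in $\mathcal{P}_{\lfloor t/2 \rfloor}$), we obtain
\begin{equation*}
\max_{\lambda \in K} |p_t(\lambda)| \geq \min_{\tilde{A} \in \mathcal{P}_{\lfloor t/2 \rfloor}} \max_{\mu \in [a^2,b^2]} |\tilde{A}(\mu)|,
\end{equation*}
and the right-hand side is the same Chebyshev minimax as above, with asymptotic rate $\bigl(\tfrac{b-a}{b+a}\bigr)^{\lfloor t/2 \rfloor}$. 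Taking the $t$-th root and the infimum over $t$ gives $\acf(K) \geq \sqrt{(b-a)/(b+a)}$, matching the upper bound.

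The main subtlety, and likely the place to be careful in a full write-up, is the parity issue in $\lfloor t/2 \rfloor$: for odd $t$ one loses a factor, but since $\acf$ is defined as an infimum over $t$, only the even subsequence $t=2s$ matters, and the parity disappears in the limit. A secondary technical point is justifying that the $\inf_{t>0}$ in the definition of $\acf(K)$ coincides with the limit as $t \to \infty$ on this class of sets, which follows from the standard submultiplicativity $\min_{p_{t_1+t_2}} \max_K |p| \leq (\min_{p_{t_1}} \max_K |p|)(\min_{p_{t_2}} \max_K |p|)$ obtained by composing optimal polynomials.
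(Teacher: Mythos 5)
Your proof is correct, and it takes a genuinely different (more self-contained) route than the paper. The paper's proof is a two-line application of \citet[Thm.~6]{eiermannHybridSemiIterativeMethods1989}: with $t_2(X)=-X^2$ one checks $t_2(K)=[a^2,b^2]$ and $t_2^{-1}([a^2,b^2])=K$, so $\acf(K)=\acf([a^2,b^2])^{1/2}=\sqrt{(b-a)/(b+a)}$, delegating both inequalities to that general spectral-transformation result. You instead prove the two inequalities by hand: the upper bound by composing the shifted Chebyshev polynomial on $[a^2,b^2]$ with $-\lambda^2$, and the lower bound via the even/odd decomposition $p_t(\lambda)=A(\lambda^2)+\lambda B(\lambda^2)$, which on the imaginary axis gives $|p_t(iy)|^2=A(-y^2)^2+y^2B(-y^2)^2\ge A(-y^2)^2$ and hence reduces to the classical Chebyshev minimax at degree $\lfloor t/2\rfloor$ — in effect a direct proof of the special case of Eiermann's theorem that is needed here (the even-part bound is exactly where the preimage condition $t_2^{-1}(\tilde\Omega)=\Omega$ enters). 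The paper's route is shorter and leverages general theory; yours is elementary, gives non-asymptotic per-degree bounds, and makes the mechanism transparent. Two small remarks: the parity issue you flag is harmless even without restricting to even $t$, since $\rho^{\lfloor t/2\rfloor}\ge\rho^{t/2}$ for $\rho<1$, so the lower bound $\sqrt{(b-a)/(b+a)}$ holds for every $t$; and the submultiplicativity argument identifying $\inf_t$ with $\lim_t$ is not needed, because your upper bound only uses that the infimum is at most the limit along the even subsequence.
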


\begin{proof}
(Sketch). The transformation that we have applied, i.e.~$\lambda \mapsto -\lambda^2$, preserves the asymptotic convergence factor $\acf$ (up to a square root), as it satisfies the assumptions of \citet[Thm.~6]{eiermannHybridSemiIterativeMethods1989}.
\end{proof}
The difference of a square root between the lower bound and the bound on the spectral radius is explained by the fact that the method presented here queries two gradient per iteration and so one of its iterations actually corresponds to two steps of a first-order method as defined in Definition~\ref{def: first-order method}.

In this subsection, we showed that when the eigenvalues of the Jacobian are purely real or imaginary, acceleration is possible using momentum on the right vector field. Yet the previous subsection shows it is not the case for general smooth, strongly monotone games. The question of acceleration remains for intermediate shapes, like ellipses. The next subsection shows how to recover an accelerated rate of convergence in this case.

\subsection{Perturbed acceleration}\label{subsection: ellipses}

As we cannot compute $\acf$ explicitly for most sets $K$, we focus on ellipses to answer this question. They have been well studied, and optimal methods are again based on Chebyshev polynomials \citep{manteuffelTchebychevIterationNonsymmetric1977}. 

In this section we study games whose eigenvalues of the Jacobian belong to a thin ellipse. These ellipses correspond to the real segments $[\mu, L]$ perturbed in an elliptic way, see Fig.~\ref{figure: visual proof lb} (right). Mathematically, we have for $0 < \mu  < L$ and $\epsilon > 0$, the equation
\begin{equation*}
    \textstyle K_\epsilon = \bigg\{z \in \C: \bigg(\frac{\Re z - \frac{\mu + L}{2}}{\frac{L-\mu}{2}}\bigg)^2 + \left(\frac{\Im z}{\epsilon}\right)^2 \leq 1\bigg\} 
\end{equation*}
When  $\epsilon = 0$ (with the convention that $0/0 = 0$), Polyak momentum achieves the rate of \mbox{$1 - 2\frac{\sqrt \mu}{\sqrt \mu + \sqrt L}$}. However, when $\epsilon = \frac{L-\mu}{2}$, we showed the lower bound of $1 - 2\frac{\mu}{\mu + L}$ in \cref{cor: lb strongly monotone}. To check if acceleration still persists for intermediate cases, we study the behaviour of the asymptotic convergence factor (when $L/\mu \rightarrow +\infty$) as a function of $\epsilon$. The next proposition uses results from \citet{niethammerAnalysisOfkstepIterative1983,eiermannStudySemiiterativeMethods1985} to show that acceleration is still possible on $K_{\epsilon}$.

\begin{prop}[{restate=[name=]propPerturbedAcceleration}]\label{prop: perturbed acceleration}
Define $\epsilon(\mu, L)$ as $\frac{\epsilon(\mu, L)}{L} = \left(\frac{\mu}{L}\right)^{\theta}$ with $\theta > 0$ and $a \wedge b = \min(a,b)$. Then, when $\frac{\mu}{L} \rightarrow 0$, 
\begin{equation}
\acf(K_{\epsilon}) = \begin{cases}
 1 - 2\sqrt{\frac{\mu}{L}} + \bigO\left(\left(\frac{\mu}{L}\right)^{\theta \wedge 1}\right),& if \; \theta > \frac{1}{2}\\
 1 - 2(\sqrt 2 - 1)\sqrt{\frac{\mu}{L}} + \bigO\left(\frac{\mu}{L}\right),& if \; \theta = \frac{1}{2}\\
1 - \left(\frac{\mu}{L}\right)^{1-\theta} + \bigO\left(\left(\frac{\mu}{L}\right)^{1 \wedge (2 - 3\theta)}\right),& if \; \theta < \frac{1}{2}.
    \end{cases} \notag
\end{equation}
Moreover, the momentum method is optimal for $K_\epsilon$. This means
there exists $\alpha > 0$ and $\beta > 0$ (function of $\mu$, $L$ and $\epsilon$ only) such that if $\Sp \J_F(\omega^*) \subset K_\epsilon$, then,
$\rho(\J_{V^{\text{Polyak}}}(\omega^*, \omega^*)) \leq \acf(K_\epsilon)$.
\end{prop}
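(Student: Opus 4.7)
The plan is to reduce the whole statement to a single closed form for $\acf(K_\epsilon)$ obtained from classical potential-theoretic results on ellipses, and then to expand that formula in the three regimes of $\theta$. Since $K_\epsilon$ is symmetric w.r.t.\ the real axis and does not contain $0$ (as $\mu>0$), and since for $\mu/L$ small the horizontal semi-axis $(L-\mu)/2$ dominates the vertical one $\epsilon = L(\mu/L)^\theta$, the set $K_\epsilon$ is an ellipse with real foci. Applying the standard Joukowski conformal map of the exterior of $K_\epsilon$ onto the exterior of the unit disc (as in \citet{eiermannStudySemiiterativeMethods1985,nevanlinnaConvergenceIterationsLinear1993}), and computing the preimage $w_0$ of $0$ outside the disc, one obtains the closed form
\begin{equation}\label{eq:acf_ellipse_closed}
\acf(K_\epsilon)\;=\;\frac{L-\mu+2\epsilon}{L+\mu+2\sqrt{\mu L+\epsilon^2}}.
\end{equation}
This formula also recovers the two known endpoints: setting $\epsilon=0$ gives $(\sqrt L-\sqrt\mu)/(\sqrt L+\sqrt\mu)$, while $\epsilon=(L-\mu)/2$ (the degenerate disc) gives $(L-\mu)/(L+\mu)$, matching \cref{cor: lb strongly monotone}.

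Second, I would plug $\epsilon = L\kappa^\theta$ with $\kappa := \mu/L \to 0$ into \eqref{eq:acf_ellipse_closed}, getting
\[
\acf(K_\epsilon)=\frac{1-\kappa+2\kappa^\theta}{1+\kappa+2\sqrt{\kappa+\kappa^{2\theta}}},
\]
and then Taylor-expand case by case. For $\theta > \tfrac12$, one has $\kappa^{2\theta} = o(\kappa)$, so the denominator behaves like $1+2\sqrt\kappa$ and the leading correction to $1$ is $-2\sqrt\kappa$; the next-order error is $O(\kappa^{\theta\wedge 1})$ from balancing the $\kappa^\theta$ numerator term against $\kappa$. For $\theta = \tfrac12$, $\sqrt{\kappa+\kappa^{2\theta}} = \sqrt{2\kappa}$, and direct expansion yields $1-2(\sqrt 2-1)\sqrt\kappa+O(\kappa)$. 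For $\theta < \tfrac12$, $\kappa^\theta$ dominates $\sqrt\kappa$; writing $\sqrt{\kappa+\kappa^{2\theta}}=\kappa^\theta+\tfrac12\kappa^{1-\theta}-\tfrac18\kappa^{2-3\theta}+\ldots$ and factoring $1+2\kappa^\theta$ out of the denominator, the $-2\kappa$ from the numerator cancels with the $2\kappa = 2\kappa^\theta\cdot\kappa^{1-\theta}$ produced when $\kappa^{1-\theta}$ crosses the $-2\kappa^\theta$ expansion of $(1+2\kappa^\theta)^{-1}$, leaving $1-\kappa^{1-\theta}$ as leading order and a remainder controlled by $\kappa^{2-3\theta}$ and $\kappa$.

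Third, for optimality of Polyak momentum on $K_\epsilon$: the minimax polynomials of $\mathcal{P}_t$ on an ellipse symmetric w.r.t.\ the real axis are, up to shift and scaling, Chebyshev polynomials in the Joukowski variable and satisfy a \emph{real} three-term recurrence (\citet{manteuffelTchebychevIterationNonsymmetric1977}, \citet{niethammerAnalysisOfkstepIterative1983}; see also \cref{thm: ellipses} in \cref{subsection: app ellipses}). This recurrence is exactly the heavy-ball update $\omega_{t+1}=\omega_t-\alpha F(\omega_t)+\beta(\omega_t-\omega_{t-1})$ for an explicit $(\alpha,\beta)$ depending only on $(\mu,L,\epsilon)$, and its normalized polynomial iterates achieve the minimax value on $K_\epsilon$, so the spectral radius of the augmented Jacobian $\J_{V^{\text{Polyak}}}(\omega^*,\omega^*)$ equals $\acf(K_\epsilon)$ by \cref{lemma: local convergence momentum} applied to the linearization.

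\paragraph{Main obstacle.}
The calculus in the three regimes is routine but delicate: the cancellation of the $\pm 2\kappa$ contributions in the $\theta<\tfrac12$ case requires tracking both $D^{-1}$ and $N$ to one order beyond the leading one, and identifying $2-3\theta$ as the correct exponent governing the residual (and verifying that $\kappa^{2-2\theta}$, $\kappa^{1+\theta}$ are indeed subdominant to $\kappa^{1 \wedge (2-3\theta)}$ for $\theta<\tfrac12$). The conceptual step---reducing everything to \eqref{eq:acf_ellipse_closed} via the conformal map---is short, but rigorously matching the Chebyshev three-term recurrence to the heavy-ball iteration with the \emph{specific} $(\alpha,\beta)$ and checking that the scaling keeps $p_t(0)=1$ throughout is the other place where care is required.
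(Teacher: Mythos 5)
Your route for the rate formula is essentially the paper's: both reduce to a closed form for $\acf$ of the ellipse $K_\epsilon$ and then Taylor-expand in the three regimes. Your expression $\acf(K_\epsilon)=\frac{L-\mu+2\epsilon}{L+\mu+2\sqrt{\mu L+\epsilon^2}}$ is the rationalized version of the paper's $\rho\big(\tfrac{L-\mu}{2},\epsilon,\tfrac{L+\mu}{2}\big)=\frac{L+\mu-2\sqrt{\mu L+\epsilon^2}}{L-\mu-2\epsilon}$ from \cref{thm: ellipses}, and your three expansions (including the $\pm 2\kappa$ cancellation and the $\kappa^{2-3\theta}$ residual for $\theta<\tfrac12$) check out and reproduce the stated constants and error exponents. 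The only difference is that you would rederive the closed form directly via the Joukowski map/Green function of the exterior of $K_\epsilon$, whereas the paper cites its packaged ellipse theorem — whose appendix proof (\cref{prop: acf for momentum convergence region}) uses exactly that Green-function machinery, so this is a presentational rather than substantive difference.

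The weak point is your justification of the optimality of Polyak momentum. Two claims as written are not correct: (i) the minimax polynomials of $\P_t$ on a complex ellipse are \emph{not} in general the shifted/scaled Chebyshev polynomials (Fischer and Freund showed Chebyshev polynomials can fail to be exactly optimal on ellipses); they are only asymptotically optimal, which is what matters for $\acf$ but undercuts the "achieve the minimax value" step; and (ii) the Chebyshev three-term recurrence has \emph{time-varying} coefficients, which converge to, but do not equal, the stationary heavy-ball pair $(\alpha,\beta)$, so it is not literally the Polyak update, and the spectral radius of $\J_{V^{\text{Polyak}}}(\omega^*,\omega^*)$ need not \emph{equal} $\acf(K_\epsilon)$ for spectra strictly inside $K_\epsilon$ (the proposition only claims, and only needs, the inequality). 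The clean repair is the paper's argument: the $\rho$-convergence region of stationary momentum is an ellipse (\cref{lemma: momentum convergence region}), every admissible ellipse — in particular $K_\epsilon$, since $0\notin K_\epsilon$ — arises as such a region for a unique $(\alpha,\beta,\rho)$ with explicit formulas (\cref{prop: parametrizations ellipse}), and $\acf$ of that region equals $\rho$ (\cref{prop: acf for momentum convergence region}); combined with \cref{lemma: local convergence momentum} this gives both the closed form you start from and the bound $\rho(\J_{V^{\text{Polyak}}}(\omega^*,\omega^*))\le\acf(K_\epsilon)$ with $\alpha,\beta$ depending only on $\mu,L,\epsilon$.
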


This shows that the convergence rate interpolates continuously between the accelerated rate and the non-accelerated one. Crucially, for small perturbations, that is to say if the ellipse is thin enough,
acceleration persists until $\theta = \frac{1}{2}$ or equivalently $\epsilon \sim \sqrt{\mu L}$.
That's why \cref{prop: perturbed acceleration} plays a central role in our forthcoming analyses of accelerated EG and CO.

\subsection{Accelerating extragradient}\label{subsection: extrgradient bilinear}

We now consider the acceleration of EG using momentum. Its main appealing property is its convergence on bilinear games, unlike the gradient method. On the class of bilinear problems, EG achieves a convergence rate of $(1 - ca^2/b^2)$ for some constant $c>0$.

In the previous section, we achieved an accelerated rate on bilinear games by applying momentum to the operator $F^{\text{real}}(\omega)$ instead of $F$, as the Jacobian of $F^{\text{real}}$ has real eigenvalues when $\J_F(\omega^*)$ has its spectrum in $K$. Here we try to apply momentum to the EG operator $F^{\text{e-g}}(\omega)$, defined as
    \varvspace{-1mm}
\begin{equation}\label{eq:extragradient_operator}
    F^{\text{e-g}}(\omega) =  F(\omega - \eta F(\omega)) \,.
    \varvspace{-1mm}
\end{equation}
Unfortunately, when $\Sp \J_F \subset K$, the spectrum of $F^{\text{e-g}}(\omega^*)$ is never purely real. Using the insight from  \cref{prop: perturbed acceleration}, we can choose $\eta > 0$ such that we are in the first case of \cref{prop: perturbed acceleration}, making acceleration possible.
\begin{prop}[{restate=[name=]propExtragradient}]\label{prop: extragradient}
Consider the vector field $F$, where $\Sp \J_F(\omega^*) \subset [ia,ib]\cup[-ia, -ib]$ for $0<a<b$. There exists $\alpha, \beta, \eta > 0$ such that, the operator defined by 
\begin{align*}
    \omega_{t+1} 
    & = \omega_t - \alpha F(\omega_t - \eta F(\omega_t)) + \beta (\omega_t - \omega_{t-1})\,,
\end{align*}
converges locally at a linear rate $O\big(\big(1 - c{\frac{a}{b}} + M\frac{a^2}{b^2}\big)^t\big)$ where $c = \sqrt{2}-1$ and $M$ is an absolute constant.  
\end{prop}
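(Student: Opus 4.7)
The plan is to apply Lemma~\ref{lemma: local convergence momentum} to the Polyak iteration $V$ driven by $F^{\text{e-g}}$, so that it suffices to bound $\rho(\J_{V_{\text{augm}}}(\omega^*,\omega^*))$ by the claimed rate. Since $F(\omega^*)=0$, the chain rule gives $\J_{F^{\text{e-g}}}(\omega^*) = \J_F(\omega^*)(\Id - \eta\,\J_F(\omega^*))$, so the spectrum of the extragradient Jacobian is exactly $\varphi_\eta(\Sp \J_F(\omega^*))$ with $\varphi_\eta(\lambda)=\lambda(1-\eta\lambda)$. On the imaginary segments $\{iy : a\leq |y|\leq b\}$, $\varphi_\eta(iy) = \eta y^2 + iy$, so the image consists of two parabolic arcs lying in $\mathbb{C}_+$ with real part in $[\eta a^2,\eta b^2]$ and imaginary part in $[-b,-a]\cup[a,b]$.

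Next I would fit this parabolic image inside a thin ellipse $K_\epsilon$ of the form used in Prop.~\ref{prop: perturbed acceleration}. Concretely, choose $\mu = \eta a^2 - \delta_1$ and $L = \eta b^2 + \delta_2$ for small positive slacks $\delta_i$, set the ellipse center at $(\mu+L)/2$ and horizontal semi-axis $(L-\mu)/2$, and pick $\epsilon$ equal to the smallest vertical semi-axis such that $((\eta y^2-(\mu+L)/2)/((L-\mu)/2))^2 + (y/\epsilon)^2 \leq 1$ for every $y\in[a,b]$. A direct calculation shows that the binding constraint is at the endpoints $y=a,b$, giving $\epsilon^2$ of order $b^2 / (\delta_i / (\eta (b^2-a^2)))$ up to constants. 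The step-size $\eta$ and the slacks $\delta_i$ are then tuned so that $\mu/L = (a/b)^2(1+o(1))$ while $\epsilon = \sqrt{\mu L}(1+o(1))$, which is precisely the critical scaling $\theta=\tfrac12$ of Prop.~\ref{prop: perturbed acceleration}.

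With the geometry in place, I would invoke Prop.~\ref{prop: perturbed acceleration} directly: since the momentum method is optimal on $K_\epsilon$ and the spectrum of $\J_{F^{\text{e-g}}}(\omega^*)$ is contained in $K_\epsilon$, there exist $\alpha,\beta>0$ (functions of $\mu,L,\epsilon$ and hence of $a,b,\eta$) such that $\rho(\J_{V^{\text{Polyak}}}(\omega^*,\omega^*)) \leq \acf(K_\epsilon) \leq 1 - 2(\sqrt{2}-1)\sqrt{\mu/L} + \bgO{\mu/L}$. Substituting $\sqrt{\mu/L} = (a/b)(1+o(1))$ and collecting the absorbed factors inside the $\bgO{a^2/b^2}$ term yields a bound of the form $1 - c(a/b) + M(a^2/b^2)$ with $c=\sqrt{2}-1$ (the residual factor between $2(\sqrt 2-1)$ and $\sqrt 2-1$ is consumed by the slack introduced to accommodate the parabolic overhang). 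Lemma~\ref{lemma: local convergence momentum} then transfers this spectral bound to a local linear rate on the iterates.

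\textbf{Main obstacle.} The delicate step is the geometric fitting in the second paragraph: the parabolic arcs do not sit naturally inside an axis-aligned ellipse, and any choice of $\eta$ couples the width $(L-\mu)/2$ of the ellipse with its height $\epsilon$. One must show that the slacks $\delta_1,\delta_2$ can be chosen so that, simultaneously, $\mu/L$ is $\Theta(a^2/b^2)$ and $\epsilon/\sqrt{\mu L}$ stays bounded, ensuring the $\theta=\tfrac12$ regime of Prop.~\ref{prop: perturbed acceleration} is reached rather than the non-accelerated $\theta<\tfrac12$ regime. The rest (the momentum parameters $\alpha,\beta$, the local-to-global passage) is standard once this geometric reduction is achieved.
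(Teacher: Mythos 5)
Your overall route is the same as the paper's: compute $\J_{F^{\text{e-g}}}(\omega^*)=\J_F(\omega^*)-\eta\J_F(\omega^*)^2$, use the spectral mapping theorem to get the parabolic arcs $\{i\lambda+\eta\lambda^2 : \lambda\in\pm[a,b]\}$, enclose them in an axis-aligned ellipse of the $K_\epsilon$ form, tune $\eta$ to hit the critical regime $\epsilon=\sqrt{\bar\mu\bar L}$ of \cref{prop: perturbed acceleration}, and conclude via momentum optimality on ellipses together with \cref{lemma: local convergence momentum}. Even the reduction of containment to the endpoints $\lambda=a,b$ (convexity of the ellipse inequality in $\lambda^2$) is the argument the paper uses.

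The gap is precisely the step you flag as the ``main obstacle'', and your proposed parameterization of the ellipse is inconsistent as written. With $\mu=\eta a^2-\delta_1$, $L=\eta b^2+\delta_2$ and \emph{small} slacks, the point $ib+\eta b^2$ sits near the right vertex of the ellipse, which (as your own estimate $\epsilon^2\gtrsim b^2\,\eta(b^2-a^2)/\delta_2$ shows) forces $\epsilon$ to blow up; keeping the regime condition $\epsilon\le\sqrt{\mu L}\approx\eta a b$ then requires either $\delta_2$ to be a constant fraction of $\eta b^2$ (so $\mu/L$ is \emph{not} $(a/b)^2(1+o(1))$, and a constant factor is genuinely lost) or a step-size $\eta$ growing strictly faster than $1/a$. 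Your sketch asserts both $\mu/L=(a/b)^2(1+o(1))$ and that the factor between $2(\sqrt2-1)$ and $\sqrt2-1$ is ``consumed by the slack'' — these two statements contradict each other, and neither choice of $(\eta,\delta_1,\delta_2)$ is exhibited or verified. The paper resolves exactly this by a cleaner construction: center the ellipse at $\eta b^2$ (so $\epsilon=b$ puts $ib+\eta b^2$ on the boundary), take $\bar\mu=\eta a^2/2$ hence $\bar L=2\eta b^2-\eta a^2/2$, verify containment of $ia+\eta a^2$ via \cref{lemma: small polynomial lemma}, and solve $\epsilon=\sqrt{\bar\mu\bar L}$ for $\eta=b/(a\sqrt{2b^2-a^2/2})\approx 1/(a\sqrt2)$; then $\sqrt{\bar\mu/\bar L}\approx a/(2b)$, which is exactly where the constant $c=\sqrt2-1$ (half of $2(\sqrt2-1)$) comes from. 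So your plan can be completed — but the quantitative fitting you defer is where the proof actually lives, and as sketched it does not yet go through.
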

One drawback is that, to achieve fast convergence on bilinear games, one has to tune the two step-sizes $\alpha,\,\eta$ of EG precisely and separately. They actually differ by a factor $\frac{b^2}{a^2}$: $\eta$ is roughly proportional to $\frac{1}{a}$ while $\alpha$ behaves like $\frac{a}{b^2}$ (see \cref{lemma: asymptotic behavior of step sizes} in \cref{subsection: app extragradient}).

\section{Beyond typical first-order methods} \label{sec:non_conventional}

In the previous section, we achieved acceleration with first-order methods for specific problem classes. However, the lower bound from \cref{cor: lb strongly monotone} still prevents us from doing so for the larger problem classes for smooth and strongly monotone games. To bypass this limitation, we can consider going \textit{beyond} first-order methods. In this section, we consider two different approaches. The first one is adaptive acceleration, which is a \textit{non-oblivious} first-order method. The second is consensus optimization, an inversion-free second order method.

\subsection{Adaptive acceleration}

In previous sections, we considered shapes whose optimal polynomial is known. This optimal polynomial lead to an optimal first-order method. However, when the shape is \textit{unknown}, we cannot use better methods than EG with an appropriate stepsize.

Recent work in optimization analysed adaptive algorithms, such as \textit{Anderson Acceleration} \citep{walker2011anderson}, that are adaptive to the problem constants. They can be seen as an automatic way to find the optimal combination of the previous iterates. Recent works on Anderson Acceleration extended the theory for non-quadratic minimization, by using regularisation \citep{scieur2016regularized} (RNA method). The theory has also been extended to ``non symmetric operators" \citep{bollapragada2018nonlinear}, and this setting fits perfectly the one of games, as $\J_F(\omega^*)$ is not symmetric.

Anderson acceleration and its extension RNA are similar to quasi-Newton \citep{fang2009two}, but remains first-order methods. Even if they find the optimal first-order method (for linear $F$), they cannot beat a lower bound similar to \cref{cor: lb strongly monotone}, when the number of iterations is smaller than the dimension of the problem. The next section shows how to use \textit{cheap} second-order information to improve the convergence rate.

\subsection{Momentum consensus optimization}

CO \citep{meschederNumericsGANs2017a} iterates as follow:
\begin{align*}
    \omega_{t+1} & = \omega_t - \alpha\big(F(\omega_t) + \tau \J_F^T(\omega)F(\omega) \big).
\end{align*}
Albeit being a second-order method, each iteration requires only one Jacobian-vector multiplication. This operation can be computed efficiently by modern machine learning frameworks, with automatic differentiation and back-propagation. 
For instance, for neural networks, the computation time of this product or the gradient is comparable. 
Moreover, unlike Newton's method, CO does \textit{not} require a matrix inversion.

Though CO is a second-order method, its analysis can still be reduced to our framework by considering the following transformation of the initial operator  $F(\omega)$,
\begin{equation} \label{eq:consensus_field}
    F^{\text{cons.}}(\omega) = F(\omega) + \tau \nabla \left(\tfrac{1}{2}\|F\|^2\right)(\omega)\,.  
\end{equation}

Though the eigenvalues of $\J_{F^{\text{cons.}}}$ are not purely real in general, their imaginary to real part ratio can be controlled by \citet[Lem.~9]{meschederNumericsGANs2017a} as,
\begin{equation*}
    \textstyle \max_{\lambda \in \Sp \J_{F^{\text{cons.}}}(\omega^*)} \tfrac{|\Im \lambda|}{|\Re \lambda|} = O\left(\tfrac{1}{\tau}\right)\,.
\end{equation*}
Therefore, if $\tau$ increases, these eigenvalues move closer to the real axis and can be included in a thin ellipse as described by \cref{subsection: ellipses}. We then show that, if $\tau$ is large enough, this ellipse can be chosen thin enough to fall into the accelerated regime of \cref{prop: perturbed acceleration} and therefore, adding momentum achieves acceleration.
\begin{prop}[{restate=[name=]}]\label{prop: consensus}
Let $\sigma_i$ be the singular values of $\J_{F}(\omega^*)$. Assume that
\varvspace{-1mm}
\begin{equation*}
    \textstyle \gamma \leq \sigma_i \leq L, \quad %
    \tau = \frac{L}{\gamma^2}.
\varvspace{-2mm}
\end{equation*}
There exists $\alpha,\beta$, s.t., momentum applied to $F^{\text{cons.}}$,
\begin{align*}
    \omega_{t+1} 
    & = \omega_t - \alpha F^{\text{cons.}}(\omega_t) + \beta(\omega_t - \omega_{t-1})
    \varvspace{-4mm}
\end{align*}
converges locally at a rate $O\Big(\big(1 - c\tfrac{\gamma}{L} + M\tfrac{\gamma^2}{L^2}\big)^t\Big)$ where $c = \sqrt 2 - 1$ and $M$ is an absolute constant.
\end{prop}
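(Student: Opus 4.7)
The plan is to reduce the local convergence of momentum on $F^{\text{cons.}}$ to the ellipse framework of \cref{prop: perturbed acceleration}: I would show that the Jacobian of $F^{\text{cons.}}$ at $\omega^*$ has spectrum contained in a thin ellipse sitting in the accelerated ($\theta = 1/2$) regime of that proposition, and then invoke \cref{lemma: local convergence momentum} on the augmented Polyak operator.

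First, since $F^{\text{cons.}}(\omega) = F(\omega) + \tau \J_F(\omega)^T F(\omega)$ and $F(\omega^*) = 0$, every term in $\J_{F^{\text{cons.}}}(\omega^*)$ that carries a factor of $F(\omega^*)$ vanishes, leaving
\[
    \J_{F^{\text{cons.}}}(\omega^*) = A + \tau A^T A, \qquad A := \J_F(\omega^*),
\]
so the analysis reduces to the spectrum of $M := A + \tau A^T A$. The symmetric summand $\tau A^T A$ is positive definite with eigenvalues in $[\tau\gamma^2, \tau L^2] = [L, L^3/\gamma^2]$, and $A$ is a perturbation of operator norm at most $L$. A Weyl/field-of-values estimate on the Hermitian part $\tfrac{1}{2}(A + A^T) + \tau A^T A$ then gives real parts $\Re\lambda \in [\mu_{\text{ell}}, L_{\text{ell}}]$ with $\mu_{\text{ell}} \asymp L$ and $L_{\text{ell}} \asymp L^3/\gamma^2$, while \citet[Lem.~9]{meschederNumericsGANs2017a} bounds $|\Im\lambda|/|\Re\lambda| = \bigO(1/\tau) = \bigO(\gamma^2/L)$. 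Together these confine $\Sp(M)$ to a thin wedge around the real axis.

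Next, I would enclose this wedge inside an ellipse $K_\epsilon$ of the shape considered in \cref{prop: perturbed acceleration}. With the parameters above, $\sqrt{\mu_{\text{ell}} L_{\text{ell}}} \asymp L^2/\gamma$ whereas the largest imaginary part is at most $C L_{\text{ell}}/\tau \asymp L^2$, so $\epsilon$ can be chosen proportional to $\sqrt{\mu_{\text{ell}} L_{\text{ell}}}$, which places the problem exactly in the critical regime $\theta = 1/2$. A slight enlargement of the horizontal semi-axis may be required so that eigenvalues whose real part approaches $L_{\text{ell}}$ but still carry a non-zero imaginary part remain inside the ellipse; the cost of this widening is absorbed into the higher-order term. \cref{prop: perturbed acceleration} then supplies explicit step-size and momentum $\alpha, \beta > 0$ for which
\[
    \rho(\J_{V^{\text{Polyak}}}(\omega^*, \omega^*)) \leq \acf(K_\epsilon) = 1 - 2(\sqrt{2} - 1)\sqrt{\mu_{\text{ell}}/L_{\text{ell}}} + \bigO(\mu_{\text{ell}}/L_{\text{ell}}) = 1 - (\sqrt{2} - 1)\tfrac{\gamma}{L} + \bigO\!\left(\tfrac{\gamma^2}{L^2}\right).
\]

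Finally, \cref{lemma: local convergence momentum} applied to the augmented operator of Polyak's method on $F^{\text{cons.}}$ converts this spectral-radius bound into the claimed local linear rate. I expect the delicate step to be the ellipse enclosure: since the Mescheder estimate controls only the ratio $|\Im\lambda|/|\Re\lambda|$ and not the absolute imaginary part, one has to verify by a direct geometric computation that the wedge-shaped spectrum really fits inside the chosen ellipse at every horizontal slice, in particular near its right-most point where the ellipse pinches down to the real axis and forces a slight widening of the horizontal extent.
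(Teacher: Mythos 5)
Your roadmap coincides with the paper's: reduce to the spectrum of $\J_{F^{\text{cons.}}}(\omega^*) = A + \tau A^T A$ with $A = \J_F(\omega^*)$, confine that spectrum to a thin region around the positive real axis, enclose the region in an ellipse lying in the $\theta = \tfrac12$ regime of \cref{prop: perturbed acceleration}, and conclude with \cref{lemma: local convergence momentum} applied to the augmented momentum operator. The gap is quantitative, and it hits exactly the step you flag as delicate. The ratio bound $|\Im\lambda|/|\Re\lambda| = \bigO(1/\tau)$ does not hold with an absolute constant (it is not even dimensionally consistent); the refined estimate proved in \cref{lemma: consensus optimization spectrum} is $|\Im\lambda|/|\Re\lambda| \leq \gamma/(\mu + \tau\gamma^2) \approx \gamma/L$, i.e.\ the hidden constant is $1/\gamma$. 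Hence the largest imaginary part is of order $(\gamma/L)(L+\tau L^2) \approx L^2/\gamma$, which is \emph{comparable} to the semi-minor axis $\epsilon \approx \sqrt{\mu_{\text{ell}}L_{\text{ell}}} \approx L^2/\gamma$, not of the much smaller order $L^2$ you claim. Consequently an eigenvalue with real part near $L + \tau L^2$ can carry an imaginary part of the same order as the full ellipse height, so it cannot sit near the right tip of an ellipse spanning $[\mu_{\text{ell}}, L_{\text{ell}}]$: the needed enlargement of the horizontal extent is by a constant factor and cannot be absorbed into the higher-order term. The paper's construction resolves this by placing the \emph{center} of the ellipse at $L+\tau L^2$ (so $\bar\mu + \bar L = 2(L+\tau L^2)$, $\bar\mu = (\mu + \tau\gamma^2)/2$), so the extreme corners of the trapezoidal spectral region sit directly above/below the center where the ellipse attains its full height; containment is then checked only at the four trapezoid corners (convexity) via \cref{lemma: small polynomial lemma}, and the accelerated-regime condition $\epsilon = \sqrt{\bar\mu\bar L} \geq q(\tau)(L+\tau L^2)$ is shown to follow from $\tau \geq L/\gamma^2$. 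This doubling of the horizontal semi-axis halves $\sqrt{\bar\mu/\bar L}$ and is precisely the origin of the final constant $c = \sqrt 2 - 1$ instead of the $2(\sqrt2 - 1)$ your displayed chain starts from; you silently drop that factor of two without an argument.

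A secondary issue is your lower bound on the real parts: viewing $A$ as a perturbation of operator norm at most $L$ of $\tau A^T A$ only yields $\Re\lambda \geq \tau\gamma^2 - L = 0$ when $\tau = L/\gamma^2$, not $\mu_{\text{ell}} \asymp L$. To obtain $\Re\lambda \geq \mu + \tau\gamma^2$ one needs the quadratic-form argument of \cref{lemma: consensus optimization spectrum}, namely $\Re\lambda = \bar v^T \tfrac{A + A^T}{2} v + \tau\|Av\|^2$ together with monotonicity at the solution ($\tfrac{A+A^T}{2} \succeq \mu I$, $\mu \geq 0$), a hypothesis the appendix version of the statement makes explicit (or, in the non-monotone case, a slightly larger $\tau$). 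With these two repairs your argument becomes the paper's proof.
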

Hence, adding momentum to CO yields an accelerated rate. The assumption on the Jacobian encompasses both strongly monotone and bilinear games. On these two classes of problems, CO is at least as fast as any oblivious first-order method as its rate roughly matches the lower bounds of \cref{prop: kappa lower bound} and \ref{prop: kappa bilinear}. 

Note that, choosing $\tau$ of this order is what is done by \citet{abernethyLastiterateConvergenceRates2019a} %
for (non-accelerated) CO. They claim that this point of view -- seeing consensus as a perturbation of gradient descent on $\frac{1}{2}\|F\|^2$ -- is justified by practice as in the experiments of \citet{meschederNumericsGANs2017a}, $\tau$ is set to 10.

%

\section{Conclusion}
This paper shows that a spectral perspective is fundamental to understand the conditioning of games. The latter is indeed linked to the geometric properties of the distribution of the spectrum of its Jacobian. In the light of this perspective, we demonstrate how several gradient-based methods transform the spectral shape of a game to achieve accelerated convergence when combined with Polyak momentum.
Our main tool throughout this paper was the flexible and convenient class of ellipses; we left as future work the study of more intricate shapes, which -- ideally -- would fit the distribution of the eigenvalues of applications of challenging machine learning problems such as GANs.

\subsubsection*{Acknowledgments} %
\label{par:paragraph_name}
This research was partially supported by the Canada CIFAR AI Chair Program, the Canada Excellence Research Chair in ``Data Science for Realtime Decision-making", by the NSERC Discovery Grants RGPIN-2017-06936 and RGPIN-2019-06512, the FRQNT new researcher program 2019-NC-257943, by a Borealis AI fellowship, a Google Focused Research award and a startup grant by IVADO. Simon Lacoste-Julien is a CIFAR Associate Fellow in the Learning in Machines \& Brains program.

\bibliographystyle{plainnat}
\bibliography{references}
\newpage
\appendix
\onecolumn

\section{Linear algebra results}
\begin{thm}[Spectral Mapping Theorem]\label{thm: spectral mapping theorem}
Let $A \in \mathbb{C}^{d\times d}$ and $P$ be a polynomial. Then,
\begin{equation}
\Sp P(A) = \{P(\lambda)\ |\ \lambda \in \Sp A\}\,.    
\end{equation}
\end{thm}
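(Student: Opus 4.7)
The plan is to prove both inclusions of the set equality separately, relying only on the fact that a matrix over $\mathbb{C}$ is singular if and only if at least one of its scalar-polynomial factors is singular, together with the commutativity of polynomials in $A$.

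For the inclusion $\Sp P(A) \subseteq \{P(\lambda) : \lambda \in \Sp A\}$, I would take $\mu \in \Sp P(A)$ and factor the polynomial $P(z) - \mu$ over $\mathbb{C}$ using the fundamental theorem of algebra, writing $P(z) - \mu = c \prod_{i=1}^{n}(z - \alpha_i)$ with $c \neq 0$ the leading coefficient of $P$ (assuming $P$ nonconstant; the constant case is trivial since then $P(A) = P \cdot I$ and the claim reduces to checking $\Sp A \neq \emptyset$). Substituting $A$, the identity $P(A) - \mu I = c \prod_{i=1}^{n}(A - \alpha_i I)$ holds because all the factors commute. Since $\mu \in \Sp P(A)$, the matrix $P(A) - \mu I$ is singular, hence at least one factor $A - \alpha_{i_0} I$ must be singular, giving $\alpha_{i_0} \in \Sp A$ with $P(\alpha_{i_0}) = \mu$.

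For the reverse inclusion $\{P(\lambda) : \lambda \in \Sp A\} \subseteq \Sp P(A)$, I would take $\lambda \in \Sp A$ and use polynomial division: $P(z) - P(\lambda) = (z - \lambda) Q(z)$ for some polynomial $Q$, since $\lambda$ is a root of $z \mapsto P(z) - P(\lambda)$. Substituting $A$ yields $P(A) - P(\lambda) I = (A - \lambda I) Q(A)$, and since $A - \lambda I$ is singular and the factors commute, the product $P(A) - P(\lambda) I$ is singular, proving $P(\lambda) \in \Sp P(A)$.

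The main subtlety, rather than an obstacle, is simply being careful with the edge cases (constant $P$, and ensuring $\Sp A$ is nonempty, which follows from working over $\mathbb{C}$ so the characteristic polynomial always has a root). No eigenvectors need to be manipulated directly; the whole argument runs through determinants (singularity of matrices) and the scalar polynomial factorization, which is why this is typically stated as a one-paragraph classical lemma.
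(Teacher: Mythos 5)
Your proof is correct and is precisely the classical argument: the paper itself gives no proof for this theorem, deferring to \citet[Theorem~4, p.~66]{laxLinearAlgebraIts2007}, whose proof proceeds by the same factorization of $P(z)-\mu$ into commuting linear factors for one inclusion and polynomial division (or, equivalently, acting on an eigenvector) for the other. Your handling of the edge cases (constant $P$, nonemptiness of $\Sp A$ over $\C$) is also sound, so there is nothing to add.
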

See for instance \citet[Theorem 4, p.~66 ]{laxLinearAlgebraIts2007} for a proof.
\section{Proofs of general lemmas}\label{section: proof of general lemmas}
\propObliviousFirstOrder*
\begin{proof}
The fact that any first-order method as defined by \cref{def: first-order method} satisfies such relations is immediate. The converse can be shown by induction. Assume that $(\omega_t)_t$ are generated by the rule of \cref{prop:oblivious_first_order}.
For $t = 0$, the condition of \cref{def: first-order method} is trivial. Assume that for all $k \leq t$, $w_k \in \omega_0 + \Span\{F(\omega_0),\dots,F(\omega_{k-1})\}$. Then, 
\begin{align*}
\omega_{t+1} &=\sum_{k = 0}^t \alpha_k^{(t)} F(\omega_k) + \beta_k^{(t)} \omega_k\\
&= \omega_0 + \sum_{k = 0}^t \alpha_k^{(t)} F(\omega_k) + \beta_k^{(t)} (\omega_k - \omega_0)\\
&\in \omega_0 + \Span\{F(\omega_0),\dots,F(\omega_{t})\}\,.
\end{align*}
\end{proof}
\lemmaFirstOrderMethod*
\begin{proof}
We use \cref{prop:oblivious_first_order} to prove this result by induction. For $t = 0$, the statement holds. Now assume that for all $k \leq t$, $\omega_{k} - \omega^* = p_{k}(A)(\omega_0 - \omega^*)$ with $p_{t'}$ a real polynomial of degree at most $t'$ such that $p_{t'}(0) = 1$ (and which depends only on the coefficients of \cref{prop:oblivious_first_order}). Note that if $F(\omega^*) = 0$, then as F is linear, we can rewrite $F$ as $F(\omega) =  A(\omega - \omega^*)$.
Then, by \cref{prop:oblivious_first_order}, as $\sum_{k=0}^t \beta_k^{(t)} = 1$,
\begin{align*}
\omega_{t+1}  - \omega^* &=\sum_{k = 0}^t \alpha_k^{(t)} F(\omega_k) + \beta_k^{(t)} (\omega_k - \omega^*)\\
&=\sum_{k = 0}^t \alpha_k^{(t)} A(\omega_k - \omega^*) + \beta_k^{(t)} (\omega_k - \omega^*)\\
&=\sum_{k = 0}^t \alpha_k^{(t)} A p_k(A) (\omega_0 - \omega^*) + \beta_k^{(t)} p_k(A)(\omega_0 - \omega^*)\\
&= p_{t+1}(A)(\omega_0 - \omega^*)\,,
\end{align*}
where $p_{t+1}(X) = \sum_{k = 0}^t \alpha_k^{(t)} X p_k(X)  + \beta_k^{(t)} p_k(X)$, which is a real polynomial of degree at most $t+1$. Then $p_{t+1}(0) = \sum_{k = 0}^t \beta_k^{(t)} p_k(0) = 1$, which concludes the proof.
\end{proof}
\lemmaAugmentedOperatorSpectralRate*
\begin{proof}
This is a direct application of \cref{thm: local convergence} to $V_{augm}:\R^d \times \R^d \rightarrow \R^d \times \R^d$.
\end{proof}
\propKappaLowerBound*
\textbf{Note.} If we work in $\mathbb{C}^d$ this proposition is immediate. However, as we constrain ourselves to real vectors and matrices, this is slightly more difficult. This is why we need the matrix representation of complex numbers which is described in the following lemma.
\begin{lemma}\label{lemma: representation of complex numbers}
Define, for $z \in \C$, the real $2 \times 2$ matrix $C(z) = \begin{pmatrix}
\Re z &-\Im z\\
\Im z & \Re z\\
\end{pmatrix}$. Then,
\begin{enumerate}[label=(\roman*)., font=\itshape]
    \item The spectrum of $C(z)$ is $\Sp C(z) = \{z, \bar z\}$.
    \item $C$ is $\R$-linear,
    \[
    \forall z,z' \in \C,\, a, a' \in \R,\quad C(az + a'z') = aC(z) + a'C(z')\,.
    \]
    \item $C$ is a multiplicative group homorphism,
    \[
    \forall z,z' \in \C,\quad C(zz') = C(z)C(z')\,.
    \]
    
\end{enumerate}
\end{lemma}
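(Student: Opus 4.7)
The plan is to verify each of the three stated properties by direct computation, organized so that the $\mathbb{R}$-linearity (ii) becomes essentially trivial, the multiplicativity (iii) reduces to a $2\times 2$ matrix product matching the formula for complex multiplication, and the spectrum claim (i) comes from computing a single characteristic polynomial. A unifying remark is that $C(z)$ is the matrix, in the real basis $(1,i)$ of $\mathbb{C}\cong\mathbb{R}^2$, of the $\mathbb{R}$-linear map $w\mapsto zw$; from that viewpoint all three properties are automatic, but the proof just records the direct computations.

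For (ii), I would write $z=a+bi$ and $z'=a'+b'i$ with real scalars $\alpha,\alpha'$. Then $\alpha z+\alpha'z' = (\alpha a+\alpha'a') + (\alpha b+\alpha'b')i$, so every entry of $C(\alpha z+\alpha'z')$ is an $\mathbb{R}$-linear combination of the corresponding entries of $C(z)$ and $C(z')$, which is exactly the definition of $\alpha C(z)+\alpha'C(z')$. This is a one-line verification from the definition.

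For (iii), again writing $z=a+bi$, $z'=a'+b'i$, I recall $zz'=(aa'-bb')+(ab'+a'b)i$, so
\[
C(zz')=\begin{pmatrix} aa'-bb' & -(ab'+a'b) \\ ab'+a'b & aa'-bb' \end{pmatrix}.
\]
Multiplying out
\[
C(z)C(z')=\begin{pmatrix} a & -b \\ b & a \end{pmatrix}\begin{pmatrix} a' & -b' \\ b' & a' \end{pmatrix}=\begin{pmatrix} aa'-bb' & -ab'-a'b \\ a'b+ab' & aa'-bb' \end{pmatrix},
\]
which matches $C(zz')$ entry by entry. So (iii) is a routine matrix product.

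For (i), I would compute the characteristic polynomial of $C(z)$ with $z=a+bi$:
\[
\det(C(z)-\lambda I)=(a-\lambda)^2+b^2=\lambda^2-2a\lambda+(a^2+b^2).
\]
Its roots are $\lambda=a\pm ib$, i.e.\ $z$ and $\bar z$, giving $\Sp C(z)=\{z,\bar z\}$. The only minor subtlety is that if $b=0$ then $z=\bar z$ is a double root; this is consistent with the statement since $\Sp$ is a set. No step is genuinely hard; the main thing to be careful about is sign conventions in the definition of $C(z)$ so that the multiplication in (iii) matches the usual formula for $zz'$ rather than $\bar z z'$.
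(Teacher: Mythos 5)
Your proof is correct. The paper states this lemma without proof, treating it as a standard fact about the matrix representation of complex numbers, so there is no paper argument to compare against; your direct verifications of (ii) and (iii) by entrywise computation and of (i) via the characteristic polynomial $(a-\lambda)^2+b^2$ are exactly the expected elementary argument, and your remark about the degenerate case $b=0$ (where $z=\bar z$ is a double root but the spectrum as a set is still $\{z,\bar z\}$) is a correct and worthwhile observation.
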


We now show a small lemma which will be useful to construct matrices in $\mathcal{M}_K$.

\begin{lemma}\label{lemma: emptiness of problem class}

Let $K \subset \C$ be a subset of $\C$ symmetric w.r.t.~the real axis, and such that $\mathcal{M}_K \neq \emptyset$. If $d \geq 3$, then, 
\[
\{ A \in \mathbb{R}^{d-2} :\; \Sp(A) \subset K\} \neq \emptyset\,.
\]
\end{lemma}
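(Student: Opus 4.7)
The plan is to split on whether $K$ intersects the real axis. Since $\mathcal{M}_K\neq\emptyset$, $K$ is non-empty, and since $K$ is symmetric with respect to the real axis, every element either is real or comes together with its complex conjugate. This dichotomy naturally suggests two constructions: a real scalar matrix in the first case, and a block diagonal matrix built from the $2\times 2$ complex-number representation $C(\cdot)$ of \cref{lemma: representation of complex numbers} in the second.

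\emph{Case 1: $K\cap\R\neq\emptyset$.} Pick any $\lambda_0\in K\cap\R$ and simply take $A' = \lambda_0\, I_{d-2}\in\R^{(d-2)\times(d-2)}$. Then $\Sp(A') = \{\lambda_0\}\subset K$, so $A'$ lies in the desired set.

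\emph{Case 2: $K\cap\R=\emptyset$.} This case is where a small subtlety enters. Since a real matrix of odd dimension has a characteristic polynomial of odd degree with real coefficients, it must possess at least one real eigenvalue. Pick $A\in\mathcal{M}_K$: all of its eigenvalues lie in $K$, hence are non-real, so the dimension $d$ must be even. Combined with $d\geq 3$, this forces $d\geq 4$, so $d-2\geq 2$ is also even. Now pick any $\lambda\in K$; by symmetry of $K$, $\bar\lambda\in K$ as well. Build $A'$ as the block diagonal real matrix with $(d-2)/2$ copies of $C(\lambda)$ on the diagonal. By item (i) of \cref{lemma: representation of complex numbers}, each block has spectrum $\{\lambda,\bar\lambda\}$, so $\Sp(A') = \{\lambda,\bar\lambda\}\subset K$, and $A'\in\R^{(d-2)\times(d-2)}$ as required.

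\textbf{Main obstacle.} The only non-trivial point is the parity argument in Case 2: one must notice that the hypothesis $K\cap\R=\emptyset$ together with $\mathcal{M}_K\neq\emptyset$ forces $d$ to be even, which is precisely what rescues the construction when $d=3$ (in which case only Case 1 can occur). Everything else is a direct application of the block representation $C(\cdot)$.
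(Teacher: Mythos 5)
Your proof is correct and takes essentially the same route as the paper's: both rest on the fact that a real matrix of odd dimension must have a real eigenvalue (hence $K\cap\R\neq\emptyset$ in that situation), and both exhibit an explicit witness, either a real scalar/diagonal matrix from a real point of $K$ or a block-diagonal matrix built from the $2\times 2$ representation $C(\lambda)$. The only difference is organizational: you split on whether $K$ meets the real axis while the paper splits on the parity of $d$, and these two case analyses are logically equivalent.
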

\begin{proof}
We consider two cases, depending on the parity of $d$.
\begin{itemize}
    \item Assume that $d$ is odd. We show that this implies that $K$ intersects the real axis. Let $M$ be a matrix in $\mathcal{M}_K$ as it is non-empty by assumption. Then, as the dimension $d$ is odd, $M$ has at least one real eigenvalue, i.e.~$\Sp M \cap \R \neq \emptyset$. Hence, $K \cap \R \neq \emptyset$ and let $\nu K \cap \R$ be such an element. Then, the matrix $\diag(\nu, \dots,\nu)\in \R^{(d-2)\times (d-2)}$, which is the square diagonal matrix of size $d-2$ with only $\nu$ on its diagonal, belongs to $\{ A \in \mathbb{R}^{d-2} :\; \Sp(A) \subset K\}$ which proves the claim.
    \item Assume that $d$ is even.  As $\mathcal{M}_K \neq \emptyset$, $K \neq \emptyset$ and so take $\lambda \in K$. As $K$ is assumed to be symmetric w.r.t.~the real axis, $\bar \lambda$ belongs to $K$ too. As $d$ is even, we can then define the matrix $M = \diag(C(\lambda), \dots, C(\lambda)) \in \R^{(d-2) \times (d-2)}$ which a real block-diagonal matrix. Its spectrum is $\Sp M = \Sp C(\lambda) = \{\lambda, \bar \lambda\} \subset K$ so it proves the claim.
\end{itemize}
\end{proof}
\begin{proof}
We write this proof with $\omega^* = 0$ without loss of generality. Consider an oblivious first-order method, given by its sequence of polynomials $p_t \in \P_t$, $t\geq 0$. Fix $t \geq 0$ and take $\lambda \in \argmax_{z \in K}|p_t(z)|$. 

We now build $A \in \mathcal M _K$ which has $\lambda$ as an eigenvalue. First assume that $d \geq 3$. Then, by \cref{lemma: emptiness of problem class}, there exists $M \in \R^{(d-2) \times (d-2)}$ such that $\Sp M \subset K$. Now construct $A$ as,
\[
A = \left(
\begin{array}{c|c}
     C(\lambda) & 0_{2 \times (d -2)} \\ \hline
     0_{(d - 2) \times 2} & M \\ 
\end{array}
\right)\,.
\]
If $d = 2$, simply take $A = C(\lambda)$.

As $A$ is block-diagonal, $\Sp A = \Sp C(z) \cup \Sp M = \{\lambda, \bar \lambda\} \cup \Sp M$. By definition $\Sp M \subset K$ and, as $\lambda \in K$ and $K$ is symmetric w.r.t.~the real axis, $\{\lambda, \bar \lambda\} \subset K$ too. Hence $\Sp A \subset K$ and so $A \in \mathcal{M}_K$.

We now look at the iterates of the method applied to the vector field $x \mapsto Ax$ to prove the claim. As $\omega^* = 0$, $\|\omega_t - \omega^*\|_2 = \|\omega_t\|_2 = \|p_t(A) \omega_0\|_2$.

To explicit $p_t(A)$, we need to compute $p_t(C(\lambda))$. But, as $p_t$ is a real polynomial, by \cref{lemma: representation of complex numbers}, we have $p_t(C(\lambda)) = C(p_t(\lambda))$.
Hence, 
\[
p_t(A) = \left(
\begin{array}{c|c}
     C(p_t(\lambda)) & 0_{2 \times (d -2)} \\
     \hline
     0_{(d - 2) \times 2} & p_t(M) \\ 
\end{array}
\right)\,.
\]
Now take $\omega_0 = \begin{pmatrix} 1 &0& \dots & 0\end{pmatrix}^T$. Then $\|p_t(A)\omega_0\|^2 = (\Re (p_t(\lambda)))^2 + (\Im (p_t(\lambda)))^2 = |p_t(\lambda)|^2$ and so $\|p_t(A)\omega_0\| = \max_{z \in K}|p_t(z)| \|\omega_0\| \geq \acf(K)^t \|\omega_0\|$, which yields the result.

\end{proof}

\begin{rmk}[{Definition of the asymptotic convergence factor in the matrix iteration literature}]\label{rmk: definition acf litterature}
The original definitions of the asymptotic convergence factor for linear systems iterations and in particular the one in \citet{nevanlinnaConvergenceIterationsLinear1993} (which is called \emph{optimal reduction factor} in their work), are actually different from the one we presented here. Indeed, the authors work with complex numbers all along so they consider methods with potentially non-real coefficients. Hence, they define the asymptotic convergence factor as,
\begin{equation}
    \acf(K)' = \inf_{t>0} \min_{q_t\in \mathcal{Q}_t}  \max_{\lambda \in K}  \sqrt[t]{|q_t(\lambda)|} \,,
\end{equation}
where $\mathcal{Q}_t$ is the set of \emph{complex} polynomials $q_t$ of degree at most $t$ such that $q_t(0) = 1$.
However, for infinite $K$ which are symmetric w.r.t.~the real axis, these two definitions, the one with the complex polynomials and the one with the real polynomials, coincide, as, for all $t \geq 0$,
\begin{equation}
\min_{q_t\in \mathcal{Q}_t}  \max_{\lambda \in K}  {|q_t(\lambda)|} = \min_{p_t\in \mathcal{P}_t}  \max_{\lambda \in K}  {|p_t(\lambda)|}\,.    
\end{equation}
This is a consequence of the uniqueness of such minimizers, see \citet[Cor.~3.5.4]{nevanlinnaConvergenceIterationsLinear1993}.
\end{rmk}

The following lemma justifies our choice of spectral problem class for strongly monotone and Lipschitz vector fields.
\begin{lemma}\label{lemma: strongly monotone problem class}
Let $F:\R^d \rightarrow \R^d$ be a continuously differentiable vector field $\mu$-strongly monotone and $L$-Lipschitz. Then, for all $\omega \in \R^d$,
\begin{equation}
    \mu \leq \Re \lambda\,,\quad |\lambda| \leq L\,,\quad \forall \lambda \in \Sp \J_F(\omega)\,.
\end{equation}
\end{lemma}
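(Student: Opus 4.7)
The plan is to translate the two global assumptions on $F$ into pointwise inequalities on the Jacobian $\J_F(\omega)$, and then convert these into eigenvalue bounds via a complex-eigenvector computation.

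\textbf{Step 1: From Lipschitzness to a bound on the operator norm.} Since $F$ is continuously differentiable and $L$-Lipschitz, a standard argument (take $\omega' = \omega + t v$, divide by $t$ and let $t \to 0$) yields $\|\J_F(\omega) v\|_2 \leq L \|v\|_2$ for all $v \in \R^d$. Hence the operator norm satisfies $\|\J_F(\omega)\|_{\mathrm{op}} \leq L$, so the spectral radius is at most $L$, giving $|\lambda| \leq L$ for every $\lambda \in \Sp \J_F(\omega)$.

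\textbf{Step 2: From strong monotonicity to positive-definiteness of the symmetric part.} Applying $(F(\omega') - F(\omega))^T(\omega' - \omega) \geq \mu \|\omega' - \omega\|_2^2$ with $\omega' = \omega + t v$ and letting $t \to 0^+$ after dividing by $t^2$ gives $v^T \J_F(\omega) v \geq \mu \|v\|_2^2$ for all $v \in \R^d$. Equivalently, the symmetric part $S(\omega) := \tfrac{1}{2}(\J_F(\omega) + \J_F(\omega)^T) \succeq \mu \Id$.

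\textbf{Step 3: Passage to complex eigenvalues.} Let $\lambda \in \Sp \J_F(\omega)$ and pick an eigenvector $v \in \C^d \setminus \{0\}$ with $\J_F(\omega) v = \lambda v$. Taking the Hermitian inner product, $v^* \J_F(\omega) v = \lambda \|v\|_2^2$, and taking real parts,
\[
\Re(\lambda) \|v\|_2^2 = \Re\bigl(v^* \J_F(\omega) v\bigr) = v^* S(\omega) v \geq \mu \|v\|_2^2,
\]
where the last inequality uses Step 2 after writing $v = a + i b$ with $a, b \in \R^d$ and expanding $v^* S(\omega) v = a^T S(\omega) a + b^T S(\omega) b \geq \mu (\|a\|_2^2 + \|b\|_2^2) = \mu \|v\|_2^2$. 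Dividing by $\|v\|_2^2 > 0$ gives $\Re(\lambda) \geq \mu$, which combined with Step 1 concludes the proof.

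\textbf{Anticipated obstacle.} There is no real obstacle here; the only subtlety is the complex-eigenvector step, where one must be careful to pass from the real-bilinear-form inequality $v^T \J_F(\omega) v \geq \mu \|v\|_2^2$ (valid only for $v \in \R^d$) to the Hermitian-form version (valid for $v \in \C^d$) by decomposing $v$ into real and imaginary parts and exploiting that the symmetric part $S(\omega)$ is a real symmetric matrix.
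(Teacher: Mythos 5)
Your proof is correct and follows essentially the same route as the paper: differentiate the Lipschitz and strong-monotonicity inequalities along directions to get $\|\J_F(\omega)\|\leq L$ and $\tfrac{1}{2}(\J_F(\omega)+\J_F(\omega)^T)\succeq \mu \Id$, then evaluate the Hermitian form at a complex eigenvector to bound $\Re\lambda$. Your Step 3, which makes explicit the passage from the real quadratic-form bound to the Hermitian form via $v=a+ib$, is a detail the paper leaves implicit, but it is the same argument.
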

\begin{proof}
Fix $\omega \in \R^d$. 
The first step is standard, see \citet[Prop.~2.3.2]{facchineiFiniteDimensionalVariationalInequalities2003} for instance.
From the strong monotonicity and the Lipschitz assumptions, for any $\omega' \in \R^d$,
$$
(\omega - \omega')^T(F(\omega) - F(\omega')) \geq \mu \|\omega - \omega'\|^2\,,\quad \|F(\omega) - F(\omega')\| \leq L\|\omega - \omega'\|\,.
$$
Take $u \in \R^d$.
Letting $\omega' = \omega + t u$, dividing by, respectively, $t^2$ and $t$, and letting $t$ goes to zero yields, 
$$u^T\J_F(\omega)u \geq \mu\|u\|^2\,,\quad \|\J_F(\omega)u\|\leq L\|u\|\,.$$
From the second inequality, we get that $\|\J_F(\omega)\| \leq L$ and so the magnitudes of the eigenvalues of $\J_F(\omega)$ are bounded by $L$. From the first one, we get that $\mathcal{H}(\J_F(\omega)) \coloneqq \frac{\J_F(\omega) + \J_F(\omega)^T}{2} \succeq \mu I_d$.
Now, for $\lambda \in \Sp \J_F(\omega)$, and $v \in \C^d$ associated eigenvector with $\|v\| = 1$, then $\J_F(\omega)v = \lambda v$ and so $\lambda = \bar v^T \J_F(\omega) v$. In particular $\Re \lambda = \frac{\lambda + \bar \lambda}{2} = \bar v^T \mathcal{H}\left(\J_F(\omega)\right)v \geq \mu \|v\|^2 = \mu$, which yields the result.
\end{proof}
\begin{lemma}[{\citet[\S 6.2]{eiermannStudySemiiterativeMethods1985}; \citet[Example 3.8.2]{nevanlinnaConvergenceIterationsLinear1993}}]\label{lemma: optimality gradient method}
Let $K = \{z \in \C:|z - c| \leq r\}$ with $c > r > 0$.
Then, for all $t \geq 0$, the polynomial

\[
    p_t^*(z) = \left(1 - \frac{z}{c}\right)^t\,.
\]
is optimal, i.e.,
\[
p_t^* \in \argmin_{p_t \in \P_t} \max_{z \in K}|p_t(z)|\,.
\]
and so $\acf(K) = \frac{r}{c}$.
Moreover, the gradient method with step-size $1/c$ is optimal for $K$: for any vector field $F$ such that $\Sp \J_F(\omega^*) \subset K$, the gradient operator defined by
\begin{equation}
\omega_{t+1} = V_{\text{grad}}(\omega_t) = \omega_t - \eta F(\omega_t)\,,
\end{equation}
satisfy, for $\eta = \frac{1}{c}$,
\begin{equation}
\rho(\J_{V_{\text{grad}}}(\omega^*)) \leq \frac{r}{c}\,.
\end{equation}
\end{lemma}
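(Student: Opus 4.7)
The plan is to split the statement into two essentially independent parts: (i) showing that $p_t^\star(z) = (1-z/c)^t$ is the minimiser of $\max_{z \in K}|p_t(z)|$ over $\P_t$ with optimal value $(r/c)^t$, which simultaneously yields $\acf(K) = r/c$, and (ii) identifying this polynomial with the gradient method of step-size $1/c$ via the spectral mapping theorem.

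For the upper bound in (i), I would start by evaluating $|p_t^\star|$ on $K$. The maximum modulus principle says that the maximum of the holomorphic function $(1-z/c)^t$ over the closed disc $K$ is attained on its boundary, where $|1 - z/c| = |c-z|/c = r/c$. Hence $\max_{z \in K} |p_t^\star(z)| = (r/c)^t$.

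For the matching lower bound, I would use the substitution $w = (z-c)/r$, which maps the unit disc onto $K$ and sends the point $w_0 = -c/r$ (which lies outside the unit disc since $c>r$) to $0$. Let $q(w) \coloneqq p(c + rw)$ for any $p \in \P_t$; then $q$ is a polynomial of degree at most $t$ and the condition $p(0)=1$ becomes $q(-c/r)=1$. Setting $M \coloneqq \max_{|w|\le 1}|q(w)| = \max_{z \in K}|p(z)|$, the standard growth bound for polynomials outside the unit disc (obtained by applying the maximum modulus principle to the function $q(w)/w^t$, which is holomorphic on $|w|\ge 1$, continuous at infinity, and bounded by $M$ on the unit circle) yields
\[
|q(w)| \le M |w|^t \quad \text{for all } |w| \ge 1.
\]
Taking $w = -c/r$ gives $1 = |q(-c/r)| \le M (c/r)^t$, so $\max_{z \in K}|p(z)| \ge (r/c)^t$. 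This matches the upper bound, so $p_t^\star$ is optimal and $\min_{p_t \in \P_t}\max_{z\in K}|p_t(z)|^{1/t} = r/c$ for every $t \geq 1$, giving $\acf(K) = r/c$.

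For (ii), I would observe that for a linear vector field $F(\omega) = A(\omega-\omega^\star)$, the gradient method gives $\omega_{t+1} - \omega^\star = (I - \eta A)(\omega_t - \omega^\star)$, so the associated polynomial is precisely $(1 - \eta z)^t$, which for $\eta = 1/c$ is $p_t^\star$. For the nonlinear claim on $\rho(\J_{V_{\text{grad}}}(\omega^\star))$, I would write $\J_{V_{\text{grad}}}(\omega^\star) = I - \eta \J_F(\omega^\star)$ and apply the Spectral Mapping Theorem (\cref{thm: spectral mapping theorem}) with the polynomial $P(z) = 1 - z/c$ to conclude that every eigenvalue of $\J_{V_{\text{grad}}}(\omega^\star)$ has the form $1 - \lambda/c$ with $\lambda \in \Sp \J_F(\omega^\star) \subset K$; the disc condition $|\lambda - c| \le r$ then gives $|1-\lambda/c| \le r/c$, hence $\rho(\J_{V_{\text{grad}}}(\omega^\star)) \le r/c$.

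The main obstacle is the lower bound step: one must avoid the temptation to invoke the maximum modulus principle on $K$ directly (which fails since $0 \notin K$) and instead exploit that $p$ is a polynomial of bounded degree in order to control its value at the external point through the degree-$t$ growth estimate described above. Everything else is routine.
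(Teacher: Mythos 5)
Your proposal is correct, and the optimality part follows a genuinely different route from the paper. The paper argues by contradiction with Rouch\'e's theorem: if some $q_t \in \P_t$ were strictly better, then $|q_t| < |p_t^*|$ on all of $\partial K$, so $p_t^* - q_t$ would have as many zeros in $\interior K$ as $p_t^*$ (namely $t$, at $z=c$), which is impossible because $p_t^* - q_t$ has degree at most $t$ and already vanishes at $0 \notin K$. You instead normalize the disc to the unit disc via $w = (z-c)/r$, turn the constraint $p(0)=1$ into a value constraint at the exterior point $w_0 = -c/r$, and invoke the growth estimate $|q(w)| \le M|w|^t$ for $|w|\ge 1$ (maximum modulus applied to $q(w)/w^t$, holomorphic on the exterior including $\infty$ since $\deg q \le t$), which gives the explicit lower bound $\max_K|p| \ge (r/c)^t$ for every $p \in \P_t$; combined with your upper bound for $p_t^*$ this pins down the min--max value directly rather than by contradiction. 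Both arguments are sound and elementary; the paper's Rouch\'e argument is change-of-variable free and self-contained, while yours is quantitative, yields the optimal value in one stroke for each $t$, and is essentially the conformal-map/exterior-point viewpoint that the paper's appendix later formalizes through Green functions, so it fits naturally with the surrounding theory. Your treatment of the second half (gradient step) coincides with the paper's: $\J_{V_{\text{grad}}}(\omega^*) = I - \tfrac{1}{c}\J_F(\omega^*)$ plus the spectral mapping theorem and $|1-\lambda/c|\le r/c$ on $K$.
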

This result is only briefly discussed in the references above and as consequence of broader theories. For completeness and simplicity we give a simpler proof using Rouché's theorem. We recall a simplified version of this theorem, see \citet[Thm.~10.10]{bakComplexAnalysis2010} for a proof.
\begin{thm}[Rouché]
Let $f$ and $g$ be analytic functions, and $D = \{z \in \C\, |\, |z -z_c| < R\}$ for $z_C \in \C$ and $R > 0$. If for all $z \in \partial D$ the boundary of $D$ it holds that $|f(z)| > |g(z)|$, then the number of zeroes of $f - g$ inside $D$ (counted with multiplicity) is the same as the number of zeroes of $f$ inside $D$.  
\end{thm}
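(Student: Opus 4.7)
The plan is to prove Rouché's theorem by combining the argument principle with a homotopy/continuity argument. The key idea is to interpolate analytically between $f$ and $f - g$ and observe that the number of zeros inside $D$, viewed as a contour integral, varies continuously along the interpolation while remaining integer-valued, hence is constant.

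First, I would invoke the argument principle: for an analytic function $h$ on a neighbourhood of $\bar D$ with no zeros on $\partial D$, the total number of zeros of $h$ inside $D$, counted with multiplicity, equals $\frac{1}{2\pi i}\oint_{\partial D} \frac{h'(z)}{h(z)}\, dz$. This converts the discrete zero-count into an analytic contour integral that is amenable to perturbation arguments. (The argument principle itself is a standard consequence of the residue theorem and would be cited rather than reproved.)

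Second, I would introduce the one-parameter family $h_t(z) = f(z) - t g(z)$ for $t \in [0,1]$, so that $h_0 = f$ and $h_1 = f - g$. The strict hypothesis $|f(z)| > |g(z)|$ on $\partial D$ gives, by the reverse triangle inequality, $|h_t(z)| \geq |f(z)| - t|g(z)| \geq |f(z)| - |g(z)| > 0$ for every $z \in \partial D$ and every $t \in [0,1]$. Since $\partial D$ is compact and $|f|-|g|$ is continuous and strictly positive there, this lower bound is uniform in $(t, z)$. In particular, $h_t$ has no zeros on $\partial D$, so the argument principle applies to each $h_t$.

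Third, I would define $N(t) = \frac{1}{2\pi i}\oint_{\partial D} \frac{h_t'(z)}{h_t(z)}\, dz$ and argue that $N$ is continuous in $t$. The integrand is jointly continuous on $[0,1]\times\partial D$ since the denominator is uniformly bounded away from zero by the previous step and the numerator depends continuously on $t$; hence the integral is continuous in $t$ by a standard uniform-convergence (or dominated convergence) argument on the compact contour $\partial D$. Since $N:[0,1]\to\Z$ is integer-valued on a connected interval, it must be constant, so $N(0) = N(1)$. Applying the argument principle once more, $N(0)$ counts the zeros of $f$ in $D$ and $N(1)$ counts the zeros of $f-g$ in $D$, giving the claim.

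The main obstacle is the rigorous continuity of $N(t)$; this is routine but requires combining the uniform positivity of $|h_t(z)|$ on $\partial D$ with compactness in order to swap the limit in $t$ with the contour integral. A more geometric alternative bypasses this technicality: the hypothesis $|f|>|g|$ on $\partial D$ forces each loop $t \mapsto h_t(\gamma(s))$ to stay in $\C\setminus\{0\}$ as $s$ traces $\partial D$, so the family $\{h_t\circ\partial D\}_{t\in[0,1]}$ is a continuous homotopy of loops in $\C\setminus\{0\}$ and therefore has a common winding number around the origin; this winding number equals the zero count on each side by the argument principle, closing the argument.
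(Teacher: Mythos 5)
Your proof is correct. Note that the paper does not actually prove this statement: it explicitly defers to a textbook reference (Bak and Newman, Thm.~10.10) and only uses Rouch\'e's theorem as an imported tool in the proof of its Lemma on the optimality of the gradient method for discs. So there is no in-paper argument to compare against; what you have supplied is the standard homotopy proof, and it is sound. The three ingredients are all in place: the argument principle converts the zero count of $h_t = f - tg$ into the contour integral $N(t)$; the strict inequality $|f|>|g|$ on the compact set $\partial D$ gives a uniform positive lower bound on $|h_t|$ over $(t,z)\in[0,1]\times\partial D$, which both legitimizes the argument principle for every $t$ and yields continuity of $N$; and an integer-valued continuous function on $[0,1]$ is constant. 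The textbook proof usually takes the slightly different (but equivalent) route of writing $N(1)-N(0)$ as the winding number of the loop $s\mapsto 1 - g(\gamma(s))/f(\gamma(s))$, which lies in the disc $\{|w-1|<1\}$ and hence winds zero times around the origin; your homotopy version buys nothing extra here but is equally rigorous, provided the argument principle is granted. One small point of care: you should state explicitly that $f$ and $g$ are analytic on a neighbourhood of $\bar D$ (as the paper's application to polynomials trivially satisfies), since analyticity only on $D$ would not suffice for the boundary integral.
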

\begin{proof}[Proof of \cref{lemma: optimality gradient method}]
Let $p_t^*(z) = \left(1 - \frac{z}{c}\right)^t$ which belongs to $\P_t$. For the sake of contradiction assume that $p_t^*$ is not optimal, i.e.~there exists $q_t \in \P_t$ different from $p_t^*$ such that
\[
\max_{z \in K}|p_t^*(z)| > \max_{z \in K}|q_t(z)|\,,
\]
where $K$ was defined in the statement as $K = \{z \in \C:|z - c| \leq r\}$ with $c > r > 0$. Observe that $|p_t^*|$ reaches its maximum $\left(\frac{r}{c}\right)^t$ on $K$ everywhere on the boundary of $K$,
\[
\max_{z \in K}|p_t^*(z)| = \left(\frac{r}{c}\right)^t = |p_t^*(z_b)|\quad \forall z_b \in \partial K\,.
\]
\end{proof}
Hence, for all $z_b \in \partial K$,
\[
|q_t(z_b)| \leq \max_{z \in K}|q_t(z)| < \max_{z \in K}|p_t^*(z)| = |p_t^*(z_b)|\,.
\]
Therefore, as $q_t$ and $p_t^*$ are polynomials and in particular analytic, we can apply Rouché's theorem with $D = \interior K$ and this yields that the number of zeroes of $p_t^* - q_t$ in $\interior K$ is the same as the number of zeroes of $p_t^*$ in $\interior K$.  On the one hand, $c$, which belongs to the interior of $K$, is a zero of multiplicity $t$ of $p_t^*$. On the other hand, as $q_t$ and $p_t^*$ are in $\P_t$, they satisfy $p_t^*(0) = 1 = q_t(0)$ and so $(p_t^* - q_t)(0) = 0$. However, as $c > r$, $0$ is not in $K$. So, as $(p_t^* - q_t)$ is of degree at most $t$, it can have at most $t -1$ remaining zeroes (counted with multiplicity) in $ \interior K$. This contradicts the conclusion of Rouché's theorem that $p_t^* - q_t$ must have exaclty $t$ zeroes inside $K$. Therefore, there exists no such $q_t$ and so $p_t^* \in \argmin_{p_t \in \P_t} \max_{z \in K}|p_t(z)|$.

Moreover, this implies that $\min_{p_t \in \P_t} \max_{z \in K}|p_t(z)| = \left(\frac{r}{c}\right)^t$ and so that $\acf(K) = \frac{r}{c}$.

What is left to check is the bound $\rho(\J_V(\omega^*)) \leq \frac{r}{c}$. Recall that $V_{grad}(\omega) = \omega - \eta F(\omega)$ and so that $\J_{V_{\text{grad}}}(\omega) = I_d - \eta \J_F(\omega)$. By the spectral mapping theorem (\cref{thm: spectral mapping theorem}), \[
\Sp \J_{V_{\text{grad}}}(\omega) = \{ 1 -\eta \lambda\, |\, \lambda \in \Sp \J_F(\omega)\}\,.
\]
Letting $\eta = \frac{1}{c}$ and using that $\Sp \J_F(\omega) \subset K$ yields the result.

\clearpage
\section{Acceleration related proofs}
\subsection{Bilinear games}\label{subsection: app bilinear games}
We recall Polyak's theorem.
\begin{thm}[{\citet[Thm.~9]{polyakMethodsSpeedingConvergence1964}}]\label{thm: polyak momentum}
 Let $0 < \mu < L$. 
 Define Polyak's Heavy-ball method as
 \begin{equation}
 \omega_{t+1} = V^{\text{Polyak}}_{\alpha, \beta}(\omega_{t}, \omega_{t-1}) = \omega_t - \alpha F(\omega_t) + \beta(\omega_t - \omega_{t-1})\,.    
  \end{equation}
  For $\alpha = \frac{4}{(\sqrt{\mu} + \sqrt{L})^2}$ and $\beta = \left(\frac{\sqrt L - \sqrt \mu}{\sqrt L + \sqrt \mu}\right)^2$ and for any vector field $F$ such that $\Sp \nabla F(\omega^*) \subset [\mu, L]$, then
  \begin{equation}
      \rho(\nabla V^{\text{Polyak}}_{\alpha, \beta}(\omega^*, \omega^*)) \leq \acf([\mu, L])=\frac{\sqrt L - \sqrt \mu}{\sqrt L + \sqrt \mu}\,.
  \end{equation}
 \end{thm}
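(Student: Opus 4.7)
The plan is to reduce the theorem to a spectral calculation on an augmented operator and then to a case analysis of a quadratic characteristic equation. Since Polyak's Heavy-ball method is a two-step recurrence, I would first invoke \cref{lemma: local convergence momentum} and study the augmented operator $V^{\text{Polyak}}_{\text{augm}}$ whose state is the pair $(\omega_t,\omega_{t-1})$. Differentiating at the fixed point $(\omega^*,\omega^*)$, the Jacobian reads
\begin{equation*}
    \J_{V^{\text{Polyak}}_{\text{augm}}}(\omega^*,\omega^*) = \begin{pmatrix} (1+\beta) I_d - \alpha \J_F(\omega^*) & -\beta I_d \\ I_d & 0 \end{pmatrix}\,.
\end{equation*}

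Next, I would diagonalize (or triangularize) $\J_F(\omega^*)$ and exploit the block structure: for each eigenvalue $\lambda\in\Sp \J_F(\omega^*)\subset[\mu,L]$, the eigenvalues $z$ of the augmented Jacobian associated to $\lambda$ are precisely the roots of the scalar characteristic polynomial
\begin{equation*}
    z^2 - \bigl((1+\beta) - \alpha\lambda\bigr)\,z + \beta = 0\,.
\end{equation*}
So $\rho(\J_{V^{\text{Polyak}}_{\text{augm}}}(\omega^*,\omega^*)) = \max_{\lambda\in\Sp\J_F(\omega^*)} \max\{|z_+(\lambda)|,|z_-(\lambda)|\}$, where $z_\pm(\lambda)$ are the two roots above. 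It suffices therefore to bound these roots uniformly for $\lambda\in[\mu,L]$.

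The key step is a discriminant computation: I would show that the prescribed $(\alpha,\beta)$ make the discriminant $\bigl((1+\beta)-\alpha\lambda\bigr)^2 - 4\beta$ non-positive for every $\lambda\in[\mu,L]$. A direct calculation gives $1+\beta = \frac{2(L+\mu)}{(\sqrt L+\sqrt \mu)^2}$, so that the affine map $\lambda\mapsto(1+\beta)-\alpha\lambda$ sends $\mu$ to $2\sqrt\beta$ and $L$ to $-2\sqrt\beta$; by linearity it therefore sends the whole segment $[\mu,L]$ into $[-2\sqrt\beta,2\sqrt\beta]$, which is exactly the condition for the discriminant to be $\leq 0$. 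When the discriminant is non-positive, the two roots are complex conjugate and their common modulus equals $\sqrt{\beta}$ (the constant term of the monic quadratic); when $\lambda\in\{\mu,L\}$ the discriminant vanishes and the double real root has absolute value $\sqrt\beta$ as well. Consequently
\begin{equation*}
    \rho(\J_{V^{\text{Polyak}}_{\text{augm}}}(\omega^*,\omega^*)) \leq \sqrt{\beta} = \frac{\sqrt L - \sqrt \mu}{\sqrt L + \sqrt \mu} = \acf([\mu,L])\,,
\end{equation*}
using the rate recalled in \S\ref{subsection: lower bound strongly monotone games}.

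The main technical obstacle is the uniformity of the discriminant bound over all $\lambda\in[\mu,L]$: the choice of $(\alpha,\beta)$ is precisely the unique one that makes the affine image of $[\mu,L]$ fit exactly inside $[-2\sqrt\beta,2\sqrt\beta]$, and a priori one might need a careful minimax argument to discover it. Once the endpoint check above is done, however, linearity of $\lambda\mapsto (1+\beta)-\alpha\lambda$ upgrades the two-point verification to a uniform one for free, and nothing else is needed. A minor caveat is that $\J_F(\omega^*)$ is only assumed real with real spectrum, not necessarily diagonalizable; this is handled either by perturbation (the diagonalizable matrices are dense and $\rho$ is upper semicontinuous, but here continuity suffices since we only need an inequality) or by triangularizing and noting that the block characteristic polynomial of the augmented Jacobian still factorizes into the scalar quadratics above.
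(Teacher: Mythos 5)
Your proof is correct. Note, however, that the paper does not actually prove this statement: it is imported verbatim as \citet[Thm.~9]{polyakMethodsSpeedingConvergence1964}, so there is no internal proof to compare against line by line. What you give is the classical self-contained argument: augment the system, factor the characteristic polynomial of the block companion Jacobian into the scalar quadratics $z^2-((1+\beta)-\alpha\lambda)z+\beta$ for $\lambda\in\Sp\J_F(\omega^*)$, check that the prescribed $(\alpha,\beta)$ send the endpoints $\mu$ and $L$ to $\pm 2\sqrt\beta$ so the discriminant is nonpositive on all of $[\mu,L]$ by affinity, and conclude that every root has modulus $\sqrt\beta$; your computations ($1+\beta=\tfrac{2(L+\mu)}{(\sqrt L+\sqrt\mu)^2}$, the endpoint values, the double-root case) all check out, and your handling of possible non-diagonalizability is adequate since only the spectral radius is claimed. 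The paper's own machinery would instead obtain this as the degenerate case $b=0$ of its ellipse results: \cref{lemma: momentum convergence region} and \cref{prop: parametrizations ellipse} applied to $E\bigl(\tfrac{L-\mu}{2},0,\tfrac{L+\mu}{2}\bigr)$ reproduce exactly these $\alpha$, $\beta$ and the rate $\tfrac{\sqrt L-\sqrt\mu}{\sqrt L+\sqrt\mu}$, with \cref{prop: acf for momentum convergence region} supplying the matching value of $\acf$. The trade-off is the usual one: your route is elementary and verifiable in a few lines but specific to the real segment, whereas the paper's route leans on Niethammer--Varga's theory and buys the general elliptic case (of which this theorem, and the claim $\acf([\mu,L])=\tfrac{\sqrt L-\sqrt\mu}{\sqrt L+\sqrt\mu}$ that you, like the paper, cite rather than prove, are special cases).
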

 
In this subsection we first prove the following result.
\propBilinearOpt*
\begin{proof}
This proposition follows from \cref{thm: polyak momentum}. Indeed, the Jacobian of $V^{real}$ at $\omega^*$ is,
\begin{align*}
\J_{V^{real}}(\omega^*) &= \frac{1}{\eta}(\J_F(\omega^*) (\Id - \eta\J_F(\omega^*)) - \J_F(\omega^*))\\
&= -\J_F(\omega^*)^2\,,
\end{align*}
where we used $F(\omega^*) = \omega^*$.
Now, we can deduce the spectrum of $\J_{V^{real}}(\omega^*)$ from the one of $\J_F(\omega^*)$ using the spectral mapping theorem \cref{thm: spectral mapping theorem},
\begin{align*}
\Sp \J_{V^{real}}(\omega^*) &= \{-\lambda^2\ |\ \lambda \in \Sp \J_F(\omega^*)\}\\
&\subset \{-\lambda^2\ |\ \lambda \in \pm [ia, ib]\}\\
&\subset [a^2, b^2]\,.
\end{align*}
We can now apply Polyak's momentum method to $V^{real}$ and we get the desired bound on the spectral radius by \cref{thm: polyak momentum}, with $\alpha = \frac{4}{(a + b)^2}$ and $\beta = \left(\frac{b - a}{b + a}\right)^2$.
\end{proof}

We now prove the following lemma, in order to use \cref{prop: bilinear opt} on bilinear games. Note that as $A$ is square, $\sigma_{min}(A)^2$ and $\sigma_{max}(A)^2$ actually correspond to, respectively, the smallest and the largest eigenvalues of $AA^T$

\begin{lemma}\label{lemma: spectrum bilinear game}
Consider the bilinear game
\begin{equation}\label{eq: bilinear game}
    \min_{x \in \R^{m}} \max_{y \in \R^{m}} x^TA y + b^T x + c^Ty\,.
\end{equation}
Let $F : \R^{2m} \rightarrow \R^{2m}$ be the associated vector field. Then,
\begin{equation}
\Sp \nabla F(\omega^*) \subset [i\sigma_{min}(A), i\sigma_{max}(A)] \cup [- i\sigma_{min}(A), -i\sigma_{max}(A)]\,.
\end{equation}
\end{lemma}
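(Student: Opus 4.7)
The plan is to directly compute the Jacobian of $F$ and read off its spectrum. From the definition of the game \eqref{eq: bilinear game}, the losses for the two players are $l_1(x,y) = x^TAy + b^Tx + c^Ty$ and $l_2(x,y) = -l_1(x,y)$, so the associated vector field is
\begin{equation*}
    F(x,y) = \begin{pmatrix} \nabla_x l_1 \\ \nabla_y l_2 \end{pmatrix} = \begin{pmatrix} Ay + b \\ -A^T x - c \end{pmatrix}.
\end{equation*}
Since $F$ is affine, $\nabla F(\omega^*)$ is the constant block matrix $J = \begin{pmatrix} 0 & A \\ -A^T & 0 \end{pmatrix}$. In particular $J$ is skew-symmetric, so a soft argument already gives that $\Sp J \subset i\R$; the only content left is to identify the magnitudes.

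The key step is to square $J$:
\begin{equation*}
    J^2 = \begin{pmatrix} -AA^T & 0 \\ 0 & -A^TA \end{pmatrix}.
\end{equation*}
This is block-diagonal, and its spectrum is precisely $\{-\sigma_i(A)^2\}$, where the $\sigma_i$ are the singular values of $A$ (using that the nonzero eigenvalues of $AA^T$ and $A^TA$ coincide, and both equal the squared singular values). By the Spectral Mapping Theorem (\cref{thm: spectral mapping theorem}) applied to the polynomial $z \mapsto z^2$, every $\lambda \in \Sp J$ satisfies $\lambda^2 \in \Sp J^2 = \{-\sigma_i(A)^2\}$, which forces $\lambda = \pm i\sigma_i(A)$ for some $i$.

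Putting the two pieces together, $\Sp \nabla F(\omega^*) \subset \{\pm i\sigma_i(A)\} \subset [i\sigma_{\min}(A), i\sigma_{\max}(A)] \cup [-i\sigma_{\max}(A), -i\sigma_{\min}(A)]$, which is the claim. There is no real obstacle here: the computation of $J^2$ is one line, and the spectral mapping theorem does the rest; the only thing to be slightly careful about is that one should invoke spectral mapping in one direction only (every eigenvalue of $J$ squares to an eigenvalue of $J^2$), since the reverse inclusion is not needed and would be false when $A$ has a zero singular value.
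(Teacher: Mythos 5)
Your proof is correct, and it takes a slightly different route from the paper's. Both start identically: you compute $F(x,y)=(Ay+b,\,-A^Tx-c)$ and the constant Jacobian $J=\begin{pmatrix}0 & A\\ -A^T & 0\end{pmatrix}$. The paper then computes the characteristic polynomial of $J$ directly, using the commuting-blocks determinant identity to get $\det(XI_{2m}-J)=\det(X^2I_m+AA^T)$, which identifies $\Sp J=\{\pm i\lambda \,:\, \lambda^2\in\Sp AA^T\}$ exactly. You instead square $J$ to obtain the block-diagonal matrix $\diag(-AA^T,-A^TA)$ and invoke the spectral mapping theorem (\cref{thm: spectral mapping theorem}) in the forward direction, so every eigenvalue of $J$ squares to some $-\sigma_i(A)^2$ and is hence $\pm i\sigma_i(A)$; this is all the lemma requires, since only an inclusion is claimed. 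The trade-off is minor: the paper's determinant computation yields the spectrum with its $\pm$ pairing for free, while your argument avoids the block-determinant formula at the cost of only giving the inclusion, and your preliminary skew-symmetry remark is actually redundant once $\lambda^2=-\sigma_i^2$ is known. One small inaccuracy in your closing aside: the reverse implication (that every $\pm i\sigma_i(A)$ is an eigenvalue of $J$) is in fact true here, since $J$ is real so its non-real eigenvalues come in conjugate pairs and the zero case is immediate; but as you correctly note, this direction is never used, so it does not affect the proof.
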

\begin{proof}
We have $F(\omega) = \begin{pmatrix} Ay + b\\
- A^Tx - c
\end{pmatrix}$ and so
$$
\nabla F(\omega) = \begin{pmatrix}
0_{m \times m} & A\\
 - A^T & 0_{m \times m}
\end{pmatrix}\,.
$$
We compute the characteristic polynomial of $\nabla F(\omega)$ using the bloc determinant formula, which can be found in \citet[Section 0.3]{zhangSchurComplementIts2005}, as $A^T$ and $I_m$ commute,
\begin{align*}
    \det(XI_{2m} - A) &= \begin{vmatrix}
XI_{m} &  - A\\
 A^T & XI_{m}
\end{vmatrix}\\
&= \det(X^2I_m + AA^T)\,.
\end{align*}
Hence, $\Sp \nabla F(\omega) = \{ \pm i \lambda\, |\, \lambda^2 \in \Sp AA^T\}$ which gives the result.
\end{proof}

We now prove the optimality of this method. For this we rely on \citet[Thm.~6]{eiermannHybridSemiIterativeMethods1989}, that we state below for completeness.
\begin{thm}[{\citet[Thm.~6]{eiermannHybridSemiIterativeMethods1989}}] \label{thm:eiermann}
    Let $\Omega \subset \C$ be a compact set such that $0 \notin \Omega$, $\Omega$ has no isolated points and $\C\cup{\infty}\setminus \Omega$ is of finite connectivity. Consider $t_n$ polynomial of degree $n$ such that $t_n(0) = 0$ and define $\tilde \Omega = t_n(\Omega)$.
    If, $t_n^{-1}(\tilde\Omega) = \Omega$, then we have,
    \[
        \acf(\Omega) = \acf(\tilde \Omega)^{1/n}
    \]
\end{thm}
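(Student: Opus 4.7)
The plan is to combine an elementary upper bound via polynomial composition with a potential-theoretic lower bound via Green functions. The asymptotic convergence factor $\acf(K)$ has a classical potential-theoretic interpretation that behaves very naturally under polynomial preimages, and this is what is ultimately responsible for the $1/n$ exponent.

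First I would establish the easy inequality $\acf(\Omega) \leq \acf(\tilde \Omega)^{1/n}$. Given any $q \in \P_k$, the composition $p(z) \coloneqq q(t_n(z))$ is a polynomial of degree at most $nk$, and $p(0) = q(t_n(0)) = q(0) = 1$ since $t_n(0) = 0$, so $p \in \P_{nk}$. The degree bound together with $t_n(\Omega) = \tilde \Omega$ gives $\max_{z \in \Omega}|p(z)| = \max_{w \in \tilde \Omega}|q(w)|$. Taking the $nk$-th root and passing to the infimum over $k$ in the definition of $\acf$ yields the desired upper bound. Note that only the inclusion $t_n(\Omega) \subseteq \tilde \Omega$ is used in this step; the preimage hypothesis $t_n^{-1}(\tilde \Omega) = \Omega$ will be essential for the reverse inequality.

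For the matching lower bound, I would invoke Walsh's identification of $\acf$ with the Green function. Under the regularity hypotheses of the theorem (compact $\Omega$, $\C_\infty \setminus \Omega$ of finite connectivity, no isolated points, $0 \notin \Omega$), the complement $\C_\infty \setminus \Omega$ is regular for the Dirichlet problem and admits a unique Green function $g_\Omega(\cdot, \infty)$: harmonic on $\C_\infty \setminus \Omega$, vanishing on $\partial \Omega$, with the asymptotics $g_\Omega(z, \infty) = \log|z| + O(1)$ as $z \to \infty$. Walsh's theorem then identifies $\acf(\Omega) = \exp(-g_\Omega(0, \infty))$, and likewise for $\tilde \Omega$. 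The key step is the pullback identity: the candidate function $z \mapsto \tfrac{1}{n} g_{\tilde \Omega}(t_n(z), \infty)$ is harmonic on $\C_\infty \setminus t_n^{-1}(\tilde \Omega) = \C_\infty \setminus \Omega$, vanishes on $\partial \Omega = t_n^{-1}(\partial \tilde \Omega)$, and has asymptotics $\tfrac{1}{n}\log|t_n(z)| + O(1) = \log|z| + O(1)$ at infinity because $t_n$ has degree $n$. By uniqueness of the Green function,
\[
g_\Omega(z, \infty) = \tfrac{1}{n} g_{\tilde \Omega}(t_n(z), \infty).
\]
Evaluating at $z = 0$ and using $t_n(0) = 0$ gives $g_\Omega(0) = \tfrac{1}{n} g_{\tilde \Omega}(0)$, and exponentiating yields $\acf(\Omega) = \acf(\tilde \Omega)^{1/n}$ as required.

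The main obstacle is the Walsh-type formula $\acf(K) = \exp(-g_K(0, \infty))$: strictly speaking this is the complex-polynomial version of the asymptotic convergence factor, so one must check that for sets $K$ symmetric about the real axis (which $\Omega$ and $\tilde \Omega$ can be taken to be, since the paper works in the real-coefficient setting and $t_n$ is real), the real and complex versions of $\acf$ coincide, as noted in Remark~\ref{rmk: definition acf litterature}. The regularity hypotheses (finite connectivity, no isolated points) are exactly what is required to ensure the Green function has continuous zero boundary values, so that the uniqueness argument applies and Walsh's identification is valid. An elementary alternative to the potential-theoretic lower bound would use the fiber-product polynomial $Q(w) = \prod_{i=1}^n p(z_i(w))$ over the $n$ preimages of $w$ under $t_n$, but controlling $|Q(0)|$ from below uniformly in $\deg p$ is delicate, so I would favor the Green-function route.
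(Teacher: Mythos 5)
The paper does not actually prove this statement: it is imported as a black box from \citet[Thm.~6]{eiermannHybridSemiIterativeMethods1989} and invoked only in the proof of \cref{prop: kappa bilinear}, so there is no in-paper argument to compare against. Your proof is correct and is essentially the standard potential-theoretic one; moreover it is fully consistent with the machinery the paper deploys elsewhere — the identification $\acf(K)=\exp(-g(0))$ is exactly \cref{thm: link asf green function}, and the pullback-of-Green-functions device is the same idea the paper uses in \cref{section: ellipses}, there via the explicit Joukowsky map rather than a general $t_n$. Two small points would make it airtight. First, the Green function and Walsh's formula live on the \emph{unbounded component} $G_\infty$ of the complement, so you should record that $t_n$ maps $G_\infty(\Omega)$ into $G_\infty(\tilde\Omega)$: the image of $G_\infty(\Omega)$ is connected, contains $\infty$, and avoids $\tilde\Omega$ because $t_n^{-1}(\tilde\Omega)=\Omega$ forces $t_n(z)\notin\tilde\Omega$ for $z\notin\Omega$. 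You should also note that $0\in G_\infty$ for both sets (note $0\notin\tilde\Omega$ is automatic from $t_n(0)=0$ and the preimage hypothesis); if $0$ sat in a bounded component of either complement, the maximum principle would force the corresponding $\acf$ to equal $1$. Second, your elementary composition argument for $\acf(\Omega)\le\acf(\tilde\Omega)^{1/n}$ is fine but is subsumed by the Green-function identity, which yields equality in one stroke; it is still worth keeping since it isolates the direction that needs no potential theory. The real-versus-complex caveat you raise is indeed settled by \cref{rmk: definition acf litterature}.
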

\propKappaBilinear*
\begin{proof}
We use \cref{thm:eiermann} with $\Omega = \pm [ia, ib]$, $t_2(X) = -X^2$ and $\tilde \Omega = [a^2, b^2]$.
We get,
\begin{align*}
\acf(\underbrace{\pm [ia, ib]}_{=K}) = \acf([a^2, b^2])^{1/2} = \sqrt\frac{b - a}{b + a}\,.
\end{align*}
\end{proof}
\subsection{Ellipses}\label{subsection: app ellipses}
Define, for $a, b, c \geq 0$, the ellipse
\begin{equation}
    E(a, b, c) = \left\{\lambda \in \C: \frac{(\Re \lambda - c)^2}{a^2} + \frac{(\Im \lambda)^2}{b^2} \leq 1\right\}\,.
\end{equation}
As mentioned earlier, we work with shapes symmetric w.r.t.~the real axis and in $\C_+$ (the set of complex number with non-negative  real part). So the ellipses we consider have their center
on the positive real axis and we we will require below that $0 \notin E(a, b, c)$.
Ellipses have been studied in the context of matrix iteration, due to their flexibility and their link to the momentum method. The next theorem can be considered as a summary and reinterpretation of the literature on the subject. The way to obtain it from the literature, and a partial proof, are deferred to the \Cref{section: ellipses}.
\begin{thm}\label{thm: ellipses}
Let $a, b\geq 0$, $c > 0$, $(a, b) \neq 0$, such that $0 \notin E(a, b, c)$. Then, if $\rho(a, b, c) < 1$, 
\begin{equation}
\acf(E(a, b, c)) = \rho(a, b, c) 
\end{equation}
where
\begin{equation}
    \rho(a, b, c) = 
    \begin{cases}
        \frac{a}{c} &\text{ if } a = b\\
        \frac{c - \sqrt{b^2 + c^2 - a^2}}{a -b} &\text{ otherwise}
    \end{cases}
\end{equation}
Assume that $F$ is any vector  field $F: \R^d \rightarrow \R^d$ that satisfies $\Sp \nabla F(\omega^*) \subset E(a, b, c)$. There exists $\alpha(a, b, c) > 0$, $\beta(a, b, c) \in (-1, 1]$, whose sign%
is the same as $a - b$ such that, for the momentum operator $V : \R^d\times\R^d \rightarrow \R^d\times\R^d$,
\begin{equation}
    V(\omega,\omega') = (\omega - \alpha F(\omega) + \beta(\omega - \omega'), \omega'),
\end{equation}
we have
\begin{equation}
    \rho(\J_V(\omega^*, \omega^*)) \leq \rho(a, b, c)\,.
\end{equation}
More exactly the corresponding parameters are given by,
    \begin{equation}
    \beta(a, b, c) = 
    \begin{cases}
    0 &\text{ if } a=b\\
    2c\frac{c - \sqrt{c^2 + b^2 -  a^2}}{a^2 - b^2} - 1 &\text{ otherwise,}
    \end{cases}
    \quad\quad\quad \alpha(a, b, c) = \frac{1+\beta}{c} = 
        \begin{cases}
    \frac{1}{c} &\text{ if } a=b\\
    2\frac{c - \sqrt{c^2 + b^2 -  a^2}}{a^2 - b^2} &\text{ otherwise,}
    \end{cases}
    \end{equation}
    and $\beta(a, b, c)$ can be written $\beta = \chi(a, b, c)(a - b)$ with $\chi(a, b, c) > 0$.
\end{thm}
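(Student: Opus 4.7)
The plan is to reduce the theorem to the classical theory of minimax polynomials on ellipses, as developed by \citet{manteuffelTchebychevIterationNonsymmetric1977} and \citet{niethammerAnalysisOfkstepIterative1983}, and then identify Polyak's momentum iteration with the associated Chebyshev three-term recurrence. For the asymptotic convergence factor, I first reduce to a canonical ellipse via the affine change of variable $\mu = (c-\lambda)/f$, where $f = \sqrt{a^2-b^2}$ is the focal distance (purely imaginary when $a<b$). This sends $E(a,b,c)$ to an ellipse with foci $\pm 1$ and sends $\lambda = 0$ to $\mu_0 = c/f$, which lies outside the target ellipse precisely when $0 \notin E(a,b,c)$. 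A classical result in complex approximation theory then asserts that the minimax polynomial problem $\min_{p_t \in \P_t}\max_{\lambda \in E(a,b,c)}|p_t(\lambda)|$ is solved (up to lower-order terms) by $p_t^*(\lambda) = T_t\bigl((c-\lambda)/f\bigr)/T_t(c/f)$, where $T_t$ is the Chebyshev polynomial of the first kind.

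Using the Joukowski parametrization $\mu = \tfrac{1}{2}(w+w^{-1})$ together with the asymptotics $|T_t(\mu)| \sim \tfrac{1}{2}|w|^t$ for $|w|\geq 1$, the ellipse corresponds to $|w| = a+b$ while $\mu_0$ corresponds to $|w_0| = c + \sqrt{c^2+b^2-a^2}$. Taking $t$-th roots yields
\[
    \acf(E(a,b,c)) = \frac{a+b}{c+\sqrt{c^2+b^2-a^2}},
\]
and rationalizing this fraction gives exactly the expression for $\rho(a,b,c)$ in the theorem, with the circular case $a=b$ falling out as a limit (recovering \cref{lemma: optimality gradient method}). The hypothesis $0 \notin E(a,b,c)$ guarantees $|w_0| > a+b$, and hence $\rho(a,b,c) < 1$.

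For the momentum method, I exploit the three-term Chebyshev recurrence $T_{t+1}(\mu) = 2\mu T_t(\mu) - T_{t-1}(\mu)$: pulled back through the affine variable change and normalized by $T_t(c/f)$, the recurrence becomes, in the stationary large-$t$ limit, $p_{t+1}(\lambda) = (1+\beta-\alpha\lambda)\, p_t(\lambda) - \beta\, p_{t-1}(\lambda)$. This matches exactly the polynomial form of Polyak's momentum applied to the linearization $F(\omega) \approx \J_F(\omega^*)(\omega-\omega^*)$ (per \cref{lemma: first-order method}). Identifying the stationary coefficients forces $\sqrt{\beta} = \rho(a,b,c)$ and $\alpha = (1+\beta)/c$, which after algebraic simplification produces the stated formulas for $\alpha$ and $\beta$; the factorization $\beta = \chi(a,b,c)(a-b)$ with $\chi > 0$ then comes out of rationalizing the numerator $c - \sqrt{c^2+b^2-a^2}$.

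Finally, to convert this polynomial bound into the claimed spectral radius bound on $\J_V(\omega^*,\omega^*)$, I use the block-companion form of the augmented Jacobian: each $\lambda \in \Sp \J_F(\omega^*)$ contributes the two roots of $x^2 - (1+\beta-\alpha\lambda)x + \beta = 0$ to the spectrum of $\J_V(\omega^*,\omega^*)$. The main obstacle is the converse spectral calculation: showing that with the stated $\alpha, \beta$, both roots satisfy $|x| \leq \sqrt{\beta}$ for every $\lambda \in E(a,b,c)$. The cleanest route is to parametrize the stability boundary by $x = \sqrt{\beta}\, e^{\pm i\theta}$ and verify, via Vieta, that the corresponding curve in $\lambda$ traces out $\partial E(a,b,c)$ exactly. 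This is the classical "momentum stability ellipse" characterization underlying \citet{niethammerAnalysisOfkstepIterative1983}, and it is precisely the step that singles out ellipses (versus arbitrary convex compacts) as the shapes for which stationary momentum is an optimal first-order method.
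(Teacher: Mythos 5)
Your overall route --- Chebyshev/Joukowski asymptotics to compute $\acf(E(a,b,c))$, a stationary limit of the Chebyshev three-term recurrence to identify the momentum parameters, and a root analysis of $x^2-(1+\beta-\alpha\lambda)x+\beta$ for the augmented Jacobian --- is viable and close in spirit to the sources the paper leans on; the paper instead works with the $\rho$-convergence region $S(\alpha,\beta,\rho)$ of momentum, uses its known ellipse characterization, solves for the unique $(\alpha,\beta,\rho)$ representing $E(a,b,c)$ (\cref{prop: parametrizations ellipse}), and proves the optimality $\acf(S(\alpha,\beta,\rho))=\rho$ by an explicit Green-function construction with the inverse Joukowski map. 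However, two of your key identifications are wrong as stated. First, the stationary limit of the scaled Chebyshev recurrence does \emph{not} force $\sqrt{\beta}=\rho(a,b,c)$: writing $s=\sqrt{c^2+b^2-a^2}$, the correct limit gives $\beta = \frac{c-s}{c+s} = \rho(a,b,c)^2\,\frac{a-b}{a+b}$, which equals $\rho^2$ only in the degenerate segment case $b=0$. Your claim is internally inconsistent with the formulas you then assert: e.g.\ for $a=2$, $b=1$, $c=3$ one has $\rho^2\approx 0.30$ while the theorem's $\beta\approx 0.10$, so "$\sqrt\beta=\rho$" and "after algebraic simplification produces the stated formulas" cannot both hold.

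Second, your concluding verification parametrizes the stability boundary by $x=\sqrt{\beta}\,e^{\pm i\theta}$. If both roots have modulus $\sqrt\beta$, Vieta gives $1+\beta-\alpha\lambda = 2\sqrt{\beta}\cos\theta$, so $\lambda$ traces the real focal segment of the ellipse, not $\partial E(a,b,c)$; this again conflates the segment case with the genuine two-dimensional ellipse. The correct boundary locus is where the \emph{larger} root equals $\rho e^{i\theta}$ (the other being $\tfrac{\beta}{\rho}e^{-i\theta}$), so that $1+\beta-\alpha\lambda = \rho e^{i\theta}+\tfrac{\beta}{\rho}e^{-i\theta}$ and $\lambda$ describes an ellipse with center $\tfrac{1+\beta}{\alpha}$ and semiaxes $\tfrac{1}{\alpha}\bigl(\rho+\tfrac{\beta}{\rho}\bigr)$ and $\tfrac{1}{\alpha}\bigl(\rho-\tfrac{\beta}{\rho}\bigr)$; matching these to $c$, $a$, $b$ is exactly the system the paper solves and is what yields the stated $\alpha$, $\beta$ together with $\beta=\chi(a,b,c)(a-b)$, $\chi>0$. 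Finally, note that your appeal to a "classical result" that scaled Chebyshev polynomials solve the minimax problem on the ellipse up to lower-order terms is precisely the lower-bound half of $\acf(E(a,b,c))=\rho(a,b,c)$ (exact optimality of Chebyshev polynomials on ellipses is false in general; only the asymptotic statement holds), and this is the substantive step the paper establishes independently via $\acf(K)=\exp(-g(0))$ in \cref{thm: link asf green function} and \cref{prop: acf for momentum convergence region 2}; it should be either cited in that asymptotic form or proved, not treated as routine.
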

\begin{rmk}[On the sign of the momentum parameter]
As briefly mentioned in the theorem, and detailed in \cref{prop: parametrizations ellipse}, the optimal momentum parameter $\beta(a, b, c)$ has the same sign as $a - b$, i.e., more exactly, there exists $\chi(a, b, c) > 0$ such that $\beta(a, b, c) = \chi(a, b, c)(a -b)$. Hence the sign of the optimal $\beta$ has a nice geometric interpretation, which answers some of the questions left open by \citet{gidelNegativeMomentumImproved2018b}. 
\begin{itemize}
    \item In the case where $a>b$, or equivalently $\beta > 0$, the ellipse is more elongated in the direction of the real axis. The extreme case is a segment on the real line that corresponds to strongly convex optimization.
    \item In the case where $a<b$,  or equivalently $\beta < 0$, the ellipse is more elongated in the direction of the imaginary axis.
    \item Finally, when $a=b$ we have a disk instead of an ellipse. For such shape, we have no momentum, which means that gradient descent is optimal as seen in \cref{subsection: lower bound strongly monotone games}.
\end{itemize}
\end{rmk}
\subsection{Perturbed acceleration}
In this subsection, we prove \cref{prop: perturbed acceleration}. Note that the constants in the $\bigO(.)$ are absolute.
\begin{prop}\label{prop: perturbed acceleration 2}
Define $\epsilon(\mu, L)$ as $\frac{\epsilon(\mu, L)}{L} = \left(\frac{\mu}{L}\right)^{\theta}$ with $\theta > 0$ and $a \wedge b = \min(a,b)$. Then, when $\frac{\mu}{L} \rightarrow 0$, 
\begin{equation}
\acf(K_{\epsilon}) = \begin{cases}
 1 - 2\sqrt{\frac{\mu}{L}} + \bigO\left(\left(\frac{\mu}{L}\right)^{\theta \wedge 1}\right),& if \; \theta > \frac{1}{2}\\
 1 - 2(\sqrt 2 - 1)\sqrt{\frac{\mu}{L}} + \bigO\left(\left(\frac{\mu}{L}\right)\right),& if \; \theta = \frac{1}{2}\\
1 - \left(\frac{\mu}{L}\right)^{1-\theta} + \bigO\left(\left(\frac{\mu}{L}\right)^{1 \wedge (2 - 3\theta)}\right),& if \; \theta < \frac{1}{2}.
    \end{cases} \notag
\end{equation}
Moreover, the momentum method is optimal for $K_\epsilon$. This means
that with $\alpha\left(\frac{L-\mu}{2}, \epsilon, \frac{L + \mu}{2}\right) > 0$ and $\beta\left(\frac{L-\mu}{2}, \epsilon, \frac{L + \mu}{2}\right) > 0$ (as defined in \cref{thm: ellipses}),  if $\Sp \J_F(\omega^*) \subset K_\epsilon$, then,
$\rho(\J_{V^{\text{Polyak}}}(\omega^*, \omega^*)) \leq \acf(K_\epsilon)$.
\end{prop}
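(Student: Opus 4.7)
The plan is to recognize $K_\epsilon$ as an ellipse symmetric about the real axis and reduce the computation of $\acf(K_\epsilon)$ to an asymptotic expansion of the closed-form expression from \cref{thm: ellipses}. Concretely, writing $a = (L-\mu)/2$, $b = \epsilon$ and $c = (L+\mu)/2$, we have $K_\epsilon = E(a,b,c)$, which is contained in $\mathbb C_+$ (so $0 \notin E(a,b,c)$) whenever $\mu > 0$. \cref{thm: ellipses} then gives both that $\acf(K_\epsilon) = \rho(a,b,c)$ and that Polyak momentum with step-size $\alpha(a,b,c)$ and momentum $\beta(a,b,c)$ achieves this rate on any $F$ with $\Sp \J_F(\omega^*)\subset K_\epsilon$. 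This immediately handles the ``moreover'' part of the proposition, so the entire content is to expand $\rho(a,b,c)$.

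Setting $\kappa = \mu/L$ and $\epsilon = L\kappa^\theta$, the key algebraic simplification I would use is $c^2 - a^2 = ((L+\mu)/2)^2 - ((L-\mu)/2)^2 = \mu L = L^2\kappa$, so that $b^2 + c^2 - a^2 = L^2(\kappa^{2\theta}+\kappa)$. After factoring out $L$ the formula from \cref{thm: ellipses} becomes
\begin{equation*}
    \rho(a,b,c) \;=\; \frac{(1+\kappa)/2 \;-\; \sqrt{\kappa^{2\theta} + \kappa}}{(1-\kappa)/2 \;-\; \kappa^\theta}\,.
\end{equation*}
The three regimes in the statement correspond exactly to the three ways the two terms in $\kappa^{2\theta}+\kappa$ can dominate: for $\theta>1/2$ the linear term dominates and $\sqrt{\kappa^{2\theta}+\kappa} = \sqrt{\kappa}\,(1 + O(\kappa^{2\theta-1}))$; for $\theta = 1/2$ the square root equals $\sqrt{2\kappa}$ exactly; for $\theta < 1/2$ the power term dominates and $\sqrt{\kappa^{2\theta}+\kappa} = \kappa^\theta(1 + \tfrac12\kappa^{1-2\theta} + O(\kappa^{2-4\theta}))$.

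In each regime I would expand numerator and denominator separately, then use $1/(1-x) = 1 + x + O(x^2)$ to divide. For $\theta > 1/2$ the denominator contributes the leading correction, giving $\rho = 1 - 2\sqrt{\kappa} + O(\kappa^{\theta \wedge 1})$, where the two candidate error terms are $2\kappa^\theta$ from the denominator expansion and $O(\kappa)$ from the $\kappa/2$ piece of the numerator. For $\theta = 1/2$ it is a direct Taylor expansion of $(1 - 2\sqrt 2\sqrt\kappa + \kappa)/(1 - 2\sqrt\kappa - \kappa)$, and the $\sqrt\kappa$ coefficient $-2(\sqrt 2 - 1)$ appears from the combination $-2\sqrt 2 + 2$. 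For $\theta < 1/2$ the leading correction comes from the numerator via $-\tfrac12\kappa^{1-\theta}$, yielding $\rho = 1 - \kappa^{1-\theta} + \text{err}$, where after expanding $1/((1-2\kappa^\theta-\kappa)/2)$ as $2 + 4\kappa^\theta + O(\kappa^{2\theta})$ the two candidate error sources are the $O(\kappa^{2-3\theta})$ piece of the square root and the $O(\kappa)$ piece of the numerator, yielding $O(\kappa^{1\wedge(2-3\theta)})$ as claimed; cross terms such as $\kappa\cdot\kappa^\theta$ and $\kappa^{1-\theta}\cdot\kappa^{2\theta}$ are all dominated by one of these two.

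The main obstacle will be careful bookkeeping of error terms in the third regime, since multiple powers of $\kappa$ ($\kappa^{2-3\theta}$, $\kappa$, $\kappa^{1+\theta}$, $\kappa^{1-\theta+2\theta}$, $\ldots$) all appear at subleading order and one has to verify that nothing beats $\kappa^{1\wedge(2-3\theta)}$; a clean way is to expand $\rho - 1$ as a single fraction and bound the numerator directly. All the algebra is elementary once the substitution $\kappa = \mu/L$, $\epsilon = L\kappa^\theta$ is made, and the optimality claim is inherited verbatim from \cref{thm: ellipses} with the explicit parameters $\alpha\big(\tfrac{L-\mu}{2},\epsilon,\tfrac{L+\mu}{2}\big)$ and $\beta\big(\tfrac{L-\mu}{2},\epsilon,\tfrac{L+\mu}{2}\big)$; positivity of $\beta$ follows from $a - b = (L-\mu)/2 - \epsilon > 0$ for small enough $\kappa$ in all three regimes.
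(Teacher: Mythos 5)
Your proposal is correct and follows essentially the same route as the paper's proof: apply \cref{thm: ellipses} with $a=\tfrac{L-\mu}{2}$, $b=\epsilon$, $c=\tfrac{L+\mu}{2}$ (which gives the optimality of momentum verbatim), use $c^2-a^2=\mu L$, substitute $t=\mu/L$, and expand in the three regimes. The one subtlety you flag in the $\theta<\tfrac12$ case—that a naive separate expansion of numerator and reciprocal denominator leaves an apparent $O(t^{2\theta})$ term unless the pure $t^{j\theta}$ powers are seen to cancel—is resolved exactly as you suggest, and as the paper does, by expanding $1-\acf(K_\epsilon)$ as the single fraction $2\bigl(\sqrt{t^{2\theta}+t}-t^\theta-t\bigr)/\bigl(1-t-2t^\theta\bigr)$.
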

where $K_\epsilon$ is the ellipse defined by,
\begin{equation}
K_\epsilon = \Bigg\{z \in \C: \bigg(\frac{\Re z - \frac{\mu + L}{2}}{\frac{L-\mu}{2}}\bigg)^2 + \left(\frac{\Im z}{\epsilon}\right)^2 \leq 1\Bigg\}\,.
\end{equation}
\begin{proof}
A direct application of \cref{thm: ellipses} using $K_\epsilon$ gives $\acf(K_\epsilon) = \rho(\frac{L - \mu}{2}, \epsilon(\mu, L), \frac{L+\mu}{2})$.\\
We now study $\rho(\frac{L - \mu}{2}, \epsilon(\mu, L), \frac{L+\mu}{2})$. First note that
\begin{align*}
\rho(a, b, c) = 1 - \frac{a - b - c + \sqrt{b ^ 2 + c^2 - a^2}}{a - b}\,, 
\end{align*}
and so
\begin{align*}
1 - \rho(a, b, c) = \frac{\sqrt{b ^ 2 + c^2 - a^2} + a - b - c}{a - b}\,.
\end{align*}
We now replace $a$, $b$ and $c$ by their expressions (and multiply the denominator and numerator by 2).
\begin{align*}
1 - \acf(K_\epsilon)
&= \frac{\sqrt{4\epsilon^2 + (L+\mu)^2 - (L - \mu)^2} + (L - \mu)- 2\epsilon - (L + \mu)}{L - \mu - 2\epsilon}\\
&= 2\frac{\sqrt{\epsilon^2 + \mu L} - 2\epsilon - 2\mu}{L - \mu - 2\epsilon}\,.\\
\end{align*}
Define $t = \frac{\mu}{L}$, then $\frac{\epsilon}{L} = t^\theta$. We are now interested in studying the behaviour of $1 - \acf(K_\epsilon)$ when $t$ goes to zero.
\begin{align*}
1 - \acf(K_\epsilon)
&= 2\frac{\sqrt{t^{2\theta} + t} - 2t^\theta - 2t}{1 - t - 2t^\theta}\\
&= 2(\sqrt{t^{2\theta} + t} - 2t^\theta - 2t)(1 + t + 2t^\theta + \bgO{t^{2(\theta \wedge 1)}}\,,\\
&= 2\left(\sqrt{t^{2\theta} + t} - 2t^\theta - 2t\right)\left(1 + \bgO{t^{\theta \wedge 1}}\right)\,,\\
\end{align*}
where $a\wedge b$ denotes $\min(a, b)$.
\begin{itemize}
    \item If $\theta = \frac{1}{2}$. This is the smallest $\theta$ with which acceleration happens.
    \begin{align*}
    1 - \acf(K_\epsilon)
    &= 2\left(\sqrt{2t} - 2\sqrt t - 2t\right)\left(1 + \bgO{\sqrt t}\right)\\
    &= 2\left(\sqrt{2} - 1\right)\sqrt t\left(1 + \bgO{\sqrt t}\right)\left(1 + \bgO{\sqrt t}\right)\\
    &= 2\left(\sqrt{2} - 1\right)\sqrt t + \bgO{t}\,.
    \end{align*}
    \item If $\theta > \frac{1}{2}$. This regime is "better" than the previous one, i.e.,~the perturbation is even smaller so we get a similar asymptotic behavior, up to an improved constant.
    \begin{align*}
    1 - \acf(K_\epsilon)
    &= 2\left(\sqrt{t^{2\theta} + t} - 2t^\theta - 2t\right)\left(1 + \bgO{t^{\theta \wedge 1}}\right)\\
    &= 2\sqrt t \left(\sqrt{t^{2\theta - 1} + 1} - 2t^{\theta - 1/2} - 2\sqrt t\right)\left(1 + \bgO{t^{\theta \wedge 1}}\right)\\
    &= 2\sqrt t \left(1 + \bgO{t^{2\theta - 1}} - 2t^{\theta - 1/2} - 2\sqrt t\right)\left(1 + \bgO{t^{\theta \wedge 1}}\right)\\
    &= 2\sqrt t + \bgO{t^{\theta \wedge 1}}\,.\\
    \end{align*}
    \item If $\theta < \frac{1}{2}$. In this regime, terms in $t^\theta$ are limiting as they are bigger than $\sqrt t$, so we do not get the rate in $\sqrt{t}$.
    \begin{align*}
    1 - \acf(K_\epsilon)&= 2\left(\sqrt{t^{2\theta} + t} - t^\theta - t\right)\left(1 + \bgO{t^\theta}\right)\\
    &= 2\left(t^\theta\sqrt{1 + t^{1 - 2\theta}} - t^\theta - t\right)\left(1 + \bgO{t^\theta}\right)\\
    &= 2\left(t^\theta\left(1 + \frac{1}{2}t^{1 - 2\theta} + \bigO\left(t^{2 - 4\theta}\right)\right) - t^\theta - t\right)\left(1 +\bgO{t^\theta}\right)\\
    &= 2t^{1 - \theta}\left(\frac{1}{2} + \bigO\left(t^{1 - 2\theta}\right) - t^\theta\right)\left(1 +\bgO{t^\theta}\right)\\
    &= t^{1 - \theta} + \bgO{t^{1 \wedge \left(2 - 3\theta\right)}}\,.\\
    \end{align*}
\end{itemize}
\end{proof}

Before going further we need to introduce this technical lemma.

\begin{lemma}\label{lemma: small polynomial lemma}
For any real $m \geq 2$, we have
\begin{equation}
\forall x \in [0, 1]:\quad \frac{(1 - x)^2}{(1 - \frac{x}{m})^2} + x \leq 1\,. 
\end{equation}
\end{lemma}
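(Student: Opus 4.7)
The plan is to clear denominators and reduce the claim to a simple polynomial inequality in $x$, which then follows from two elementary bounds using only $m \geq 2$.

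First I would observe that the denominator $(1 - x/m)^2$ is strictly positive on $[0,1]$ since $m \geq 2$ implies $x/m \leq 1/2 < 1$. So the inequality to prove is equivalent to
\[
(1-x)^2 + x\bigl(1 - x/m\bigr)^2 \leq \bigl(1 - x/m\bigr)^2,
\]
i.e.\ $(1-x)^2 \leq (1-x)\bigl(1 - x/m\bigr)^2$. The case $x = 1$ is trivial (both sides equal $0$), and for $x \in [0,1)$ we can divide by the positive factor $(1-x)$ to reduce the statement to
\[
1 - x \;\leq\; \bigl(1 - x/m\bigr)^2.
\]

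Now the key step is a two-line argument. By expanding the square and dropping the nonnegative $x^2/m^2$ term, $(1 - x/m)^2 \geq 1 - 2x/m$. Combined with the hypothesis $m \geq 2$, which gives $2/m \leq 1$ and hence $1 - 2x/m \geq 1 - x$ for $x \geq 0$, we obtain
\[
\bigl(1 - x/m\bigr)^2 \;\geq\; 1 - 2x/m \;\geq\; 1 - x,
\]
as desired. Reassembling, the original inequality holds on all of $[0,1]$ for every $m \geq 2$.

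I do not expect any real obstacle: the inequality is a one-variable polynomial statement, and the only mild subtlety is handling the boundary case $x = 1$ before dividing by $(1-x)$. Everything else is just expanding and using $m \geq 2$ twice.
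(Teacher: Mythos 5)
Your proof is correct and follows essentially the same route as the paper: clear the positive denominator $(1-\tfrac{x}{m})^2$ and verify the resulting polynomial inequality using $m\geq 2$ and $x\in[0,1]$. The only (cosmetic) difference is that you factor out $(1-x)$ and reduce to $1-x\leq (1-\tfrac{x}{m})^2$, whereas the paper expands everything and regroups the difference into two manifestly non-positive terms; both are sound.
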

\begin{proof}
Indeed,
\begin{align}
(1 - x^2) + x\left(1 - \frac{x}{m}\right)^2 - \left(1 - \frac{x}{m}\right)^2 &= 1 - 2x + x^2 + x - 2\frac{x^2}{m} + \frac{x^3}{m^2} - 1 + 2\frac{x}{m} - \frac{x^2}{m^2}\\
&= \left(\frac{2}{m}-1\right)x +\left(1 - \frac{2}{m}\right)x^2 + \frac{x^3 - x^2}{m^2}\\
&= \left(1 - \frac{2}{m}\right)x(x - 1) + \frac{x^3 - x^2}{m^2}\\
\end{align}
Then, as $m \geq 2$, $1 - \frac{2}{m} \geq 0$, and so, since $0 \leq x \leq 1$, $\left(1 - \frac{2}{m}\right)x(x - 1) \leq 0$. As $0 \leq x \leq 1$, we also have $\frac{x^3 - x^2}{m^2} \leq 0$ which concludes the proof.
\end{proof}

\subsection{Acceleration of extragradient on bilinear games}\label{subsection: app extragradient}
We now prove that we can accelerate EG on bilinear games.
\propExtragradient*
This proposition is a consequence of \cref{lemma: local convergence momentum} combinded with the following result.
\begin{prop}\label{prop: extragradient 2}
Consider the vector field $F$, where $\Sp \J_F(\omega^*) \subset [ia,ib]\cup[-ia, -ib]$ for $0<a<b$. There exists $\alpha, \beta, \eta > 0$ such that, the operator defined by 
\begin{align*}
    \omega_{t+1} 
    & = V^{Polyak+e-g}(\omega_t, \omega_{t-1}), \\
    & = \omega_t - \alpha F(\omega_t - \eta F(\omega_t)) + \beta (\omega_t - \omega_{t-1})\,,
\end{align*}
satisfies, with $c = \sqrt{2}-1$, and absolute constants in the $\bigO(.)$,

\begin{equation}
\textstyle \rho(\J_{V^{Polyak+e-g}}(\omega^*, \omega^*)) \leq 1 - c{\frac{a}{b}} + O(\frac{a^2}{b^2})\,.
\end{equation}
More precisely, the parameters are chosen as:
    \begin{align*}
\eta = \frac{b}{a\sqrt{2b^2 - \frac{a^2}{2}}}\quad \alpha = \alpha\left(\eta\left(b^2 - \frac{a^2}{2}\right),b,\eta b^2\right) \quad \beta = \beta\left(\eta\left(b^2 - \frac{a^2}{2}\right),b,\eta b^2\right) \,,
    \end{align*}
with where the functions $\alpha(\cdot)$ and $\beta(\cdot)$ are the ones defined in \cref{thm: ellipses}.
\end{prop}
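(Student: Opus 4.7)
The plan is to apply \cref{thm: ellipses} after using the spectral mapping theorem to characterize $\J_{F^{\text{e-g}}}(\omega^*)$ and embedding its spectrum in a suitable ellipse. For the spectral analysis at $\omega^*$ it suffices to work with the linearization, and the Jacobian of $F^{\text{e-g}}(\omega) = F(\omega - \eta F(\omega))$ at $\omega^*$ is $\J_F(\omega^*)(\Id - \eta \J_F(\omega^*))$; by \cref{thm: spectral mapping theorem} its spectrum is $\{\lambda(1 - \eta\lambda) : \lambda \in \Sp \J_F(\omega^*)\}$. Writing each eigenvalue of $\J_F(\omega^*)$ as $i\tau$ with $|\tau| \in [a,b]$, we get $i\tau(1 - i\eta\tau) = \eta\tau^2 + i\tau$, so the transformed spectrum lies on two parabolic arcs, symmetric with respect to the real axis, with real parts in $[\eta a^2, \eta b^2]$ and imaginary parts in $\pm[a,b]$.

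The first key step is to find an ellipse $E(a',b',c') \subset \C$ containing these arcs. I would take $b' = b$ and $c' = \eta b^2$ so the ellipse meets the parabola at its tips $(\eta b^2, \pm b)$, then pick the horizontal semi-axis $a' = \eta(b^2 - a^2/2)$ just large enough that the whole parabola fits inside. Parametrizing a point of the parabola by $v = \tau^2 \in [a^2, b^2]$ reduces containment to the inequality
\begin{equation*}
\frac{x^2}{(1 - t/2)^2} + (1 - x) \leq 1, \qquad x = 1 - v/b^2 \in [0, 1-t],\; t = a^2/b^2,
\end{equation*}
which is exactly \cref{lemma: small polynomial lemma} applied with $m = 2$.

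Once containment is established, \cref{thm: ellipses} gives that Polyak momentum with $(\alpha, \beta) = (\alpha(a',b',c'), \beta(a',b',c'))$ applied to $F^{\text{e-g}}$ satisfies $\rho(\J_{V^{\text{Polyak+e-g}}}(\omega^*, \omega^*)) \leq \rho(a',b',c')$, and \cref{lemma: local convergence momentum} (applied to the augmented momentum operator) converts this spectral bound into the desired local linear rate for the nonlinear $F$. It remains to translate $\rho(a',b',c')$ into the claimed asymptotic expression. Setting $\mu = c' - a' = \eta a^2/2$, $L = c' + a' = \eta(2b^2 - a^2/2)$ and $\epsilon = b' = b$, the ellipse is one of the thin ellipses $K_\epsilon$ of \cref{prop: perturbed acceleration}, and one has the identity $1 - \rho(a',b',c') = \frac{2(\sqrt{\epsilon^2 + \mu L} - \mu - \epsilon)}{L - \mu - 2\epsilon}$. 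The explicit $\eta$ from the statement balances $\epsilon$ against $\sqrt{\mu L}$ so that, as $a/b \to 0$, $\mu/L \asymp (a/b)^2$ and we land in the $\theta \geq 1/2$ (accelerated) branch of \cref{prop: perturbed acceleration}; a Taylor expansion then yields the bound $1 - c\,a/b + O(a^2/b^2)$ with $c = \sqrt{2} - 1$.

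The main obstacle I expect is this geometric step: finding an ellipse that is simultaneously large enough to contain the parabolic arcs and thin enough to fall into the accelerated regime. \cref{lemma: small polynomial lemma} makes containment tight at the critical constant $m = 2$, and this same constant is what forces $\epsilon/\sqrt{\mu L}$ to be of order one --- i.e.~exactly the borderline $\theta = 1/2$ case of \cref{prop: perturbed acceleration}, from which the universal constant $\sqrt{2} - 1$ inherits. The remaining work is purely bookkeeping: substituting the chosen $\eta$ into the formulas for $\alpha$, $\beta$, $\mu$ and $L$, and expanding the asymptotic rate in powers of $a/b$ to read off the leading term and the absolute constant $M$ absorbing the $O(a^2/b^2)$ remainder.
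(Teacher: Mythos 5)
Your proposal follows the paper's proof essentially step for step: compute $\J_{F^{\text{e-g}}}(\omega^*)$ via the spectral mapping theorem, enclose the parabolic arcs $\{\eta\tau^2 + i\tau : \tau \in \pm[a,b]\}$ in the ellipse with center $\eta b^2$, vertical semi-axis $b$ and horizontal semi-axis $\eta\left(b^2 - \tfrac{a^2}{2}\right)$ via \cref{lemma: small polynomial lemma} with $m=2$ (the paper reduces to the endpoint $\lambda = a$ by convexity in $\lambda^2$; your fixed-denominator parametrization amounts to the same endpoint check), then choose $\eta$ to land in the $\theta = \tfrac12$ regime of \cref{prop: perturbed acceleration 2} and read off the rate and the momentum parameters from \cref{thm: ellipses}. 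This is the same approach as the paper's, and it is correct.
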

Note that this proposition actually requires $\alpha$ and $\eta$ to be tuned separately and they are actually very different. They actually differ by a factor $\frac{b}{a}$: $\eta$ is roughly proportional to $\frac{1}{a}$ while $\alpha$ behaves like $\frac{b^2}{a}$.

\begin{proof}
    Consider $F^{\text{e-g}}(\omega) = F(\omega - \eta F(\omega))$. Then, for $\omega^*$ such that $F(\omega^*) = 0$, we have,
    \begin{equation*}
        \J_{F^{\text{e-g}}}(\omega^*) = \J_F(\omega^*) - \eta \J_F(\omega^*)^2\,.  
    \end{equation*}
    Hence, by \cref{thm: spectral mapping theorem},
    \begin{equation*}
        \Sp \nabla\J_{F^{\text{e-g}}}(\omega^*) \subset \{z - \eta z^2\, |\, z \in \pm [ia, ib]\} = \{i\lambda +\eta \lambda^2\, |\, \lambda \in \pm [a, b]\}\,.
    \end{equation*}
    As we want to apply \cref{prop: perturbed acceleration 2}, we now look for $\epsilon, \bar\mu, \bar L$ such that the ellipsis 
    \begin{equation}
        K(\epsilon, \bar\mu, \bar L) = \Bigg\{z \in \C: \bigg(\frac{\Re z - \frac{\bar\mu + \bar L}{2}}{\frac{\bar L-\bar\mu}{2}}\bigg)^2 + \left(\frac{\Im z}{\epsilon}\right)^2 \leq 1\Bigg\} \,
    \end{equation}
    contains $\Sp \nabla\J_{F^{\text{e-g}}}(\omega^*)$. For this we will choose $\epsilon, \bar\mu, \bar L$ such that $$\{i\lambda +\eta \lambda^2\, |\, \lambda \in \pm [a, b]\} \subset K(\epsilon, \bar\mu, \bar L)\,,$$
    which is equivalent to for all $\lambda \in [a, b]$,
    \begin{equation}\label{eq: proof accelerated eg}
      \bigg(\frac{\eta \lambda^2 - \frac{\bar\mu + \bar L}{2}}{\frac{\bar L-\bar\mu}{2}}\bigg)^2 + \left(\frac{\lambda }{\epsilon}\right)^2 \leq 1\,.
    \end{equation}
    Note that the left-hand side is convex in $\lambda^2$ so that we only need to check this inequality for the limit values $\lambda = a$ and $\lambda = b$.  
    Hence, now we have reduced the problem to that of looking for $\bar \mu$, $\bar L$ and $\epsilon$ such that $\lambda = a$ and $\lambda = b$ satisfy \cref{eq: proof accelerated eg}. This is equivalent to look for an ellipse $K(\epsilon, \bar \mu, \bar L)$ that contains $ib + \eta b^2$ and $ia + \eta a^2$. 
    
    We now construct this ellipsis explicitly. As we want to apply \cref{prop: perturbed acceleration 2}, we want $\epsilon$ as small as possible. So we start with $ib + \eta b^2$ as it is the one with the largest imaginary part, compared to 
    $ia + \eta a^2$. 
    We choose the center of the ellipse -- which must lie on the real axis -- such that it is placed at the same abscisse as $ib + \eta b^2$, i.e.~the same real part.
  So we define $\frac{\bar \mu + \bar L}{2} = \eta b^2$. We need another condition to fix $\bar \mu$ and $\bar L$. To make sure that $ia + \eta a^2$ is also in the ellipsis, we need to choose $\bar\mu$ small enough. Define $\bar \mu = \frac{\eta a^2}{m}$ with $m > 0$ to be chosen later. This fixes the value of $\bar L$ as,
    \[
        \bar L = 2\eta b^2 - \bar \mu = 2\eta b^2 - \eta \frac{a^2}{m}\,.
    \]
    We choose $\epsilon$ so that $ib +\eta b^2$ is in the ellipsis, and as we chose the center to be $\frac{\bar \mu + \bar L}{2} = \eta b^2$, we define $\epsilon = b$. This way $ib + \eta b^2 \in K(\epsilon, \bar \mu, \bar L)$. We must now check that $ia + \eta a^2 \in K(\epsilon, \bar \mu, \bar L)$. For this we check that $\lambda = a$ satisfies \cref{eq: proof accelerated eg},
    \begin{align}
        &\bigg(\frac{\eta a^2 - \frac{\bar\mu + \bar L}{2}}{\frac{\bar L-\bar\mu}{2}}\bigg)^2 + \left(\frac{a}{\epsilon}\right)^2\\
        &= \left(\frac{\eta a^2 - \eta b^2}{\eta b^2 - \eta \frac{a^2}{m}}\right)^2 + \left(\frac{a}{b}\right)^2\\
        &= \frac{(1 - x)^2}{(1 - \frac{x}{m})^2} + x\,,\\
    \end{align}
    where $x = \frac{a^2}{b^2} \in [0, 1]$. By \cref{lemma: small polynomial lemma}, for $m = 2$, this quantity is smaller than one and so $ia + \eta a^2 \in K(\epsilon, \bar \mu, \bar L)$.
    Hence, $K(\epsilon, \bar \mu, \bar L)$ contains $\Sp \nabla\J_{F^{\text{e-g}}}(\omega^*)$.
    
    Before we apply \cref{prop: perturbed acceleration 2}, we need to make sure that we are in the accelerated regime, that is to say $\epsilon$ is small enough compared to $\bar \mu$ and $\bar L$. Fortunately, we have not chosen $\eta$ yet. So we define it so that we reach the accelerated regime,
    \begin{align*}
    \frac{\epsilon}{\bar L} = \sqrt{\frac{\bar \mu}{\bar L}} \iff \epsilon = \sqrt{\bar \mu \bar L} \iff \eta = \frac{b}{a\sqrt{2b^2 - \frac{a^2}{m}}}\,.
    \end{align*}
    We now apply \cref{prop: perturbed acceleration 2}. As $\frac{\bar \mu}{\bar L}$ goes to zero, 
    $\acf (K(\epsilon, \bar \mu, \bar L)) = 1 - 2(\sqrt 2 - 1)\sqrt{\frac{\bar \mu}{\bar L}} + \bgO{\frac{\bar \mu}{\bar L}}$.

    Now note that $\frac{\bar \mu}{\bar L} = \frac{a^2}{2mb^2 - a^2}$, so if $\frac{a}{b} \rightarrow 0$ then $\frac{\mu}{L} \rightarrow 0$. Moreover, $\frac{\bar \mu}{\bar L} = \frac{a^2}{2mb^2} + \bgO{\frac{a^4}{b^4}}$.
    Hence, as we chose $m = 2$,
    \begin{equation*}
     \acf (K(\epsilon, \bar \mu, \bar L)) =  1 - (\sqrt 2 - 1)\frac{a}{b} + \bgO{\frac{a^2}{b^2}}\,.
    \end{equation*}
    The parameters of the momentum method applied to $F^{\text{e-g}}$ are then chosen according to \cref{prop: perturbed acceleration} so that we reach this rate locally.
\end{proof}

In the previous proposition we showed that EG can be accelerated but that it requires a careful choice of the parameters $\alpha$, $\beta$, $\eta$. The following lemma describes the general behavior of these quantities when the condition number worsens. 
\begin{lemma}\label{lemma: asymptotic behavior of step sizes}
In the context of the previous proposition, \cref{prop: extragradient 2}, it holds, when $\frac{a}{b} \rightarrow 0$,
\begin{align*}
    \eta &= \frac{1}{a}\left(\frac{1}{\sqrt 2} + \bgO{\frac{a^2}{b^2}}\right)\\
    \alpha &= \frac{a}{b^2}\left(\small{2\sqrt 2} + \bgO{\frac{a}{b}}\right)\\
    \beta &= 1 - 2\sqrt{3} \frac{a}{b} + \bgO{\frac{a^2}{b^2}} \,.
\end{align*}
\end{lemma}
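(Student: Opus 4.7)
The plan is a direct asymptotic expansion in the small parameter $t := a/b$ of the explicit expressions supplied by \cref{prop: extragradient 2} and \cref{thm: ellipses}. Recall that the proposition fixes $\eta = b/(a\sqrt{2b^2 - a^2/2})$ and sets $(\alpha, \beta) = (\alpha(\bar a,\bar b,\bar c), \beta(\bar a, \bar b, \bar c))$ with
\[
\bar a = \eta(b^2 - a^2/2), \qquad \bar b = b, \qquad \bar c = \eta b^2.
\]
First I would factor $\eta$ as $\eta = \frac{1}{a\sqrt{2}}(1 - t^2/4)^{-1/2}$ and Taylor expand $(1-t^2/4)^{-1/2} = 1 + t^2/8 + \bigO(t^4)$. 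This immediately yields $\eta = \frac{1}{a}\bigl(\frac{1}{\sqrt 2} + \bigO(a^2/b^2)\bigr)$, matching the first claim.

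For $\beta$ I would exploit the algebraic miracle that $\eta$ was precisely chosen to make $\bar c^2 + \bar b^2 - \bar a^2$ collapse. Using $\bar c^2 - \bar a^2 = \eta^2(a^2/2)(2b^2 - a^2/2)$ and the identity $\eta^2(2b^2 - a^2/2) = b^2/a^2$, one gets
\[
\bar c^2 + \bar b^2 - \bar a^2 = b^2 + \tfrac{b^2}{2} = \tfrac{3b^2}{2},
\]
so $\sqrt{\bar c^2 + \bar b^2 - \bar a^2} = b\sqrt{3/2}$ exactly, independent of $t$. A parallel expansion of $\bar a^2 - \bar b^2$ (by clearing the common denominator and collecting powers of $a$) produces
\[
\bar a^2 - \bar b^2 = \frac{b^2(1 - 3t^2 + 3t^4/4)}{t^2(2 - t^2/2)}.
\]
Plugging these into the explicit formula for $\beta(\bar a, \bar b, \bar c)$ from \cref{thm: ellipses}, combined with $\bar c = b/(t\sqrt{2-t^2/2})$, one ends up with the remarkably clean identity
\[
\beta + 1 = \frac{2 - t\sqrt{12 - 3t^2}}{1 - 3t^2 + 3t^4/4}.
\]
Expanding the square root as $\sqrt{12 - 3t^2} = 2\sqrt 3(1 - t^2/8 + \bigO(t^4))$ and using $(1 - 3t^2 + \bigO(t^4))^{-1} = 1 + 3t^2 + \bigO(t^4)$, I obtain $\beta + 1 = 2 - 2\sqrt{3}\,t + \bigO(t^2)$, i.e.\ $\beta = 1 - 2\sqrt{3}\,\tfrac{a}{b} + \bigO(a^2/b^2)$.

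The formula for $\alpha$ follows at no extra conceptual cost from the relation $\alpha = (1+\beta)/\bar c$ stated in \cref{thm: ellipses}. Since $1/\bar c = \frac{t\sqrt 2}{b}(1 + \bigO(t^2))$ and $1 + \beta = 2 - 2\sqrt 3\, t + \bigO(t^2)$, multiplying gives $\alpha = \frac{2\sqrt 2\, a}{b^2} + \bigO(a^2/b^3)$, which is exactly the claimed expansion. The one place that looks like it might be a genuine obstacle is the apparently messy denominator $\bar a^2 - \bar b^2$, which involves a difference that is not obviously a clean power of $t$; but once one notes that the choice $m = 2$ in \cref{prop: extragradient 2} is exactly what cancels the $\bar c^2 + \bar b^2 - \bar a^2$ term into a constant multiple of $b^2$, the remaining computations reduce to univariate Taylor expansion and will not present any real difficulty.
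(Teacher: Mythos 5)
Your proposal is correct and takes essentially the same route as the paper: substitute the explicit parameter formulas from \cref{prop: extragradient 2} and \cref{thm: ellipses} and Taylor-expand everything in $t = a/b$. The one place you improve on the paper's computation is the observation that $\bar c^2 - \bar a^2 = b^2/2$ exactly (a consequence of the choice of $\eta$), so that $\sqrt{\bar c^2 + \bar b^2 - \bar a^2} = b\sqrt{3/2}$ requires no expansion; the paper instead expands this square-root term asymptotically and recovers the same $\sqrt{3/2}\,b\left(1 + \bgO{t^2}\right)$, and it computes $\alpha$ first and then $\beta = \bar c\,\alpha - 1$, whereas you go in the reverse order via $\alpha = (1+\beta)/\bar c$ — these are equivalent. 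The only cosmetic omission is that you should note, as the paper does, that $\bar a/\bar b \to \infty$ as $t \to 0$, so for $t$ small enough you are indeed in the non-degenerate branch ($\bar a \neq \bar b$) of the formulas of \cref{thm: ellipses}.
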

\begin{proof}
For compactness, denote $t = \frac{a}{b} > 0$. So we study the asymptotic behavior of $\alpha$ and $\eta$ when $t$ goes to 0. By definition of $\eta$ we have,
\[
\eta = \frac{b}{a\sqrt{2b^2 - \frac{a^2}{2}}} = \frac{1}{\sqrt 2 a \sqrt{1 - \frac{t^2}{4}}} =\frac{1}{\sqrt 2 a}\left(1 + \frac{t^2}{8} + \bgO{t^4}\right)\,,
\]
which gives the first claim as $\frac{t^2}{8} + \bgO{t^4} = \bgO{t^2}$.

Before moving to the second claim, let us consider some consequences of this asymptotic expansion of $\eta$. Indeed, we have, 
$$
\eta b^2 = \eta \frac{a^2}{t^2} = \frac{a}{\sqrt 2 t^2}\left(1 + \frac{t^2}{8} + \bgO{t^4}\right)\,
$$
and, using the expansion above,
$$
\eta \left(b^2 - \frac{a^2}{2}\right) = \eta b^2\left(1 -\frac{t^2}{2}\right) = \frac{a}{\sqrt 2 t^2}\left(1 + \frac{t^2}{8} + \bgO{t^4}\right)\left(1 - \frac{t^2}{2}\right) = \frac{a}{\sqrt 2 t^2}\left(1 - \frac{3t^2}{8} + \bgO{t^4}\right)\,.
$$
We can now study the behavior of $\alpha = \alpha\left(\eta\left(b^2 - \frac{a^2}{2}\right),b,\eta b^2\right)$. Recall that $\alpha(.,.,.)$ is the function defined in \cref{thm: ellipses} and that its definition depends on whether its first and second arguments are equal. However, suing the expansion above, the ratio of its first and second arguments is $$\frac{\eta\left(b^2 - \frac{a^2}{2}\right)}{b} = \frac{1}{\sqrt 2 t}\left(1 - \frac{3t^2}{8} + \bgO{t^4}\right)\,,$$ which diverges to infinity as $t$ goes to zero. Hence, when $t$ is close enough to zero, $\eta\left(b^2 - \frac{a^2}{2}\right)$ and $b$ are different and so we have,
$$\alpha = \alpha\left(\eta\left(b^2 - \frac{a^2}{2}\right),b,\eta b^2\right) = 2\frac{\eta b^2 - \sqrt{(\eta b^2)^2 + b^2 - \left(\eta\left(b^2 - \frac{a^2}{2}\right)\right)^2}}{\left(\eta\left(b^2 - \frac{a^2}{2}\right)\right)^2 - b^2}\,.$$
We first consider the term under the square root,
\begin{align*}
\sqrt{(\eta b^2)^2 + b^2 - \left(\eta\left(b^2 - \frac{a^2}{2}\right)\right)^2}
&= \sqrt{
\frac{a^2}{2t^4}\left(1 + \frac{t^2}{8} + \bgO{t^4}\right)^2+\frac{a^2}{t^2} - \frac{a^2}{2 t^4}\left(1 - \frac{3t^2}{8} + \bgO{t^4}\right)^2
}\\
&= \sqrt{
\frac{a^2}{2t^4}\left(1 + \frac{t^2}{4} + \bgO{t^4}\right)+\frac{a^2}{t^2} - \frac{a^2}{2 t^4}\left(1 - \frac{3t^2}{4} + \bgO{t^4}\right)
}\\
&= \sqrt{
\frac{3}{2}\frac{a^2}{t^2} + a^2 \times \bgO{1}
}\\
&= \sqrt{\frac{3}{2}}\frac{a}{t}\sqrt{
1 + \bgO{t^2}
}\\
&= \sqrt{\frac{3}{2}}\frac{a}{t}\left(1 + \bgO{t^2}\right)\,.\\
\end{align*}
We can now give the expansion of $\alpha$ when $t = \frac{a}{b}$ goes to zero,
\begin{align*}
\alpha
&= 2\frac{\eta b^2 - \sqrt{(\eta b^2)^2 + b^2 - \left(\eta\left(b^2 - \frac{a^2}{2}\right)\right)^2}}{\left(\eta\left(b^2 - \frac{a^2}{2}\right)\right)^2 - b^2}\\
&= 2\frac{\frac{a}{\sqrt 2 t^2}\left(1 + \bgO{t^2}\right) - \sqrt{\frac{3}{2}}\frac{a}{t}\left(1 + \bgO{t^2}\right)}{\frac{a^2}{2 t^4}\left(1 + \bgO{t^2}\right) - \frac{a^2}{t^2}}\\
&= 2\frac{\frac{1}{\sqrt 2}\left(1 + \bgO{t^2}\right) - \sqrt{\frac{3}{2}}{t}\left(1 + \bgO{t^2}\right)}
{\frac{a}{2 t^2}\left(1 + \bgO{t^2}\right) - {a}}\,
\end{align*}
by multiplying both the numerator and the denominator by $\frac{t^2}{a}$. Then, if we factorize $\frac{a}{t^2}$ in the denominator, we get,
\begin{align*}
\alpha    
&= 2\frac{t^2}{a}\frac{\frac{1}{\sqrt 2}\left(1 + \bgO{t^2}\right) - \sqrt{\frac{3}{2}}{t}\left(1 + \bgO{t^2}\right)}
{\frac{1}{2}\left(1 + \bgO{t^2}\right) - {t^2}}\,\\
&= 2\frac{t^2}{a}\frac{\frac{1}{\sqrt 2} -  \sqrt{\frac{3}{2}}{t} + \bgO{t^2}}
{\frac{1}{2} + \bgO{t^2}}\,\\
&= 2\sqrt 2 \frac{t^2}{a}(1 - \sqrt{3}{t} + \bgO{t^2})\,,\\
\end{align*}
which yields the result for $\alpha$.

Recall that, from the definition of $\alpha(\cdot)$ and $\beta(\cdot)$ in \cref{thm: ellipses}, we have,
$$\beta = \eta b^2 \alpha - 1\,,$$
and so, when $t = \frac{a}{b}$ goes to zero,
\begin{align*}
\beta
&= \frac{a}{\sqrt 2 t^2}(1 + \bgO{t^2})\times 2\sqrt 2 \frac{t^2}{a}(1-  \sqrt{3}{t} + \bgO{t^2}) - 1 = 1 - 2 \sqrt{3} t + \bgO{t^2}\,.
\end{align*}

\end{proof}

\subsection{Consensus optimization and momentum}

\begin{figure}
    \centering
    \setcounter{mu}{3}
    \setcounter{L}{42}
    \newcounter{invq}
    \setcounter{invq}{2}
    \newcommand{\coellipsecolor}{orange}
    \newcommand{\cotrapezoidcolor}{blue}
    
    \begin{tikzpicture}[scale=1]
        \begin{axis}[
            grid=major,
            axis equal image,
            yticklabel={
            $\pgfmathprintnumber{\tick}i$
            },
            width=12cm,
        	xmin=0,   xmax={\value{L} + 3},
        	ymin={-\value{L}/3},   ymax={\value{L}/3},
        	]
           \draw [domain=-180:180, thick, draw=\coellipsecolor, fill=\coellipsecolor!75!white, fill opacity=0.3, samples=65] plot (axis cs: {(\value{L} + \value{mu})/2 + cos(\x) * (\value{L} - \value{mu})/2}, {sin(\x) * sqrt(\value{mu}*\value{L})});
            \node[anchor=south east] (mu) at (axis cs: {\value{mu}}, 0) {$\bar \mu$};
           \draw [fill=black] (axis cs: {\value{mu}}, 0) circle (1pt);
           \node [anchor=south west] (L) at (axis cs: {\value{L}}, 0) {$\bar L$};
           \draw [fill=black] (axis cs: {\value{L}}, 0) circle (1pt);
           \node [anchor= west] (epsilon) at (axis cs:{0.5 * (\value{mu} + \value{L})},{0.5 * sqrt(\value{mu}*\value{L})}) {$\epsilon$};
           \node [anchor=west] (center) at (axis cs:{0.5 * (\value{mu} + \value{L})},0) {$\frac{\bar \mu+\bar L}{2}$};
           \draw [fill=black] (axis cs:{0.5 * (\value{mu} + \value{L})},0) circle (1pt);
           \node[] (realminplus) at (axis cs: {2 * \value{mu}}, {2 * \value{mu}/\value{invq}}) {};
           \node[] (realminminus) at (axis cs: {2 * \value{mu}}, {-2 * \value{mu}/\value{invq}}) {};
           \node  (realmaxplus) at (axis cs: {(\value{mu} + \value{L})/2}, {(\value{mu} + \value{L})/(2 *\value{invq})}) {};
           \node[] (realmaxminus) at (axis cs: {(\value{mu} + \value{L})/2}, {-(\value{mu} + \value{L})/(2 *\value{invq})}) {};
           \draw[thick, draw=\cotrapezoidcolor] (realminplus.center) -- (realminminus.center);
           \draw[thick, draw=\cotrapezoidcolor] (realmaxplus.center) -- (realmaxminus.center);
           \draw[thick, draw=\cotrapezoidcolor, name path=plus] (realmaxplus.center) -- (realminplus.center);
           \draw[thick, draw=\cotrapezoidcolor, name path=minus] (realmaxminus.center) -- (realminminus.center);
    \addplot [
        thick,
        color=\cotrapezoidcolor,
        fill=\cotrapezoidcolor!50!white,
        fill opacity=0.5,
    ]
    fill between[
        of= plus and minus,
    ];
        \end{axis}
    \end{tikzpicture}
    \caption{Illustration of the proof of \cref{prop: consensus 2}. \cref{lemma: consensus optimization spectrum} guarantees that the spectrum of $\J_{F^{\text{cons.}}}$ is located inside a trapezoid (in blue). We then find a suitable ellipse of the form of \cref{prop: perturbed acceleration 2} (in orange) which contains it.}
    \label{fig: illustration proof consensus}
\end{figure}

The general idea behind the proof of \cref{prop: consensus 2} is illustrated by \cref{fig: illustration proof consensus}. Using the following lemma, we first prove that the eigenvalues of the consensus optimization operator $F^{cons.}$ are contained in a trapezoid (in blue on the figure). Then, we find a suitable ellipse of the form of \cref{prop: perturbed acceleration 2} (in orange) such that the trapezoid, thus the spectrum of $\J_{F^{cons.}}$ as shown by \cref{lemma: consensus optimization spectrum}, fits inside.

First we need to refine \citet[Lem.~9]{meschederNumericsGANs2017a}.
\begin{lemma}\label{lemma: consensus optimization spectrum}
Let $A \in \R^{d \times d}$ be a square matrix. Let $\sigma_i$ be the singular values of $A$. Assume that
\begin{equation*}
    \textstyle \gamma \leq \sigma_i \leq L, \quad \mathcal{H}(A)\coloneqq \frac{A + A^T}{2} \succeq \mu I_d\,,
\end{equation*}
with $\gamma > 0$ and $\mu \geq 0$. Then, for $\tau > 0$ such that $\tau\gamma^2 \geq \mu(1 + 2\tau \mu)$,
\begin{equation*}
\max \left\{\left.\frac{|\Im \lambda|}{|\Re\lambda|}\, \right|\, \lambda \in \Sp(A + \tau A^T A)\right\} \leq \frac{\gamma}{\mu + \tau\gamma^2}\,.
\end{equation*}
Moreover, for $\lambda \in \Sp(A + \tau A^T A)$, we have $\mu + \tau \gamma^2 \leq \Re \lambda \leq L + \tau L^2$.
\end{lemma}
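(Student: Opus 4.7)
The plan is to take an arbitrary eigenpair $(\lambda, v)$ of $B := A + \tau A^T A$ with $v \in \C^d$ of unit norm, and reduce both claims to elementary scalar estimates. Writing $a := \Re(v^* A v) = v^* \mathcal{H}(A) v$ and $s := \|A v\|$, the Rayleigh identity $\lambda = v^* B v = v^* A v + \tau \|A v\|^2$ gives
\[
    \Re \lambda = a + \tau s^2, \qquad \Im \lambda = \Im(v^* A v).
\]
The assumption $\mathcal{H}(A) \succeq \mu I_d$ yields $a \geq \mu$, the singular value bounds yield $s^2 = v^* A^T A v \in [\gamma^2, L^2]$, and Cauchy--Schwarz gives $a \leq |v^* A v| \leq \|A v\| = s \leq L$. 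The second claim $\mu + \tau\gamma^2 \leq \Re \lambda \leq L + \tau L^2$ is then immediate.

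For the ratio bound, Cauchy--Schwarz also provides $|\Im(v^* A v)|^2 = |v^* A v|^2 - a^2 \leq s^2 - a^2$, so $|\Im \lambda|/|\Re \lambda|$ is at most
\[
    f(s, a) := \frac{\sqrt{s^2 - a^2}}{a + \tau s^2}
\]
over the feasible region $\gamma \leq s \leq L$, $\mu \leq a \leq s$. A short differentiation gives $\partial_a f \propto -s^2(1 + a\tau) \leq 0$, so the maximum is attained on the slice $a = \mu$. Along this slice, $\partial_s f$ has the sign of $\mu(1 + 2\tau\mu) - \tau s^2$, which by the hypothesis $\tau\gamma^2 \geq \mu(1 + 2\tau\mu)$ is non-positive for all $s \geq \gamma$; hence $f$ is non-increasing in $s$ on $[\gamma, L]$ and peaks at $(s, a) = (\gamma, \mu)$, giving $\sqrt{\gamma^2 - \mu^2}/(\mu + \tau \gamma^2) \leq \gamma/(\mu + \tau \gamma^2)$. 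The same hypothesis also forces $\gamma > \sqrt 2\,\mu$ when $\mu > 0$, so the square root is well defined.

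The only genuinely non-trivial step is recognising that $\tau\gamma^2 \geq \mu(1 + 2\tau\mu)$ is exactly the threshold that keeps the worst-case $s$ pinned at the endpoint $\gamma$ rather than in the interior of $[\gamma, L]$. Below this threshold $\partial_s f$ would change sign inside the interval, the supremum would move to an interior $s^\star$, and the resulting bound would no longer fit the thin-ellipse inclusion invoked downstream through \cref{prop: perturbed acceleration}. Everything else is standard Rayleigh-quotient bookkeeping.
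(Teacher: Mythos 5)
Your proof is correct. The skeleton is the same as the paper's (take a unit eigenvector $v$ of $B=A+\tau A^TA$, use the Rayleigh identity $\lambda=v^*Av+\tau\|Av\|^2$, bound $\Re\lambda$ via $\mathcal H(A)\succeq\mu I$ and $|\Im\lambda|$ via Cauchy--Schwarz), but the final optimization step differs in an instructive way. The paper drops the $-a^2$ term immediately, bounding $|\Im\lambda|\le\|Av\|$, so the ratio is controlled by the one-variable function $\varphi(s)=s/(\mu+\tau s^2)$; since $\varphi$ is decreasing past $\sqrt{\mu/\tau}$, only the weaker condition $\tau\gamma^2\ge\mu$ is actually needed to conclude $\varphi(\|Av\|)\le\varphi(\gamma)=\gamma/(\mu+\tau\gamma^2)$. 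You instead keep the sharper bound $|\Im\lambda|\le\sqrt{s^2-a^2}$ and maximize $f(s,a)=\sqrt{s^2-a^2}/(a+\tau s^2)$ jointly, which forces you to invoke the full stated hypothesis $\tau\gamma^2\ge\mu(1+2\tau\mu)$ to pin the maximizer at $(s,a)=(\gamma,\mu)$; in return you obtain the slightly sharper constant $\sqrt{\gamma^2-\mu^2}/(\mu+\tau\gamma^2)$ before relaxing it to $\gamma/(\mu+\tau\gamma^2)$, and your monotonicity computations ($\partial_a f\propto -s^2(1+a\tau)$, sign of $\partial_s f$ given by $\mu(1+2\tau\mu)-\tau s^2$) check out. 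One small caveat: your closing remark that the hypothesis is ``exactly the threshold'' only describes your sharper objective; as the paper's coarser estimate shows, the lemma's stated conclusion already holds under $\tau\gamma^2\ge\mu$, so the stronger hypothesis is sufficient but not necessary for the claim itself. The bounds on $\Re\lambda$ are handled essentially as in the paper (you use $a\le\|Av\|\le L$ where the paper uses $\Re\lambda\le|\lambda|=\|Bv\|$), which is an equivalent bookkeeping choice.
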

\begin{proof}
In this proof, for $M$ a real matrix, $\mathcal{H}(M) = \frac{M + M^T}{2}$ its Hermitian part and $\mathcal{S}(M = (M - M^T)/2$ its skew-symmetric part.

Let $B = A + \tau A^TA$. Let $\lambda \in \Sp B$ and let $v \in \C^d$ its associated eigenvector with $\|v\|=1$.
Then 
\begin{equation}\label{eq: proof lemma spectrum consensus}
\Re \lambda = \frac{\lambda + \bar \lambda}{2} = \bar v ^T \mathcal{H}(B) v = \bar v ^T \mathcal{H}(A) v + \tau \|Av\|^2 \geq \mu \|v\|^2 + \tau \|Av\|^2 = \mu + \tau \|Av\|^2\,,    
\end{equation}
by the assumption on $\mathcal H(A)$. We now deal with the imaginary part,
\[
    \Im \lambda = \frac{\lambda - \bar \lambda}{2i} = \frac{1}{i}\bar v^T \mathcal{S}(A) v\,.
\]
 However, this quantity is hard to bound. Thus, we rewrite it using $\bar v^T A \bar v$ and $\bar v^T \mathcal H(A) v$. We have that $\bar v^T \mathcal H(A) v$ and $\frac{1}{i}\bar v^T \mathcal{S}(A) v$ are real and correspond respectively to the real and imaginary parts of $\bar v^T A v$. Hence,
\[
    (\Im \lambda)^2 = \left(\Im \bar v^T A v)\right)^2 = |\bar v^T A v|^2 - (\Re \bar v^T A v)^2\leq |\bar v^T A v|^2\,.
\]
Using Cauchy-Schwarz inequality we get,
\[
    (\Im \lambda)^2 \leq \|Av\|^2\,.
\]
Finally,
\begin{equation*}
    \frac{|\Im \lambda|}{|\Re \lambda|} \leq \frac{\|Av\|}{\mu + \tau \|Av\|^2} = \varphi(\|Av\|)\,.
\end{equation*}
with $\varphi : x \mapsto \frac{x}{\mu + \tau x^2}$ . Using the derivative $\varphi'$, we have that $\varphi$ is non-decreasing before $x = \sqrt{\frac{\mu}{\tau}}$ and non-increasing after. Note that $\|Av\| \geq \sigma_{min}(A)\geq \gamma$. Hence, if $\tau\gamma^2 \geq \mu$, then $\varphi(\|Av\|) \leq \varphi(\gamma) =  \frac{\gamma}{\mu + \tau\gamma^2}$ which concludes the proof of the first part of the lemma.

Now, take $\lambda \in \Sp(A + \tau A^T A)$. Then the inequality $\mu + \tau \gamma^2 \leq \Re \lambda$ comes from \cref{eq: proof lemma spectrum consensus}. The other one is 
\[
    \Re \lambda \leq |\lambda| = \|Bv\| \leq \|Av\| + \tau \|Av\|^2 \leq L + \tau L^2\,.
\]
\end{proof}
We can now proceed to show \cref{prop: consensus} by proving the more detailed proposition below.
\begin{prop}[{restate=[name=]}]\label{prop: consensus 2}
    Let $\sigma_i$ be the singular values and eigenvalues of $J_{F}(\omega^*)$. Assume that
    \begin{equation*}
        \textstyle \gamma \leq \sigma_i \leq L, \quad \frac{\J_F(\omega^*) + \J_F(\omega^*)}{2} \succeq \mu I_d\,.
    \end{equation*}
    Define $F^{\text{cons.}}(\omega) = F(\omega) + \tau \nabla (\frac{1}{2}\|F\|^2)(\omega)$ with $\tau > 0$ and consider the momentum method applied to $F^{\text{cons.}}$,
    \begin{align*}
        \omega_{t+1} & = V^{\text{mom+cons.}}(\omega_t, \omega_{t-1})\\
        & = \omega_t - \alpha F^{\text{cons.}}(\omega_t) + \beta(\omega_t - \omega_{t-1}).
    \end{align*}
    If $\tau \gamma^2 \geq \mu$ and 
    \begin{equation}\label{eq: app consensus tau condition}
        \frac{\gamma}{\mu + \tau\gamma^2} \leq \sqrt\frac{3}{2}\sqrt{\frac{\mu + \tau\gamma^2}{L + \tau L^2}}\,,
    \end{equation}
    Then, one can choose $\alpha > 0$ and $\beta > 0$ such that,
    \begin{equation}
    \rho(\J_{V^{\text{mom+cons.}}}(\omega^*, \omega^*))) \leq 1 - c\sqrt{\frac{\mu + \tau\gamma^2}{L + \tau L^2}} + \bgO{\frac{\mu + \tau\gamma^2}{L + \tau L^2}}\,.
    \end{equation}
More precisely, the parameters are given by,
\begin{align}
\alpha &= \alpha\left(L + \tau L^2 - \frac{\mu+\tau \gamma^2}{2}, \frac{1}{2}\sqrt{\mu + \tau \gamma^2}\sqrt{4(L + \tau L^2) - (\mu + \tau \gamma^2)}, L + \tau L^2\right)\\
\beta &= \beta\left(L + \tau L^2 - \frac{\mu+\tau \gamma^2}{2}, \frac{1}{2}\sqrt{\mu + \tau \gamma^2}\sqrt{4(L + \tau L^2) - (\mu + \tau \gamma^2)}, L + \tau L^2\right)\,,
\end{align}
where the functions $\alpha(\cdot)$ and $\beta(\cdot)$ are the ones defined in \cref{thm: ellipses}.

    If $\tau = \frac{L}{\gamma^2}$
    Then, $\rho(\J_{V^{\text{mom+cons.}}}(\omega^*, \omega^*)))$ is bounded by
    \[
        \rho(\J_{V^{\text{mom+cons.}}}(\omega^*, \omega^*)))\leq 1 - (\sqrt 2 - 1)\frac{\gamma}{L} + \bgO{\frac{\gamma^2}{L^2}}\,,
    \]
    where the constants in the $\bigO(.)$ are absolute.
\end{prop}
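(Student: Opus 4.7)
The strategy is to reduce to the ellipse framework already developed. Since $F(\omega^*) = 0$, a direct computation gives
\begin{equation*}
    \J_{F^{\text{cons.}}}(\omega^*) = \J_F(\omega^*) + \tau \J_F(\omega^*)^T \J_F(\omega^*),
\end{equation*}
so that \cref{lemma: consensus optimization spectrum} applies to $A = \J_F(\omega^*)$ with parameters $\gamma, L, \mu$. This places $\Sp \J_{F^{\text{cons.}}}(\omega^*)$ inside the trapezoid-shaped region
\begin{equation*}
    T = \Big\{ z \in \C :\ \bar\mu \le \Re z \le \bar L,\ |\Im z| \le \tfrac{\gamma}{\bar\mu}\,\Re z \Big\},\quad \bar\mu = \mu + \tau\gamma^2,\ \bar L = L + \tau L^2,
\end{equation*}
provided $\tau\gamma^2 \ge \mu$ (so the lemma's hypothesis $\tau\gamma^2 \ge \mu(1 + 2\tau\mu)$ holds after also demanding $2\tau\mu \le 1$, which is harmless in the regime of interest and certainly true for $\tau = L/\gamma^2$ once we absorb constants).

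The heart of the argument is to show that $T$ is contained in the ellipse $E(a,b,c)$ with $a = \bar L - \bar\mu/2$, $b = \tfrac{1}{2}\sqrt{\bar\mu(4\bar L - \bar\mu)}$, $c = \bar L$ (the one hard-wired into the parameter choice for $\alpha, \beta$). Setting $x = \Re z$, the ellipse condition along the upper edge of $T$ becomes
\begin{equation*}
    \varphi(x) := \frac{(x-\bar L)^2}{a^2} + \frac{\gamma^2 x^2}{\bar\mu^2 b^2} \le 1,\qquad x \in [\bar\mu, \bar L].
\end{equation*}
Since $\varphi$ is convex, it suffices to verify $\varphi(\bar\mu) \le 1$ and $\varphi(\bar L) \le 1$. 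The second reduces to $b^2 \ge \gamma^2 \bar L^2/\bar\mu^2$, which is exactly \eqref{eq: app consensus tau condition} rewritten (use $b^2 = \bar\mu\bar L - \bar\mu^2/4 \ge \tfrac{3}{4}\bar\mu\bar L$ when $\bar L \ge \bar\mu$). The first will be even easier since the first term equals $(\bar L - \bar\mu)^2/(\bar L - \bar\mu/2)^2 < 1$, leaving room for the $\gamma^2/b^2$ term; the same bound on $b^2$ handles it. I expect this containment step to be the main technical obstacle, because it requires picking the ellipse parameters so that the trapezoid fits tightly on both the left edge (where $|\Im z|$ is $\gamma$) and the right edge (where $|\Im z|$ can be as large as $\gamma\bar L/\bar\mu$).

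Once containment is established, I apply \cref{thm: ellipses} to $E(a,b,c)$ with the optimal $\alpha, \beta$ as in the statement; \cref{lemma: local convergence momentum} then yields $\rho(\J_{V^{\text{mom+cons.}}}(\omega^*,\omega^*)) \le \rho(a,b,c)$. A direct expansion using $c^2 - a^2 = (\bar\mu/2)(2\bar L - \bar\mu/2)$ and $b^2 + c^2 - a^2 = 2\bar\mu\bar L - \bar\mu^2/2$ gives
\begin{equation*}
    \rho(a,b,c) = \frac{\bar L - \sqrt{2\bar\mu\bar L - \bar\mu^2/2}}{\bar L - \bar\mu/2 - \sqrt{\bar\mu\bar L - \bar\mu^2/4}} = 1 - (\sqrt 2 - 1)\sqrt{\tfrac{\bar\mu}{\bar L}} + \bigO\!\left(\tfrac{\bar\mu}{\bar L}\right)
\end{equation*}
as $\bar\mu/\bar L \to 0$, which is the claimed intermediate rate. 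Finally, for the specialization $\tau = L/\gamma^2$ I will check that $\bar\mu = \mu + L$ and $\bar L = L + L^3/\gamma^2$, so that $\bar\mu/\bar L = \gamma^2(\mu + L)/(L(\gamma^2 + L^2)) = \gamma^2/L^2 + \bigO(\gamma^4/L^4)$; both hypotheses ($\tau\gamma^2 \ge \mu$ and \eqref{eq: app consensus tau condition}) follow from $\mu \le L$ and elementary manipulations, and substituting into the previous display produces the stated $1 - (\sqrt 2 - 1)\gamma/L + \bigO(\gamma^2/L^2)$ rate.
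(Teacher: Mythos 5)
Your proposal follows essentially the same route as the paper's proof: compute $\J_{F^{\text{cons.}}}(\omega^*)=\J_F(\omega^*)+\tau\J_F(\omega^*)^T\J_F(\omega^*)$, trap its spectrum in the trapezoid via \cref{lemma: consensus optimization spectrum}, fit that trapezoid into exactly the ellipse appearing in the statement by checking only the two upper corner points (convexity plus symmetry), invoke the momentum-optimality result for ellipses, and specialize $\tau=L/\gamma^2$. The only structural difference is cosmetic: you apply \cref{thm: ellipses} directly and expand $\rho(a,b,c)$ by hand, whereas the paper routes through the $\theta=\tfrac12$ case of \cref{prop: perturbed acceleration 2}; your expansion of $\rho(a,b,c)$ is correct and gives the same $1-(\sqrt2-1)\sqrt{\bar\mu/\bar L}$ behaviour.

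One concrete flaw in your write-up: to meet the stated hypothesis $\tau\gamma^2\geq\mu(1+2\tau\mu)$ of \cref{lemma: consensus optimization spectrum} you add the condition $2\tau\mu\leq 1$ and claim it is ``certainly true for $\tau=L/\gamma^2$''; it is not, since $2\tau\mu=2L\mu/\gamma^2$ can far exceed $1$ (e.g.\ $\mu$ comparable to $\gamma$ with $\gamma$ close to $L$), so as written your argument would silently restrict the result. The paper instead applies the lemma under $\tau\gamma^2\geq\mu$ alone, which is all the lemma's proof actually uses (monotonicity of $x\mapsto x/(\mu+\tau x^2)$ beyond $\sqrt{\mu/\tau}$); you should either note this or reprove the ratio bound under $\tau\gamma^2\geq\mu$. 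Also, your claim that $b^2\geq\gamma^2\bar L^2/\bar\mu^2$ is ``exactly'' \eqref{eq: app consensus tau condition} rewritten is loose by a constant: the hypothesis gives $q^2\bar L^2\leq\tfrac32\bar\mu\bar L$ while the containment needs $q^2\bar L^2\leq\bar\mu\bar L-\bar\mu^2/4$; the paper's own proof carries the same constant-level slack at this step, so this is not a deviation from the paper, but it should not be presented as an exact equivalence.
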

\begin{proof}
Similarly to the proof of \cref{prop: extragradient}, we want to apply \cref{prop: perturbed acceleration 2}. We now look for $\epsilon, \bar\mu, \bar L$ such that the ellipsis 
\begin{equation}
K(\epsilon, \bar\mu, \bar L) = \Bigg\{z \in \C: \bigg(\frac{\Re z - \frac{\bar\mu + \bar L}{2}}{\frac{\bar L-\bar\mu}{2}}\bigg)^2 + \left(\frac{\Im z}{\epsilon}\right)^2 \leq 1\Bigg\} \,
\end{equation}
contains $\Sp \nabla\J_{F^{\text{cons.}}}(\omega^*)$. 
First we compute $\J_{F^{\text{cons.}}}(\omega^*)$, for $F$ twice differentiable. Note that $F^{\text{cons.}}$ can be written as $F^{\text{cons.}}(\omega) = F(\omega) + \tau \J_F^T(\omega) F(\omega)$, thus
\[
    \J_{F^{\text{cons.}}}(\omega^*) = \J_F(\omega^*) + \tau \J_F^T(\omega) \J_F(\omega)\,.
\]
Note that the derivative of $\J_F$ does not appear as $F(\omega^*) = 0$.
From \cref{lemma: consensus optimization spectrum} with $A = \J_F(\omega^*)$,  we have a control on
\[
    q(\tau) \coloneqq \frac{\gamma}{\mu + \tau \gamma^2} \geq \max \left\{\left.\frac{|\Im \lambda|}{|\Re\lambda|}\, \right|\, \lambda \in \Sp(\J_{F^{\text{cons.}}}(\omega^*))\right\} \geq 0\,.
\]
Using $q(\tau)$ and the bounds on the real parts of \cref{lemma: consensus optimization spectrum}, we have that the spectrum of $\J_{F^{\text{cons.}}}(\omega^*)$ is inside the following shape,
\begin{equation*}
    \Sp \J_{F^{\text{cons.}}}(\omega^*) \subset S(\tau) \coloneqq \{\lambda \in \C\ |\ \mu + \tau \gamma^2 \leq \Re \lambda \leq L + \tau L^2,\, |\Im \lambda| \leq q(\tau) \Re \lambda\}\,.    
\end{equation*}
We now only seek to include $S(\tau)$ in an ellipse $K(\epsilon, \bar \mu, \bar L)$. First, we show that we can focus on two points, i.e.~we prove that if $(1 + iq(\tau))(\mu + \tau\gamma^2)$ and $(1 + iq(\tau))(L + \tau L^2)$ belong to $K(\epsilon, \bar \mu, \bar L)$, then $S(\tau) \subset K(\epsilon, \bar \mu, \bar L)$. 

We have that $S(\tau) \cap \{\Re z \geq 0\}$ is a trapezoid, the convex hull of the four points
\[
    (1 + iq(\tau))(\mu + \tau\gamma^2)\;; \quad (1 - iq(\tau)(\mu + \tau\gamma^2)\;; \quad (1 + iq(\tau))(L + \tau L^2)\;; \quad (1 -iq(\tau))(L + \tau L^2).
\]
As $K(\epsilon, \bar \mu, \bar L)$ is convex, we only need to show that these four points belong to the ellipsis. By horizontal symmetry, we can restrict our analysis to the two points $(1 + iq(\tau))(\mu + \tau\gamma^2)$ and  $(1 + iq(\tau))(L + \tau L^2)$.

Therefore, we focus on choosing a symmetric ellipse $K(\epsilon, \bar \mu, \bar L)$ to which $(1 + iq(\tau))(\mu + \tau\gamma^2)$ and $(1 + iq(\tau))(L + \tau L^2)$ belong. The construction of this ellipse is similar to the one of the proof of \cref{prop: extragradient}. Since $(1 + iq(\tau))(L + \tau L^2)$ is the farthest point  from the real axis, to be able to choose $\epsilon$ as small as possible, we put the center of the ellipse at $(L + \tau L^2)$. This way, we force $\frac{\bar L + \bar \mu}{2} = L + \tau L^2$. 

To make sure $(1 + iq(\tau))(\mu + \tau\gamma^2)$ is also in the ellipsis, we need to choose $\bar\mu$ small enough. Define $\bar \mu = \frac{\mu + \tau \gamma^2}{m}$ with $m \geq 2$. This fixes the value of $\bar L$ as,
\[ 
    \bar L = 2(L + \tau L^2) - \bar \mu = 2(L + \tau L^2) - \frac{\mu + \tau \gamma^2}{m}\,.
\]
We now take $\epsilon > 0$ such that $(1 + iq(\tau))(L + \tau L^2)$ is in the ellipsis. We thus have the condition $\epsilon \geq q(\tau)(L + \tau L^2)$. The precise value of $\epsilon$ will be chosen later.

We must now check that $(1 +iq(\tau))(\mu + \tau \gamma^2) \in K(\epsilon, \bar \mu, \bar L)$. For this we check that this point satisfies the equation of the ellipsis, 
\begin{align}
&\left(\frac{\mu + \tau \gamma^2- \frac{\bar\mu + \bar L}{2}}{\frac{\bar L-\bar\mu}{2}}\right)^2 + q(\tau)^2\left(\frac{\mu + \tau \gamma^2}{\epsilon}\right)^2\\
&=\left(\frac{\mu + \tau \gamma^2- (L + \tau L^2)}{L + \tau L^2 - \frac{\mu + \tau \gamma^2}{m}}\right)^2 + q(\tau)^2\left(\frac{\mu + \tau \gamma^2}{\epsilon}\right)^2\\
&\leq\left(\frac{\mu + \tau \gamma^2- (L + \tau L^2)}{L + \tau L^2 - \frac{\mu + \tau \gamma^2}{m}}\right)^2 + \left(\frac{\mu + \tau \gamma^2}{L + \tau L^2}\right)^2\,,
\end{align}
by the choice of $\epsilon$. Now let $x = \frac{\mu + \tau \gamma^2}{L + \tau L^2} \in [0, 1]$. We have,
\begin{align*}
\left(\frac{\mu + \tau \gamma^2- \frac{\bar\mu + \bar L}{2}}{\frac{\bar L-\bar\mu}{2}}\right)^2 + q(\tau)^2\left(\frac{\mu + \tau \gamma^2}{\epsilon}\right)^2 &\leq \left(\frac{1 - x}{1 - \frac{x}{m}}\right)^2 + x^2\\
&\leq \left(\frac{1 - x}{1 - \frac{x}{m}}\right)^2 + x\\
&\leq 1\,,
\end{align*}
by application of \cref{lemma: small polynomial lemma} as $m \geq 2$.

We now fix $\epsilon$. We want to take $\epsilon = \sqrt{\bar \mu \bar L}$ so we can apply \cref{prop: perturbed acceleration 2}. Thus we need $q(\tau)(L + \tau L^2) \leq \sqrt{\bar \mu \bar L}$. Substituting for $\bar \mu$ and $\bar L$, as $\bar L = 2(L + \tau L^2) - (\mu + \tau \gamma^2)/2 \geq \frac{3}{2}(L + \tau L^2)$, this is implied by $q(\tau) \leq  \sqrt{\frac{3}{2}}\sqrt{\frac{\mu + \tau \gamma^2}{L + \tau L^2}}$. Now, if $\tau \gamma^2 \geq \mu$, we can apply \cref{lemma: consensus optimization spectrum} and obtain the bound $q(\tau) \leq \frac{\mu + \tau \gamma^2}{L + \tau L^2}$.
Hence, if 
\begin{equation}
    \frac{\gamma}{\mu + \tau\gamma^2} \leq \sqrt{\frac{3}{2}}\sqrt{\frac{\mu + \tau\gamma^2}{L + \tau L^2}}\,,
\end{equation}
we can apply \cref{prop: perturbed acceleration 2} with $\epsilon = \sqrt{\bar \mu \bar L}$. Hence, one can choose $\alpha, \beta > 0$ such that,
\begin{align*}
    \rho(\J_{V^{\text{mom+cons.}}}(\omega^*, \omega^*))) &\leq 1 - 2(\sqrt 2 - 1)\sqrt{\frac{\bar \mu}{\bar L}} + \bgO{\frac{\bar \mu}{\bar L}},\\
    &= 1 - 2(\sqrt 2 - 1)\sqrt{\frac{\mu+\tau \gamma^2}{2m(L + \tau L^2) - (\mu + \tau \gamma^2)}} + \bgO{\frac{\mu+\tau \gamma^2}{L + \tau L^2}},\\
    &\leq 1 - (\sqrt 2 - 1)\sqrt{\frac{\mu + \tau \gamma^2}{L + \tau L^2}} + \bgO{\frac{\mu+\tau \gamma^2}{L + \tau L^2}},\\
\end{align*}
as $m = 2$. This yields the first part of the proposition. We now need to find an admissible $\tau$.

Assume $\tau L \geq 1$ and so $L \leq {\tau L^2}$. Then,
\begin{align*}
&\frac{\gamma}{\mu + \tau\gamma^2} \leq \sqrt \frac{3}{2}\sqrt{\frac{\mu + \tau\gamma^2}{L + \tau L^2}},\\
&\impliedby \frac{\gamma}{\mu + \tau\gamma^2} \leq \frac{\sqrt 3}{2}\sqrt{\frac{\mu + \tau\gamma^2}{\tau L^2}},\\
&\iff \frac{\gamma}{\mu + \tau\gamma^2} \leq \frac{\sqrt 3}{2L} \sqrt{\frac{\mu + \tau\gamma^2}{\tau}},\\
&\iff \frac{r}{\mu + \tau\gamma^2} \leq \frac{\sqrt 3}{2L} \sqrt{\frac{\mu + \tau\gamma^2}{\tau \gamma^2}},\\
&\impliedby\frac{1}{\mu + \tau\gamma^2} \leq \frac{\sqrt 3}{2L} \sqrt{\frac{\mu + \tau\gamma^2}{\mu + \tau \gamma^2}},\\  
&\iff \frac{1}{\mu + \tau\gamma^2} \leq \frac{\sqrt 3}{2L}\,.
\end{align*}
After rearranging, we get that the last condition is equivalent to,
\begin{align*}
    \tau \geq \frac{\frac{2}{\sqrt 3}L - \mu }{\gamma^2}\,,
\end{align*}
which is implied by $\tau \geq \frac{L}{\gamma^2}$.

Then, if $\tau \geq \frac{L}{\gamma^2}$, $\tau L \geq 1$ and $\tau \gamma^2 \geq \mu$ and so this condition implies $q(\tau)(L+\tau L^2) \leq \sqrt{\bar \mu \bar L}$, which is what we wanted.

Then, for $\tau = \frac{L}{\gamma^2}$, we have 
\begin{align*}
\frac{\mu + \tau \gamma^2}{L + \tau L^2} &= \frac{\gamma^2}{L^2}\frac{1 + \mu/L}{1 + \gamma^2/L^2}\\
&\geq \frac{\gamma^2}{L^2}\frac{1}{1 + \gamma^2/L^2}\\
&= \frac{\gamma^2}{L^2} + \bgO{\frac{\gamma^4}{L^4}}\\
\end{align*}
and also in particular $\bgO{\frac{\mu + \tau \gamma^2}{L + \tau L^2}} = \bgO{\frac{\gamma^2}{L^2}}$.
Hence, \begin{align*}
    \rho(\J_{V^{\text{mom+cons.}}}(\omega^*, \omega^*)))
    &\leq 1 - (\sqrt 2 - 1)\frac{\gamma}{L} + \bgO{\frac{\gamma^2}{L^2}}\,.
\end{align*}
\end{proof}
\begin{rmk}
Note that, this rate is roughly similar to the one that can be obtained with the standard momentum method applied to minimizing the objective $$f(\omega) = \frac{1}{2}\|F\|^2\,.$$
Indeed, one can check, at a stationary point $\omega^*$, the eigenvalues of of the Hessian of $f$ are in $[\gamma^2, L^2]$ (with the notations of the previous proposition).
So applying \cref{thm: polyak momentum} would yield a local convergence rate of $\bgO{\left(1 - \frac{2\gamma}{L + \gamma}\right)^t}$.

One could then wonder what is the advantage of Consensus Optimization over the latter.
Actually a plain gradient descent on $\frac{1}{2}\|F\|^2$ does not behave well in practice unlike Consensus Optimization \citep{meschederNumericsGANs2017a} and can be attracted to unstable equilibria in non-monotone landscapes \citep{letcherDifferentiableGameMechanics2019}.
\end{rmk}
\begin{rmk}
Though this is not the focus of this paper, similarly to the result of \citet{abernethyLastiterateConvergenceRates2019a} in the non-accelerated case, taking $\tau$ slightly higher, such as $\tau = \frac{2L}{\gamma^2}$, guarantees this same accelerated rate even in non-monotone setting. Indeed, all we need is that $\min_{\lambda \in \Sp \J_F(\omega^*)} \Re \lambda + \tau \gamma^2 > 0$, which is always satisfied by $\tau = \frac{2L}{\gamma^2}$ as the eigenvalues of $\J_F(\omega^*)$ are bounded by $L$.
\end{rmk}

\newpage
\section{Ellipses}\label{section: ellipses}
\subsection{Main results}
We recall the definition of the ellipses which interests us. Define, for $a, b, c \geq 0$, the ellipse:
\begin{equation}
    E(a, b, c) = \left\{\lambda \in \C: \frac{(\Re \lambda - c)^2}{a^2} + \frac{(\Im \lambda)^2}{b^2} \leq 1\right\}\,.
\end{equation}
We adopt the convention that $\frac{0}{0} = 0$ so that for $b = 0$ the ellipse $E(a, b, c)$ degenerates into a real segment.

We now need to define objects related to the momentum method, and in particular its \emph{$\rho$-convergence region}.
For $\alpha, \rho \geq 0$, $\beta \in \R$, define 
\begin{equation}
    S(\alpha, \beta, \rho) = \{\lambda \in \C: \forall z \in \C,\, z^2 - (1 - \alpha \lambda + \beta) z + \beta = 0 \implies |z| \leq \rho\}\,.
\end{equation}
We call it the {$\rho$-convergence region} of the momentum method as it corresponds to the maximal regions of the complex plane where the momentum method converges at speed $\bgO{\rho^t}$ if the operator has its eigenvalues in this zone. This is formalized by the following lemma,
\begin{lemma}[{\citet[II.7]{saulyevIntegrationEquationsParabolic1964}, \citet{polyakMethodsSpeedingConvergence1964}, \citet[Thm.~3]{gidelNegativeMomentumImproved2018b}}]
Denote the momentum operator applied to the vector field $F$ by
\begin{equation}\label{eq: app ellipses momentum operator}
V(\omega,\omega') = (\omega - \alpha F(\omega) + \beta(\omega - \omega'), \omega')
\end{equation}
with $\alpha \geq 0$ step size and $\beta \in \R$ momentum parameter. Then, for any $\rho \geq 0$,
\begin{equation}
\rho(\nabla V(\omega^*, \omega^*)) \leq \rho
\end{equation}
if and only if $\Sp \nabla F(\omega^*) \subset S(\alpha, \beta, \rho)$.
\end{lemma}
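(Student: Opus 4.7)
The plan is to compute explicitly the Jacobian of the augmented operator $V$ at $(\omega^*, \omega^*)$, recognize its block structure, and reduce the computation of its spectral radius to a family of $2\times 2$ eigenvalue problems indexed by the eigenvalues of $\J_F(\omega^*)$. Writing $J = \J_F(\omega^*)$ (and noting that the augmented operator I use is the standard one, with the second component equal to $\omega$ rather than $\omega'$, as required for the fixed-point formulation to make sense), we have
\[
\nabla V(\omega^*,\omega^*) = \begin{pmatrix} (1+\beta) I_d - \alpha J & -\beta I_d \\ I_d & 0 \end{pmatrix}.
\]

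Next, I would use the fact that this block matrix commutes appropriately with $J$ to reduce its spectrum. Concretely, writing $J$ in Schur form $J = UTU^{-1}$ with $T$ upper triangular and conjugating $\nabla V(\omega^*,\omega^*)$ by $\mathrm{diag}(U,U)$, one obtains a block matrix of the same shape but with $J$ replaced by $T$. A simultaneous block permutation then makes the entire matrix block upper triangular, whose diagonal blocks are exactly the $2\times 2$ matrices
\[
M_\lambda = \begin{pmatrix} 1 - \alpha\lambda + \beta & -\beta \\ 1 & 0 \end{pmatrix}, \qquad \lambda \in \Sp(J).
\]
Hence the spectrum of $\nabla V(\omega^*,\omega^*)$ is exactly $\bigcup_{\lambda\in \Sp(J)} \Sp(M_\lambda)$, and $\Sp(M_\lambda)$ consists precisely of the roots $z$ of
\[
z^2 - (1 - \alpha \lambda + \beta) z + \beta = 0.
\]

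From this characterization the equivalence is immediate: $\rho(\nabla V(\omega^*,\omega^*)) \leq \rho$ if and only if, for every $\lambda \in \Sp(J)$, both roots $z$ of the above quadratic satisfy $|z| \leq \rho$, which is exactly the defining condition for $\lambda \in S(\alpha,\beta,\rho)$. The equivalence $\rho(\nabla V(\omega^*,\omega^*)) \leq \rho \iff \Sp J \subset S(\alpha,\beta,\rho)$ follows.

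The only mildly delicate step is the block reduction used to show that the spectrum of $\nabla V(\omega^*,\omega^*)$ is the union of the spectra of the $M_\lambda$; I expect this to be the main obstacle to a fully rigorous write-up, since one needs to handle carefully the case where $J$ is not diagonalizable. This is why the Schur triangularization argument is preferable to a naive diagonalization: it works in general, reducing the problem to a block-upper-triangular matrix whose spectrum is manifestly the union of the spectra of its diagonal $2\times 2$ blocks, regardless of the Jordan structure of $J$. Everything else is a direct computation.
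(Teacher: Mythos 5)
Your proof is correct, and it is essentially the argument the paper relies on implicitly: the paper does not prove this lemma itself but points to the proof of Thm.~3 of \citet{gidelNegativeMomentumImproved2018b} (and Polyak), which proceeds exactly as you do, by writing the Jacobian of the augmented momentum operator as the companion-type block matrix and reducing its spectrum to the roots of $z^2-(1-\alpha\lambda+\beta)z+\beta=0$ for $\lambda\in\Sp \J_F(\omega^*)$; the equivalence with $\Sp \J_F(\omega^*)\subset S(\alpha,\beta,\rho)$ is then just the definition of $S$. Two remarks. First, your Schur-triangularization step is a clean way to get the \emph{exact} equality $\Sp \nabla V(\omega^*,\omega^*)=\bigcup_{\lambda\in\Sp \J_F(\omega^*)}\Sp M_\lambda$ without assuming $\J_F(\omega^*)$ diagonalizable (which is needed for the ``only if'' direction); an equivalent shortcut is the block-determinant identity for commuting blocks, which gives the characteristic polynomial $\det\bigl(z^2 I - z((1+\beta)I-\alpha \J_F(\omega^*)) + \beta I\bigr)$ and hence the same product of quadratics. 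Second, you were right to take the second component of the augmented operator to be $\omega$ rather than $\omega'$: as literally written in the statement, the Jacobian would contain an identity diagonal block and have spectral radius at least one, so the displayed definition of $V$ is a typo and the intended operator is the standard augmentation $(\omega,\omega')\mapsto(\omega-\alpha F(\omega)+\beta(\omega-\omega'),\omega)$, consistent with the augmented operator used in the paper's local-convergence lemma for momentum.
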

For a proof of this lemma in the context of games, see the proof of Thm.~3 of \citet{gidelNegativeMomentumImproved2018b}.

The next is a geometrical characterization of $S(\alpha, \beta, \rho)$: this is an ellipse, which is described in the following lemma.
\begin{lemma}[{\citet[Cor.~6]{niethammerAnalysisOfkstepIterative1983}}]\label{lemma: momentum convergence region}
If $|\beta| > \rho^2$, $S(\alpha, \beta, \rho) = \emptyset$. Otherwise, if $|\beta| \leq \rho^2$ and $\rho > 0$,
\begin{equation}
S(\alpha, \beta, \rho) = \left\{ \lambda \in \C: \frac{(1 - \alpha \Re\lambda + \beta)^2}{(1+\tau)^2} + \frac{(\alpha\Im \lambda)^2}{(1 - \tau)^2} \leq \rho^2 \right\}\,,
\end{equation}
where $\tau = \frac{\beta}{\rho^2}$.
\end{lemma}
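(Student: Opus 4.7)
The plan is to reduce the problem to a clean Joukowski-style parametrization after rescaling. Write $u = z/\rho$; then both roots of $z^2 - (1-\alpha\lambda+\beta)z + \beta = 0$ satisfy $|z|\leq \rho$ iff both roots of
\[
u^2 - A u + \tau = 0, \quad A = \tfrac{1-\alpha\lambda+\beta}{\rho}, \quad \tau = \tfrac{\beta}{\rho^2},
\]
satisfy $|u|\leq 1$. The necessity $|\beta|\leq \rho^2$ (first part of the lemma) is immediate from Vieta: $|u_1 u_2| = |\tau|$, so if $S(\alpha,\beta,\rho)\neq\emptyset$ then $|\tau|\leq 1$. From now on assume $|\tau|\leq 1$.

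The second (main) part reduces to a purely algebraic claim: for $\tau\in[-1,1]$ and $A\in\C$, the quadratic $u^2 - Au + \tau$ has both roots in the closed unit disk iff
\[
\frac{(\Re A)^2}{(1+\tau)^2} + \frac{(\Im A)^2}{(1-\tau)^2} \leq 1,
\]
with the convention $x^2/0 = 0$ for $x=0$ when $\tau=\pm 1$. To prove this I would parametrize the admissible roots. Let $u_1,u_2$ be the roots; since $|u_1||u_2|=|\tau|$, the larger one satisfies $r := \max(|u_1|,|u_2|)\in[\sqrt{|\tau|},1]$. Writing $u_1 = r e^{i\phi}$ forces $u_2 = \tau/u_1 = (\tau/r)e^{-i\phi}$. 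Then
\[
A = u_1 + u_2 = \bigl(r + \tfrac{\tau}{r}\bigr)\cos\phi + i\bigl(r - \tfrac{\tau}{r}\bigr)\sin\phi.
\]
As $\phi$ ranges over $[0,2\pi)$ with $r$ fixed, $A$ traces the ellipse with semi-axes $|r+\tau/r|$ and $|r-\tau/r|$, which are confocal with foci at $\pm 2\sqrt{\tau}$ (the discriminant locus) since $(r+\tau/r)^2 - (r-\tau/r)^2 = 4\tau$. As $r$ varies over $[\sqrt{|\tau|},1]$, the semi-axes grow monotonically, sweeping out exactly the filled ellipse with extremal semi-axes at $r=1$, namely $1+\tau$ (real direction) and $1-\tau$ (imaginary direction). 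This gives the ``iff'' characterization of $A$.

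The main obstacle is checking the monotone sweep of confocal ellipses covers the entire filled ellipse and nothing more, especially handling the two sign regimes $\tau\geq 0$ and $\tau<0$ uniformly (in the $\tau<0$ case the roles of the semi-axes swap, but the formula $(1+\tau)^2, (1-\tau)^2$ is symmetric under this swap, so the final statement is unchanged). I would verify monotonicity of $r\mapsto r + \tau/r$ and $r\mapsto |r-\tau/r|$ on $[\sqrt{|\tau|},1]$ by a short calculus check, and confirm the degenerate cases $\tau = \pm 1$ (where one semi-axis collapses to zero and the ellipse degenerates to a segment) match the convention. Substituting back $\Re A = (1-\alpha\Re\lambda+\beta)/\rho$ and $\Im A = -\alpha\Im\lambda/\rho$, then multiplying through by $\rho^2$, recovers exactly the stated inequality and completes the proof.
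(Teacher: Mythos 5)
Your argument is correct, and it is genuinely different from what the paper does: the paper does not prove this lemma at all, but imports it as Corollary~6 of Niethammer and Varga, spending its ``proof'' on a dictionary between notations ($\alpha=\mu_0$, $\beta=-\mu_2$, $S_\eta(p)=1-S(\alpha,\beta,\rho)$ with $\eta=1/\rho$, semiaxes read off from their equation (6.3)), and then remarks, without details, that one could instead prove it by hand via a Schur--Cohn-type criterion for both roots of a quadratic to have modulus at most $\rho$. Your route is a self-contained derivation: after the rescaling $u=z/\rho$ (which is where the hypothesis $\rho>0$ enters; the emptiness claim for $|\beta|>\rho^2$ follows from Vieta even when $\rho=0$), you parametrize all root pairs with fixed real product $\tau$ as $u_1=re^{i\phi}$, $u_2=(\tau/r)e^{-i\phi}$, so that $A=u_1+u_2$ traces the confocal family of ellipses with semiaxes $|r+\tau/r|$, $|r-\tau/r|$ and foci $\pm2\sqrt{\tau}$, and the sweep over $r\in[\sqrt{|\tau|},1]$ fills exactly the ellipse with semiaxes $1+\tau$ and $|1-\tau|$; substituting back gives the stated region. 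Interestingly, this is the same Joukowski-type map $u\mapsto u+\tau/u$ that the paper itself deploys later (in its proof that $\acf(S(\alpha,\beta,\rho))=\rho$ via Green functions), so your proof makes the lemma consistent with the paper's own toolkit rather than an external citation. The two steps you flag as remaining — monotonicity of $r\mapsto r+\tau/r$ and $r\mapsto|r-\tau/r|$ on $[\sqrt{|\tau|},1]$, and the fact that the nested confocal boundaries cover the filled outer ellipse (an intermediate-value argument, with the real-axis or imaginary-axis points absorbed by the degenerate member at $r=\sqrt{|\tau|}$, and the cases $\tau\gtrless0$ swapped symmetrically) — are routine and close the proof; just make sure the degenerate cases $\tau=\pm1$ are stated with the same $0/0=0$ convention the paper uses for ellipses, and note that in your converse direction a point $A$ on a member ellipse genuinely arises from roots in the closed unit disk because a monic quadratic is determined by the sum and product of its roots.
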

As indicated, this lemma is a consequence of the results of \citet{niethammerAnalysisOfkstepIterative1983}, and more exaclty their section \S 6. However, their notations are significantly different from ours. We give a few elements to help the readers translate their results into our setting.
In $\S 6$ of \citet{niethammerAnalysisOfkstepIterative1983}, they study iterative methods of the form,
$$\omega_{t+1} = \mu_0 (\Id - F(\omega_t)) + \mu_1 \omega_t + \mu_2 \omega_{t-1}\,,$$
with $\mu_0+\mu_1+\mu_2=1$. Developing and using this relation, their iteration rule becomes,
$$\omega_{t+1} = \omega_t -\mu_0 F(\omega_t) + \mu_2(\omega_{t-1} - \omega_t)\,.$$
Identifying with \cref{eq: app ellipses momentum operator}, we get that $\alpha = \mu_0$, $\beta = -\mu_2$ and so $\mu_1 = 1 + \beta - \alpha$.

Moreover, what they denote by $S_\eta(p)$, where $p$ is a variable encompassing the parameters $\mu_0$,$\mu_1$ and $\mu_2$,  actually corresponds to $1 - S(\alpha, \beta, \rho)$ with $\alpha$, $\beta$ linked to $\mu_0$,$\mu_1$, $\mu_2$ as described above and $\eta = \frac{1}{\rho}$. Indeed\footnote{This is a standard convention in the linear system theory. They consider $\omega = T\omega + c$ instead of $A\omega + b$ as they use splittings of $A$.}, $S_\eta(p)$ is meant to be compared to the eigenvalues of $I_d - \nabla F(\omega^*)$ instead of $\nabla F(\omega^*)$.
Hence, the center and the semiaxes of the ellipse $1 - S(\alpha, \beta, \gamma)$ are given by $(6.3)$ of \citet{niethammerAnalysisOfkstepIterative1983} and once translated in our notations yield \cref{lemma: momentum convergence region}.
\begin{rmk}
This lemma actually does not require the complex analysis machinery of \citet{niethammerAnalysisOfkstepIterative1983}. This can be proven by hand using this remark on second-order equations.
Let $0 < \rho \leq 1$ and let $z_1, z_2$ denote the two (possibly equal) roots of $X^2 +bX +c$. Then, $$\max(|z_1|,|z_2|) \leq \rho \iff \begin{cases}
&|c| \leq \rho \\
&|b|^2 + |\Delta|\leq 2\left(\rho^2 + \frac{c^2}{\rho^2}\right)\,,\\ 
\end{cases}$$
where $\Delta = b^2 - 4c$ denote the discriminant of the equation.
\end{rmk}
Now, we can introduce one of the main results of \citet{niethammerAnalysisOfkstepIterative1983}. This is an answer to the natural question: what is $\acf(S(\alpha, \beta, \rho))$ ? In particular is it equal to $\rho$? In other words, is momentum optimal w.r.t.~to its convergence sets?
The answer is yes for the momentum method. Note however that this doe snot hold for all stationary methods, this is linked to tricky questions of existence of branch for the roots of some polynomial equations, see
\citet[\S 3.7]{nevanlinnaConvergenceIterationsLinear1993} for a discussion on this.

\begin{prop}[{\citet[Cor.~10]{niethammerAnalysisOfkstepIterative1983}}]\label{prop: acf for momentum convergence region}
Assume $|\beta| \leq \rho^2 < 1$ and $\alpha > 0$, then \begin{equation}
    \acf(S(\alpha, \beta, \rho)) = \rho\,.
\end{equation}
\end{prop}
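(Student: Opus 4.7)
The plan is to bracket $\acf(S(\alpha,\beta,\rho))$ above and below by $\rho$. The upper bound $\acf(S) \leq \rho$ comes essentially for free from the momentum method itself. By Lemma~\ref{lemma: first-order method}, the momentum iteration with parameters $(\alpha,\beta)$ applied to a linear vector field produces polynomials $p_t \in \mathcal{P}_t$ obeying the three-term recurrence $p_{t+1}(\lambda) = (1+\beta-\alpha\lambda)p_t(\lambda) - \beta p_{t-1}(\lambda)$ with $p_0 \equiv 1$. The characteristic equation of this scalar linear recurrence is precisely $z^2 - (1+\beta-\alpha\lambda)z + \beta = 0$, and by the very definition of $S(\alpha,\beta,\rho)$, for $\lambda \in S$ both roots $z_\pm(\lambda)$ have modulus at most $\rho$. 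Writing $p_t(\lambda) = A(\lambda)z_+(\lambda)^t + B(\lambda)z_-(\lambda)^t$ (with the usual extra factor of $t$ at coincident roots) and noting that $A,B$ are continuous and hence bounded on the compact set $S$, one obtains $\max_{\lambda\in S}|p_t(\lambda)|^{1/t} \to \rho$, so $\acf(S) \leq \rho$.

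For the lower bound $\acf(S) \geq \rho$, the central tool is the Joukowski-type rational map $\Psi(w) := (1+\beta-w-\beta/w)/\alpha$, which, by the derivation behind Lemma~\ref{lemma: momentum convergence region}, carries the circle $\{|w|=\rho\}$ onto $\partial S$. The two preimages of any $\lambda$ satisfy $w_1 w_2 = \beta$ with $|\beta| \leq \rho^2$, so at most one preimage has $|w|>\rho$; combined with $0\notin S$ (which holds because $\rho<1$), this makes $\Psi$ a biholomorphism from $\{|w|>\rho\}\cup\{\infty\}$ onto $\hat{\mathbb C}\setminus S$. The two preimages of $0$ are $w=1$ and $w=\beta$, so $w=1$ is the external one. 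For any $p_t\in\mathcal{P}_t$ of actual degree $d\leq t$ (the case $d=0$ being immediate since then $\max_S|p_t|=1 \geq \rho^t$), define $Q(w) := w^d\,p_t(\Psi(w))$; the factor $w^d$ cancels the order-$d$ pole of $p_t\circ\Psi$ at $0$, so $Q$ extends to a polynomial of degree exactly $2d$ in $w$. The normalization $p_t(0)=1$ together with $\Psi(1)=0$ yields $Q(1)=1$, and on the inner boundary $|Q(w)| = \rho^d|p_t(\Psi(w))|$, with $\Psi(\{|w|=\rho\}) = \partial S$.

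I would then apply Hadamard's three-circles theorem to $Q$ on the annulus $\rho\leq|w|\leq R$: the function $r\mapsto\log M_Q(r)$ is convex in $\log r$, so $\log M_Q(1) \leq \theta(R)\log M_Q(\rho) + (1-\theta(R))\log M_Q(R)$ with $\theta(R) = \log R/\log(R/\rho)$. Using the asymptotic $M_Q(R) = |q_{2d}|R^{2d}(1+o(1))$, one checks that $\theta(R)\to 1$ and $(1-\theta(R))\log M_Q(R) \to -2d\log\rho$ as $R\to\infty$; combined with $\log M_Q(1) \geq 0$, this yields $M_Q(\rho) \geq \rho^{2d}$. Since $|Q(w)| = \rho^d|p_t(\Psi(w))|$ on $|w|=\rho$ and the maximum principle on $S$ gives $\max_{\lambda\in S}|p_t(\lambda)|=\max_{\lambda\in\partial S}|p_t(\lambda)|$, we deduce $\max_{\lambda\in S}|p_t(\lambda)| \geq \rho^d \geq \rho^t$ (using $\rho<1$ and $d\leq t$), and taking $t$-th roots gives $\acf(S) \geq \rho$.

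The main obstacle I foresee is the careful bookkeeping in this Hadamard step, particularly the asymptotic $(1-\theta(R))\log M_Q(R) \to -2d\log\rho$, as well as verifying that $\Psi$ is a genuine biholomorphism onto $\hat{\mathbb C}\setminus S$ under the assumption $|\beta|\leq\rho^2$. Both are entirely elementary but slightly delicate; a more conceptual reformulation would phrase everything in terms of the Green's function $g_{\hat{\mathbb C}\setminus S}(\cdot,0)$ pulled back through $\Psi$ to the explicit Green's function on the exterior of a disk, whose value at $\infty$ is exactly $\log(1/\rho)$—but Hadamard's three circles suffices and avoids any potential-theoretic overhead. The degenerate sub-cases ($d=0$, coincident roots $z_+=z_-$) are standard continuity/limit arguments.
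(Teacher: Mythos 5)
Your proposal is correct, but it proves the result by a genuinely different route than the paper. The paper's argument (\cref{prop: acf for momentum convergence region 2}) invokes the potential-theoretic identity $\acf(K)=\exp(-g(0))$ of \cref{thm: link asf green function} and explicitly builds the Green function of $\bar\C\setminus S(\alpha,\beta,\rho)$ by pulling back through the inverse Joukowsky map, i.e.\ via $\chi(\lambda)=\sqrt{\beta}\,\phi\bigl(\tfrac{1-\alpha\lambda+\beta}{2\sqrt{\beta}}\bigr)$, verifying harmonicity, the $\log|\lambda|+\bigO(1)$ growth and the boundary condition, and then evaluating at $0$ (the roots there being $1$ and $\beta$); this yields both inequalities at once but only for $0<|\beta|<\rho^2$, the disc and segment cases being treated elsewhere. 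You instead prove the two inequalities separately and elementarily: the upper bound from the three-term recurrence of the momentum residual polynomials (this is fine, though note $A(\lambda),B(\lambda)$ are \emph{not} continuous across the coincident-root locus; the clean fix is the divided-difference form $\sum_j z_+^j z_-^{t-j}$, which gives the uniform bound $\max_S|p_t|\leq C(t{+}1)\rho^t$ you need), and the lower bound by transplanting an arbitrary $p_t\in\P_t$ through the forward map $\Psi$ (the inverse of the paper's $\chi$) and applying Hadamard's three circles to $Q(w)=w^d p_t(\Psi(w))$, whose asymptotics you computed correctly. Two remarks: the biholomorphism and maximum-principle steps are not actually needed, since $|w|=\rho$ forces the companion root $\beta/w$ to have modulus $\leq\rho$, so $\Psi(\{|w|=\rho\})\subset S$ directly; and the three-circles bookkeeping can be replaced by the maximum principle applied to $Q(w)/w^{2d}$ on $\{|w|\geq\rho\}\cup\{\infty\}$, which gives $M_Q(\rho)\geq\rho^{2d}$ in one line. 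What your route buys is a self-contained argument that avoids the Green-function theorem of \citet{nevanlinnaConvergenceIterationsLinear1993} and, as a bonus, covers $\beta=0$ in the same breath; what it costs is having to prove the achievability (upper) direction by hand, which the paper gets for free from the equality $\acf(K)=\exp(-g(0))$, and a little extra care in the degenerate cases $\beta=0$ and $|\beta|=\rho^2$, which the paper likewise defers to other results.
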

Hence, momentum is optimal for the sets $S(\alpha, \beta, \rho)$. What is left to show is that the sets $S(\alpha, \beta, \gamma)$ can represent most ellipses $E(a, b, c)$.
\begin{prop}\label{prop: parametrizations ellipse}
Let $a, b \geq 0$, $c>0$, $(a, b) \neq (0, 0)$. There exists $\alpha > 0$, $\rho > 0$, $\beta \in (-1, 1]$, with $|\beta| \leq \rho$ such that $E(a, b, c) = S(\alpha, \beta,\rho)$ if and only if $a^2 \leq b^2 + c^2$. If it is the case, 
\begin{enumerate}
    \item The triple $(\alpha, \beta, \rho)$ satisfying such conditions is unique.
    \item The corresponding $\beta$ can be written $\beta = \chi(a - b)$ with $\chi > 0$.
    \item The corresponding $\rho$ is equal to:
    \begin{equation*}
    \rho = \begin{cases}
    \frac{a}{c} &\text{ if } a = b\\
    \frac{c - \sqrt{b^2 + c^2 - a^2}}{a -b} &\text{ otherwise.}
    \end{cases}
    \end{equation*}
    \item The parameters $\alpha > 0$ and $\beta \in (-1, 1]$ are given by,
    \begin{equation}
    \beta = 
    \begin{cases}
    0 &\text{ if } a=b\\
    2c\frac{c - \sqrt{c^2 + b^2 -  a^2}}{a^2 - b^2} - 1 &\text{ otherwise,}
    \end{cases}
    \quad\quad\quad \alpha = \frac{1+\beta}{c} = 
        \begin{cases}
    \frac{1}{c} &\text{ if } a=b\\
    2\frac{c - \sqrt{c^2 + b^2 -  a^2}}{a^2 - b^2} &\text{ otherwise,}
    \end{cases}
    \end{equation}
\end{enumerate}
\end{prop}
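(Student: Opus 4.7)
The plan is to equate the ellipse description of $S(\alpha,\beta,\rho)$ given by \cref{lemma: momentum convergence region} with the target ellipse $E(a,b,c)$, and then solve the resulting system of three equations in the three unknowns $(\alpha,\beta,\rho)$. Rewriting the set $S(\alpha,\beta,\rho)$ by dividing through by $\rho^2$, its boundary is
\[
\frac{(\alpha \Re \lambda - (1+\beta))^2}{(\rho + \beta/\rho)^2} + \frac{(\alpha \Im \lambda)^2}{(\rho - \beta/\rho)^2} = 1,
\]
which is an axis-aligned ellipse centered on the real axis. Matching centers and semi-axes with $E(a,b,c)$ immediately yields $\alpha = (1+\beta)/c$, and then the two identities
\[
a\,\frac{1+\beta}{c} = \rho + \frac{\beta}{\rho}, \qquad b\,\frac{1+\beta}{c} = \rho - \frac{\beta}{\rho}.
\]

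Adding and subtracting these gives $\rho = (a+b)(1+\beta)/(2c)$ and $\beta/\rho = (a-b)(1+\beta)/(2c)$; multiplying them yields the scalar equation $4c^2 \beta = (a^2-b^2)(1+\beta)^2$, which is a quadratic in $\beta$ (trivial when $a=b$, where we read off $\beta=0$, $\rho=a/c$, $\alpha=1/c$). For $a\neq b$, the discriminant is proportional to $c^2 + b^2 - a^2$, so a real solution exists iff $a^2 \leq b^2+c^2$, which is the stated necessary and sufficient condition. The two roots are $\beta_{\pm} = -1 + 2c(c \pm \sqrt{c^2+b^2-a^2})/(a^2-b^2)$, and the displayed formula in the statement corresponds to the minus sign; I will check the plus sign is ruled out by the constraint $\beta \in (-1,1]$, splitting on $a>b$ versus $a<b$: in the first case $\beta_+ > 1$ (using $\sqrt{D}(c+\sqrt{D}) > 0$ with $D = c^2+b^2-a^2$), and in the second case $1+\beta_+ < 0$. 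This simultaneously establishes uniqueness.

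It then remains to verify that the chosen root indeed lies in the admissible region. From $\beta = (a^2-b^2)(1+\beta)^2/(4c^2)$ and $\rho^2 = (a+b)^2(1+\beta)^2/(4c^2)$, one computes $|\beta|/\rho^2 = |a-b|/(a+b) \leq 1$, so the constraint $|\beta|\leq\rho^2$ is automatic. A short sign analysis based on the sign of $a-b$ and on the sign of $c-\sqrt{c^2+b^2-a^2}$ (which coincide) shows both that $1+\beta>0$ and hence $\alpha>0$, $\rho>0$, and moreover that $\beta$ and $a-b$ share sign; indeed from the quadratic identity one reads $\beta = \chi(a-b)$ with $\chi = (a+b)(1+\beta)^2/(4c^2) > 0$, yielding assertion 2. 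Plugging the chosen root back into $\rho = (a+b)(1+\beta)/(2c)$ and simplifying gives the closed form $\rho = (c-\sqrt{c^2+b^2-a^2})/(a-b)$ of assertion 3, and $\alpha = (1+\beta)/c$ gives assertion 4.

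The main technical obstacle is the bookkeeping around the case split $a\lessgtr b$ when selecting the admissible root: the quadratic has the same algebraic form in both regimes, but the denominator $a^2-b^2$ changes sign, so the argument that $\beta_+$ is inadmissible proceeds through different constraints ($\beta \leq 1$ vs.\ $\beta > -1$) in the two cases; likewise, rewriting $\rho$ as $(c-\sqrt{c^2+b^2-a^2})/(a-b)$ from the raw expression $(a+b)(1+\beta)/(2c)$ requires carefully multiplying numerator and denominator by $a+b$ (or using the identity $(c-\sqrt{D})(c+\sqrt{D}) = a^2-b^2$) to obtain a form that is regular and positive on both sides of $a=b$. The degenerate subcase $a^2 = b^2+c^2$ collapses the two roots and produces $\beta=1$, $\rho=c/(a-b)\geq 1$, which is still consistent with the hypotheses of the proposition (although it sits on the boundary of stability, as $\rho<1$ is not required here).
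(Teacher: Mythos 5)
Your proposal follows essentially the same route as the paper's proof: match the center and semi-axes of $S(\alpha,\beta,\rho)$ (via the parametrization of \cref{lemma: momentum convergence region}) with those of $E(a,b,c)$, reduce to the quadratic $4c^2\beta=(a^2-b^2)(1+\beta)^2$ whose discriminant sign gives the condition $a^2\leq b^2+c^2$, discard the root $\beta_+$ by a case split on the sign of $a-b$, and read off $\rho$, $\alpha$, the sign relation $\beta=\chi(a-b)$, and $|\beta|\leq\rho^2$ from the system. The only small item to make explicit is the check that the retained root satisfies $\beta_-\leq 1$ (the paper verifies this via $\sqrt{\Delta}\geq\Delta$ when $a>b$ and a direct bound when $a<b$), which is a routine computation in the same spirit as the ones you already carry out.
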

\begin{proof}
Recall these two parametrizations of an ellipse,
\begin{align*}
    E(a, b, c) &= \left\{\lambda \in \C: \frac{(\Re \lambda - c)^2}{a^2} + \frac{(\Im \lambda)^2}{b^2} \leq 1\right\}\\
    S(\alpha, \beta, \rho) &= \left\{ \lambda \in \C: \frac{(1 - \alpha \Re\lambda + \beta)^2}{(1+\tau)^2} + \frac{(\alpha\Im \lambda)^2}{(1 - \tau)^2} \leq \rho^2 \right\}\,,
\end{align*}
where $\tau = \frac{\beta}{\rho^2}$.
Note that $(a, b) \neq (0,0)$, $E(a, b, c)$ is not reduced to a point. So if these ellipses are equal, $\rho > 0$, and we also have $\alpha > 0$
These ellipses are characterised by their centers and their semiaxes so they are equal if and only if,
\begin{equation}\label{eq: proof ellipse equation system}
\begin{cases}
&\frac{1+\beta}{\alpha} = c\\
&\rho + \frac{\beta}{\rho} = \alpha a\\
&\rho - \frac{\beta}{\rho} = \alpha b\\
\end{cases}
\iff
\begin{cases}
&\frac{1+\beta}{\alpha} = c\\
&\rho = \alpha (a + b)\\
&\frac{\beta}{\rho} = \alpha (b - a)\\
\end{cases}
\iff
\begin{cases}
&\frac{1+\beta}{\alpha} = c\\
&\rho = \frac{1 + \beta}{2}(\tilde a + \tilde b)\\
&\frac{\beta}{\rho} = \frac{1 + \beta}{2}(\tilde a - \tilde b)\,,\\
\end{cases}
\end{equation}
where $\tilde a = \frac{a}{c}$ and $\tilde b = \frac{b}{c}$. We further let $\tilde \beta = 1 + \beta$. Then, the last two equations imply the following equation on $\beta$,
\begin{equation}\label{eq: proof ellipse equation beta}
\beta = \frac{(1 + \beta)^2}{4}(\tilde a^2 - \tilde b^2) \iff \tilde \beta - 1 = \frac{\tilde \beta^2}{4}(\tilde a^2 - \tilde b^2)\,.  
\end{equation}
Its discriminant is $\Delta = 1 - (\tilde a ^2 - \tilde b ^2)$, which is non-negative if and only if $b^2 + c^2 \geq a^2$.

Before solving this equation, we briefly discuss when it degenerates into a degree one equation.
 Indeed, if $a = b$, and so $\tilde a = \tilde b$, the unique solution of \cref{eq: proof ellipse equation beta} is $\tilde \beta = 1$ and so $\beta = 0$. Moreover $\rho = \frac{\tilde a + \tilde b}{2} = \frac{a}{c}$. 
 
 We now assume that $\tilde a ^2 - \tilde b^2 \neq 0$. 
 The two solutions of \cref{eq: proof ellipse equation beta} are,
    $$\tilde \beta_{\pm} = 1 + \beta_{\pm} = 2\frac{1 \pm \sqrt{\Delta}}{\tilde a^2 - \tilde b^2}\,.$$
 
 We distinguish three cases.
\begin{itemize}
    \item If $\Delta = 0$. There is only one solution $\tilde \beta = 1 + \beta = 2$ to \cref{eq: proof ellipse equation beta} and so $\beta = 1$.
    \item If $0 < \Delta < 1$ then in particular $\tilde a > \tilde b$.
    As  $0 < \Delta  < 1$, we also have $0 < \tilde a ^2 - \tilde b^2 < 1$.
    This implies that $\tilde \beta_+ > 2(1 + \sqrt \Delta) > 2$ and so $\beta_+ > 1$ which do not satisfy the desired conditions on $\beta$. We show that $\beta_-$ satisfy them instead. As $\Delta < 1$, $\tilde \beta_- > 0$. Moreover, $\sqrt \Delta \geq \Delta$ and so $\tilde \beta_- \leq 2\frac{1 - \Delta}{\tilde a^2 - \tilde b^2} = 2$ and so $\beta_- \in (-1, 1]$.
    \item If $\Delta > 1$ and so $\tilde a < \tilde b$. One has immediately that $\tilde \beta_+ < 0$ which disqualifies $\beta_+$. On the contrary as $\Delta > 1$, $\tilde \beta_- > 0$ and $\tilde \beta_- = 2\frac{\sqrt{1 + \tilde b^2 - \tilde a^2} - 1}{\tilde b^2 - \tilde a^2} \leq 2\frac{1 + \sqrt{\tilde b^2 - \tilde a^2} - 1}{\tilde b^2 - \tilde a^2} = 2$. And so $\beta_- \in (-1, 1]$.
\end{itemize}
Note that the case $\Delta = 1$ is prevented by the assumption $\tilde a ^2 - \tilde b^2 \neq 0$. 

In each of the three cases above we ended up with,
$$\beta = \beta_- = \tilde \beta_{-} - 1 = 2\frac{1 - \sqrt{1 + \tilde b^2 - \tilde a^2}}{\tilde a^2 - \tilde b^2} - 1 = 2c\frac{c - \sqrt{c^2 + b^2 -  a^2}}{a^2 - b^2} - 1\in (-1, 1]\,.$$
Note that the third equation of \cref{eq: proof ellipse equation system} easily gives that $\beta = \chi(a-b)$ with $\chi > 0$.
We now define $\rho$ with the second equation of \cref{eq: proof ellipse equation system},
$$\rho = \frac{1 + \beta}{2}(\tilde a + \tilde b)\,.$$
As $\beta$ satisfy \cref{eq: proof ellipse equation beta}, $\beta$ and $\rho$ also satisfy the third one of \cref{eq: proof ellipse equation system}. $\alpha$ can then be defined by the first equation of \cref{eq: proof ellipse equation system}. Finally note that the fact that $|\beta| \leq \rho^2$ comes from the combination of the second and the third equations of \cref{eq: proof ellipse equation system}.
\end{proof}
 Note that if $0 \notin E(a, b, c)$, then $c^2 > a^2$ and so the hypothesis of the proposition above is satisfied. \cref{thm: ellipses} is now proven by simply combining all the results in this subsection.
\subsection{Proof of optimality of momentum on its convergence zones}
For this proof, we will need a characterization of $\acf(K)$ using Green functions. We will follow the presentation of \citet{nevanlinnaConvergenceIterationsLinear1993}.
\begin{definition}\label{definition: green function}
The Green function (with pole at $\infty$) of a non-empty, connected, unbounded open set $\Omega \subset \C$ is the unique function $g:\Omega \rightarrow \R$ such that:
\begin{enumerate}%
    \item $g$ is harmonic on $\Omega$.
    \item $g(z) = \log |z| + \bigO(1)$ as $|z| \rightarrow \infty$.
    \item $g(z) \xrightarrow[z \rightarrow \zeta]{} 0$ for every $\zeta \in \partial \Omega$.
\end{enumerate}
\end{definition}

For a compact $K \subset \C$, denote by $G_\infty$ the unbounded connected component of $\bar \C \setminus K$. $\acf(K)$ can then be obtained from the Green function of $G_\infty$. This is not our concern here, but note that the Green function of $G_\infty$ is guaranteed to exist if its boundary is sufficiently nice (see for instance \citet{walshInterpolationApproximationRational1935,ransfordPotentialTheoryComplex1995} for a thorough treatment of this classical question).

The following theorem is a deep result in complex analysis, which links the minimization problem over polynomial which defines $\acf$ to the geometric properties of $K$ through its Green function.
\begin{thm}[{\citet[Prop.~3.4.6, Thm.~3.4.9]{nevanlinnaConvergenceIterationsLinear1993}}]\label{thm: link asf green function}
If $G_\infty$ has a Green function $g$ and if $0 \in G_\infty$, \begin{equation}
    \acf(K) = \exp(-g(0))\,.
\end{equation}
\end{thm}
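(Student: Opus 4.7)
The plan is to prove matching lower and upper bounds for $\acf(K)$, both exploiting the maximum principle for subharmonic functions on $G_\infty$ to compare $\frac{1}{n}\log|p_n(z)|$ with the Green function $g(z)$. This is the classical potential-theoretic argument behind the Kalm\'ar--Walsh theorem, which I would split into a lower bound (easy, via Bernstein--Walsh) and an upper bound (harder, via Fekete polynomials).

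\textbf{Lower bound via Bernstein--Walsh.} First I would establish the Bernstein--Walsh inequality: for every polynomial $p$ of degree at most $n$ and every $z \in G_\infty$, $|p(z)| \leq \|p\|_K \exp(n g(z))$. The function $u(z) := \log|p(z)| - n g(z)$ is subharmonic on $G_\infty$. As $z$ approaches $\partial G_\infty \subset K$, $g(z) \to 0$ and $\log|p(z)| \leq \log\|p\|_K$, so $\limsup u \leq \log\|p\|_K$ on the boundary; and at infinity, $\log|p(z)| = n\log|z|+\bigO(1)$ matches the leading term of $n g(z) = n\log|z|+\bigO(1)$, so $u$ stays bounded. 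The maximum principle then yields $u \leq \log\|p\|_K$ throughout $G_\infty$. Specializing to $z = 0 \in G_\infty$ and $p \in \P_n$ (so $p(0)=1$) gives $\|p\|_K \geq \exp(-n g(0))$; taking $n$-th roots, minimizing over $\P_n$, and taking an infimum over $n$ delivers $\acf(K) \geq \exp(-g(0))$.

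\textbf{Upper bound via Fekete polynomials.} For the matching upper bound, I would exhibit an asymptotically optimal sequence in $\P_n$. Let $\zeta_1^{(n)},\ldots,\zeta_n^{(n)} \in K$ be Fekete points of order $n$ (maximizing $\prod_{i<j}|\zeta_i - \zeta_j|$), set $\Phi_n(z) := \prod_{i=1}^n (z - \zeta_i^{(n)})$, and define $P_n := \Phi_n/\Phi_n(0) \in \P_n$. Two classical potential-theoretic facts are needed: $\|\Phi_n\|_K^{1/n} \to \mathrm{cap}(K)$ (Fekete--Szeg\H{o}), and the normalized counting measures of the Fekete points converge weakly to the equilibrium measure $\mu_K$, which gives $\tfrac{1}{n}\log|\Phi_n(z)| \to \int \log|z-w|\,d\mu_K(w)$ uniformly on compact subsets of $G_\infty$. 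By Frostman's theorem and a harmonic-extension argument (the function $g(z) + U^{\mu_K}(z)$ is bounded harmonic on $G_\infty$ with constant boundary values $-\log\mathrm{cap}(K)$, hence constant), this limit equals $g(z) + \log\mathrm{cap}(K)$. Evaluating at $z = 0$ and subtracting:
\begin{equation*}
    \tfrac{1}{n}\log\|P_n\|_K = \tfrac{1}{n}\log\|\Phi_n\|_K - \tfrac{1}{n}\log|\Phi_n(0)| \xrightarrow[n\to\infty]{} \log\mathrm{cap}(K) - \bigl(g(0)+\log\mathrm{cap}(K)\bigr) = -g(0)\,,
\end{equation*}
so $\acf(K) \leq \exp(-g(0))$, matching the lower bound.

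\textbf{Main obstacle.} The lower bound is essentially a one-line application of the maximum principle; the substantive work lies in the upper bound, which requires the weak convergence of Fekete counting measures to $\mu_K$ and Frostman's characterization of the equilibrium potential to identify the limit with $g + \log\mathrm{cap}(K)$. A secondary subtlety is that the paper's $\P_t$ consists of \emph{real} polynomials whereas Fekete points are generically complex; for $K$ symmetric about the real axis (the setting of all our applications), the paper's \cref{rmk: definition acf litterature} ensures that the real- and complex-polynomial variants of $\acf$ coincide, so the Fekete argument transfers without change.
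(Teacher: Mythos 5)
Your argument is correct in outline, but note that the paper does not actually prove this statement: it is imported verbatim from \citet[Prop.~3.4.6, Thm.~3.4.9]{nevanlinnaConvergenceIterationsLinear1993}, so there is no internal proof to compare against. What you have written is essentially the classical potential-theoretic proof underlying the cited result (the Kalm\'ar--Walsh-type theorem): the Bernstein--Walsh maximum-principle bound $\log|p(z)| \leq \log\|p\|_K + t\,g(z)$ evaluated at $z=0$ gives $\acf(K) \geq e^{-g(0)}$, and the Fekete construction together with $\|\Phi_n\|_K^{1/n} \to \mathrm{cap}(K)$, the weak-$*$ convergence of the Fekete counting measures to the equilibrium measure $\mu_K$, and the identity $g = -U^{\mu_K} - \log\mathrm{cap}(K)$ on $G_\infty$ gives the matching upper bound. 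Two remarks. First, your caveat about real versus complex polynomials is not merely cosmetic but necessary: as literally stated with the paper's real-polynomial class $\P_t$, the identity $\acf(K)=e^{-g(0)}$ can fail for $K$ not symmetric about the real axis (the real constraint effectively replaces $K$ by $K \cup \bar K$), so one must either use the complex-polynomial definition, as Nevanlinna does, or restrict to symmetric $K$ and invoke \cref{rmk: definition acf litterature}; the lower bound, by contrast, holds unconditionally since real polynomials form a subclass. Second, your upper bound leans on several standard but nontrivial facts (Fekete--Szeg\H{o}, Frostman, continuity of the equilibrium potential at regular boundary points); under the paper's strong definition of the Green function, which requires $g(z)\to 0$ at \emph{every} boundary point, the domain is regular and the ``$h := g + U^{\mu_K} + \log\mathrm{cap}(K)$ is bounded harmonic with zero boundary limits, hence identically zero'' step goes through as you sketch, after extending $h$ harmonically across $\infty$ (where it is bounded because the $\log|z|$ terms cancel). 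With those points made explicit, the proposal is a sound, self-contained justification of the cited theorem.
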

We will also need the following complex analysis lemma about the Joukowsky map, see \citet[Chap.~VI]{nehariConformalMapping1952} for instance. %
\begin{lemma}\label{lemma: joukowski inverse map}
Let $\psi(z) = z + \frac{1}{z}$. Then $\psi: \bar\C\setminus\{z:|z|\leq 1\} \rightarrow \bar\C\setminus [-1, 1]$ is a conformal mapping. Its inverse $\phi$ is characterized by: for any $z_0 \notin [-1, 1]$, $\phi(z_0)$ is the unique solution of 
\begin{equation}
    z^2 - 2zz_0 + 1 = 0
\end{equation}
outside $\{z:|z|\leq 1\}$. Moreover, $\phi(z) = 2z + \bigO(1)$ when $z \rightarrow \infty$.
\end{lemma}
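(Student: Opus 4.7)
The plan is to reduce every part of the statement to a careful analysis of the quadratic obtained by clearing denominators in $\psi(z) = z_0$, whose two roots have product $1$ (the constant term). This product-of-roots identity is the core observation: either one root lies strictly inside the unit disk and the other strictly outside, or both lie on the unit circle. In the latter case, two complex numbers of modulus $1$ whose product is $1$ are necessarily complex conjugates, so $z_0$ turns out to be real and in $[-1,1]$. Contrapositively, for any $z_0 \in \bar\C \setminus [-1,1]$ there is exactly one root of modulus $>1$, which I take as $\phi(z_0)$, establishing the existence and uniqueness half of the characterization.

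The same dichotomy immediately yields that $\psi$ is a conformal bijection from $\bar\C \setminus \{|z| \leq 1\}$ onto $\bar\C \setminus [-1,1]$. Injectivity on $\{|z|>1\}$ follows because two distinct preimages $z_1, z_2$ of the same $z_0$ satisfy $z_1 z_2 = 1$, which precludes both having modulus $>1$. Surjectivity is exactly the existence of $\phi$ in the previous paragraph, with the point at infinity handled by $\psi(\infty) = \infty$. Holomorphicity is immediate on $\C \setminus \{0\}$, and $\psi'$ vanishes only at $\pm 1$, which lie on $\partial \{|z|>1\}$ rather than in its interior, so $\psi$ is also locally conformal on the open domain.

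For the asymptotic, I would solve the quadratic explicitly to write $\phi(z_0) = z_0 + \sqrt{z_0^2 - 1}$ (up to the normalization matching the chosen form of $\psi$), where the branch of the square root is selected by the condition $|\phi(z_0)| > 1$. Such a branch is globally well-defined because $\bar\C \setminus [-1,1]$ is simply connected as a subset of the Riemann sphere and avoids the branch points $\pm 1$. A Taylor expansion $\sqrt{z_0^2 - 1} = z_0 \bigl(1 + \bigO(1/z_0^2)\bigr)$ as $|z_0| \to \infty$ then gives $\phi(z_0) = 2 z_0 + \bigO(1/z_0) = 2 z_0 + \bigO(1)$.

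The only delicate point, rather than an outright obstacle, is ensuring that the algebraic root selected pointwise by the condition $|\phi|>1$ agrees with a single holomorphic branch of the square root on the full domain. This is where simple connectivity of $\bar\C \setminus [-1,1]$ in the Riemann sphere, together with the reciprocal structure of the two roots (so that the ``outside'' root never crosses the unit circle except on $[-1,1]$), combine to make the selection canonical and continuous.
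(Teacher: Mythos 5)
Your argument is correct, but it is worth noting that the paper does not actually prove this lemma: it is stated as a classical fact about the Joukowsky map and delegated to \citet[Chap.~VI]{nehariConformalMapping1952}, so your self-contained proof is a genuinely different (and more explicit) route. Your key device, the product-of-roots identity $z_1 z_2 = 1$ for the quadratic, cleanly gives everything at once: the dichotomy (one root strictly inside, one strictly outside, unless both lie on the circle, in which case they are conjugate and the image point lies in $[-1,1]$), hence existence and uniqueness of the exterior root, injectivity of $\psi$ on the exterior, and surjectivity onto the slit sphere. Two small remarks. First, the same identity makes your last two paragraphs unnecessary: since the two roots sum to $2z_0$ and the discarded root has modulus at most $1$, you get $\phi(z_0) = 2z_0 + \bigO(1)$ directly without expanding $\sqrt{z_0^2-1}$ or worrying about a global branch; and holomorphy of $\phi$ is automatic because the inverse of an injective holomorphic map is holomorphic, so the branch-selection discussion, while not wrong, can be dropped. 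Second, you correctly flag the normalization: as written in the paper, $\psi(z)=z+\tfrac1z$ maps the exterior of the unit disk onto $\bar\C\setminus[-2,2]$, whereas the codomain $\bar\C\setminus[-1,1]$, the equation $z^2-2zz_0+1=0$, and the asymptotics $\phi(z)=2z+\bigO(1)$ all correspond to the rescaled map $z\mapsto\tfrac12\left(z+\tfrac1z\right)$; your proof addresses the rescaled version, which is the one actually used later in the appendix (the roots of $z^2-2zz_0+1=0$), so this is a defect of the statement rather than of your argument.
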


First we begin with a simple lemma about the convergence zones of momentum.

\begin{lemma}\label{lemma: zero not in momentum convergence region}
If $\rho^2 < 1$, $0 \notin S(\alpha, \beta, \rho)$.
\end{lemma}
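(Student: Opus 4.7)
The plan is to substitute $\lambda = 0$ into the defining polynomial of $S(\alpha, \beta, \rho)$ and check that at least one root has magnitude strictly larger than $\rho$. Concretely, the polynomial $z^2 - (1 - \alpha\lambda + \beta)z + \beta$ evaluated at $\lambda = 0$ becomes
\begin{equation*}
z^2 - (1+\beta)z + \beta = (z-1)(z-\beta),
\end{equation*}
so its two roots are $z = 1$ and $z = \beta$.

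Since the assumption $\rho^2 < 1$ forces $\rho < 1$, the root $z = 1$ satisfies $|z| = 1 > \rho$. Hence the implication in the definition of $S(\alpha, \beta, \rho)$ fails at $\lambda = 0$, so $0 \notin S(\alpha, \beta, \rho)$. This is essentially immediate once one factors the polynomial, so there is no genuine obstacle; the only thing to be careful about is noting that the factorization $(z-1)(z-\beta)$ is valid for all real (or even complex) $\beta$, so the conclusion does not depend on the sign or size of $\beta$ beyond the hypothesis $\rho^2 < 1$.
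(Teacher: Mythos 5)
Your proof is correct and follows essentially the same route as the paper: evaluate the quadratic at $\lambda = 0$, observe that its roots are $1$ and $\beta$, and note that the root $z = 1$ violates $|z| \leq \rho$ since $\rho < 1$. The factorization $(z-1)(z-\beta)$ and the explicit remark about not needing any condition on $\beta$ are just slightly more detail than the paper gives, not a different argument.
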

\begin{proof}
Consider the equation
\begin{equation}
z^2 -(1+\beta)z + \beta = 0\,.    
\end{equation}
Its two roots are $\beta$ and $1$ which yields the result.
\end{proof}

As the boundary of the set plays a special role in the definition of the Green function, we need to have a precise characterization of it. This is done through the the next two lemmas.
\begin{lemma}\label{lemma: interior momentum convergence regions}
If $0 < |\beta| \leq \rho^2$, then
\begin{equation}
\interior(S(\alpha,\beta,\rho)) = \bigcup_{\rho' > 0 : |\beta| < \rho' < \rho} S(\alpha, \beta, \rho')\,.    
\end{equation}
\end{lemma}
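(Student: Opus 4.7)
The plan is to recast $S(\alpha,\beta,\rho)$ via the maximum-modulus-of-roots function
\[
    R(\lambda) \coloneqq \max\bigl\{|z| : z^2 - (1+\beta - \alpha\lambda)z + \beta = 0\bigr\},
\]
so that $\lambda \in S(\alpha,\beta,\rho') \iff R(\lambda) \leq \rho'$. Two basic properties will do most of the work: $R$ is continuous on $\C$ (roots of a quadratic depend continuously on its coefficients), and $R(\lambda) \geq \sqrt{|\beta|}$ because the product of the two roots equals $\beta$. In particular, $S(\alpha,\beta,\rho')$ is empty as soon as $\rho'^2 < |\beta|$, so such values of $\rho'$ contribute nothing to the union and one may as well work with $\rho' \in [\sqrt{|\beta|}, \rho)$.

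The inclusion $\supseteq$ is immediate: any $\lambda$ in the union satisfies $R(\lambda) \leq \rho' < \rho$, and by continuity $\{R < \rho\}$ is an open neighborhood of $\lambda$ contained in $S(\alpha,\beta,\rho)$, so $\lambda$ is interior. The substantive direction is $\subseteq$, which reduces to showing $R(\lambda) < \rho$ for every $\lambda \in \interior(S(\alpha,\beta,\rho))$. If $|\beta| = \rho^2$, Lemma~22 degenerates $S$ to a line segment with empty planar interior, while every $\rho' < \rho$ gives $\rho'^2 < |\beta|$ and hence $S(\alpha,\beta,\rho') = \emptyset$, so both sides are empty. Assume $|\beta| < \rho^2$ from now on.

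The crux is ruling out the boundary case: suppose for contradiction that an interior point $\lambda_0$ satisfies $R(\lambda_0) = \rho$, so some root $z_0$ of the associated quadratic has $|z_0| = \rho$. The key tool is the holomorphic map $\phi(z) = \tfrac{1}{\alpha}(1+\beta - z - \beta/z)$ on $\C\setminus\{0\}$, which inverts the root correspondence: $\phi(z) = \lambda$ whenever $z$ is a root of the quadratic associated with $\lambda$. Its derivative $\phi'(z) = \tfrac{1}{\alpha}(-1 + \beta/z^2)$ satisfies $|\phi'(z_0)| \geq (1 - |\beta|/\rho^2)/\alpha > 0$, so $\phi$ is a local biholomorphism near $z_0$. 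Hence every neighborhood of $\lambda_0$ contains images $\phi(z)$ with $|z| > \rho$, and any such $\lambda' = \phi(z)$ has $R(\lambda') \geq |z| > \rho$, contradicting $\lambda_0 \in \interior(S(\alpha,\beta,\rho))$.

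Once $R(\lambda) < \rho$ is established, picking any $\rho' \in [R(\lambda), \rho)$ yields $\lambda \in S(\alpha,\beta,\rho')$, completing the union. The main obstacle is precisely this last contradiction step: one needs the non-vanishing of $\phi'$ on the circle $|z| = \rho$, which is exactly where the hypothesis $|\beta| < \rho^2$ enters, to conclude that the boundary of the ellipse corresponds to $\{R = \rho\}$.
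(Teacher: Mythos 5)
Your proof is correct, but it takes a genuinely different route from the paper's. The paper proves this lemma by passing through the explicit ellipse characterization of $S(\alpha,\beta,\rho)$ (\cref{lemma: momentum convergence region}): it identifies $\interior S(\alpha,\beta,\rho)$ with the strict-inequality version of the ellipse and then studies the quantity $h_\lambda(x) = \frac{(1-\alpha\Re\lambda+\beta)^2}{(x+\beta/x)^2} + \frac{(\alpha\Im\lambda)^2}{(x-\beta/x)^2}$, using its monotonicity in $x$ on $(\sqrt{|\beta|},+\infty)$ to move between $\rho'$ and $\rho$. You instead work directly from the root-based definition via the continuous root-radius function $R(\lambda)$, getting the inclusion $\supseteq$ from continuity of roots and the inclusion $\subseteq$ by a local argument around a root of modulus exactly $\rho$, using the map $\phi(z)=\tfrac1\alpha(1+\beta-z-\beta/z)$ that inverts the root correspondence; this is self-contained (no appeal to the Niethammer ellipse formula) and is in the same spirit as the Joukowsky-type machinery the paper only deploys later, in the proof of \cref{prop: acf for momentum convergence region 2}. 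A few remarks: (i) your inverse-function-theorem step is stronger than needed — mere continuity of $\phi$, together with the fact that every $z\neq 0$ is a root of the quadratic associated with $\lambda=\phi(z)$, already produces points $\lambda'$ arbitrarily close to $\lambda_0$ with $R(\lambda')>\rho$, so the nonvanishing of $\phi'$ (and hence the strict inequality $|\beta|<\rho^2$) is not actually where the argument hinges, and the degenerate case $|\beta|=\rho^2$ could be absorbed into the same argument rather than treated separately; (ii) your citation ``Lemma~22'' for the degenerate case should point to \cref{lemma: momentum convergence region}, or be replaced by the direct observation that the root product equals $\beta$, forcing both roots onto the circle of radius $\rho$ and hence $S$ onto a segment; (iii) like the paper's proof, yours implicitly assumes $\alpha>0$ (you divide by $\alpha$) and the regime $|\beta|<1$ when replacing the index set $|\beta|<\rho'<\rho$ by $\rho'\in[\sqrt{|\beta|},\rho)$ — both assumptions hold wherever the lemma is invoked, so this is not a gap relative to the paper.
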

\begin{proof}
The functions $x \mapsto x \pm \frac{\beta}{x}$ are increasing positive on $]\sqrt{|\beta|}, +\infty[$. So their square is also increasing.
By \cref{lemma: momentum convergence region}, 
\begin{equation}\label{eq: proof lemma interior momentum convergence regions}
\interior S(\alpha, \beta, \rho) = \left\{ \lambda \in \C: \frac{(1 - \alpha \Re\lambda + \beta)^2}{(1+\tau)^2} + \frac{(\alpha\Im \lambda)^2}{(1 - \tau)^2} < \rho^2 \right\}\,.
\end{equation}
Define for $x > \sqrt{|\beta|}$ the function $h_\lambda(x) =  \frac{(1 - \alpha \Re\lambda + \beta)^2}{(x+\frac{\beta}{x})^2} + \frac{(\alpha \Im \lambda)^2}{(x - \frac{\beta}{x})^2}$, which is continuous and non-increasing. We show the result by double inclusion.
\begin{itemize}
    \item Let $\lambda \in \interior S(\alpha,\beta,\rho)$. As $\rho > 0$, $h_\lambda(\rho) < 1$ by \cref{eq: proof lemma interior momentum convergence regions}. As $\rho > \sqrt{|\beta|}$, by continuity of $h_\lambda$ at $\rho$, there exists $\rho > \rho' > \sqrt{|\beta|}$ such that $h_\lambda(\rho') < 1$. As $\rho' > \sqrt{|\beta|} \geq 0$, this implies that $\lambda \in S(\alpha, \beta,\rho')$.
    \item Let $\rho' > 0$ such that $|\beta| < \rho' < \rho$ and take $\lambda \in S(\alpha,\beta,\rho')$. By \cref{lemma: momentum convergence region}, as $\rho' > 0$, this implies that $h_\lambda(\rho') \leq 1$. Note that if both $\Im \lambda = 0$ and $1 - \alpha \Re \lambda + \beta = 0$, $\lambda \in \interior S(\alpha, \beta, \rho)$ as $\rho > 0$. Otherwise, if at least one of them is non-zero, this means that $h_\lambda$ is actually decreasing on $]\sqrt{|\beta|}, +\infty[$. Hence, $h_\lambda(\rho) < h_\lambda(\rho') \leq 1$ and so $\lambda \in \interior S(\alpha, \beta, \rho)$.
\end{itemize}
\end{proof}
\begin{lemma}\label{lemma: boundary of momentum convergence regions}
 If $0 < |\beta| \leq \rho^2$,
\begin{equation}\label{eq: lemma boundary momentum convergence regions}
\partial S(\alpha, \beta, \rho) = S(\alpha, \beta, \rho) \cap \{\lambda \in \C: \exists z \in \C,\, z^2 - (1 - \alpha \lambda + \beta) z + \beta = 0 \text{ and } |z| = \rho \}
\end{equation}
\end{lemma}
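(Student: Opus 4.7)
The strategy is to compute the boundary as $\partial S = S \setminus \interior S$, exploiting the characterization of $\interior S$ given by \cref{lemma: interior momentum convergence regions}. First I would check that $S(\alpha,\beta,\rho)$ is closed: the (unordered set of) roots of $z^2 - (1-\alpha\lambda+\beta) z + \beta = 0$ depends continuously on $\lambda$, so the maximum of their moduli is a continuous function of $\lambda$, and $S(\alpha,\beta,\rho)$ is its preimage of $[0,\rho]$. Consequently $\partial S = S \setminus \interior S$, and it remains to characterize $\interior S$ via the moduli of the roots.

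For the inclusion $\supseteq$ of \cref{eq: lemma boundary momentum convergence regions}, I would fix $\lambda \in S(\alpha,\beta,\rho)$ together with a root $z_*$ satisfying $|z_*| = \rho$, and show $\lambda \notin \interior S$. By \cref{lemma: interior momentum convergence regions}, membership in $\interior S$ would provide some $\rho' \in (\sqrt{|\beta|}, \rho)$ (correcting the evident typo in that lemma's statement, where the natural threshold is $\sqrt{|\beta|}$ as confirmed by its proof) with $\lambda \in S(\alpha,\beta,\rho')$; but then \emph{both} roots of the defining quadratic would have modulus at most $\rho' < \rho$, contradicting $|z_*| = \rho$.

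For the reverse inclusion $\subseteq$, I would take $\lambda \in \partial S = S \setminus \interior S$. For every $\rho' \in (\sqrt{|\beta|}, \rho)$ we then have $\lambda \notin S(\alpha,\beta,\rho')$, so at least one of the two roots has modulus strictly greater than $\rho'$. Picking a sequence $\rho'_n \to \rho^-$ and pigeonholing over the two roots yields a single root $z_*$ with $|z_*| \geq \rho'_n$ for infinitely many $n$, hence $|z_*| \geq \rho$; combined with $\lambda \in S$, which forces $|z_*| \leq \rho$, this gives $|z_*| = \rho$.

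The one step that needs separate attention is the degenerate case $|\beta| = \rho^2$, where the interval $(\sqrt{|\beta|}, \rho)$ is empty: \cref{lemma: interior momentum convergence regions} directly yields $\interior S = \emptyset$ and hence $\partial S = S$. In that case Vieta's relation $z_1 z_2 = \beta$ gives $|z_1||z_2| = \rho^2$, so on $S$ (where $|z_1|,|z_2| \leq \rho$) both roots must lie on the circle of radius $\rho$, and the equality is immediate. I expect the only mild subtlety to be that the two roots of the quadratic cannot be labelled as globally continuous functions of $\lambda$; but since $\lambda$ is fixed in each direction of the argument, one never needs a global labelling — only the (stable) unordered pair — so this never actually bites.
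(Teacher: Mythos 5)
Your proof is correct and takes essentially the same route as the paper, whose proof consists of the single remark that the lemma follows directly from \cref{lemma: interior momentum convergence regions} and the definition of $S(\alpha,\beta,\rho)$; you simply make that one-liner explicit (closedness of $S$ so that $\partial S = S\setminus\interior S$, the two inclusions via the interior characterization, and the degenerate case $|\beta|=\rho^2$ handled by Vieta), all of which is sound.
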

\begin{proof}
This is a direct consequence of \cref{lemma: interior momentum convergence regions} and the definition of $S(\alpha, \beta, \rho)$.
\end{proof}
\begin{lemma}\label{lemma: undefined set}
For $0 < \beta < \rho^2$,
\begin{equation}
\{\lambda \in \R: (1 - \alpha \lambda + \beta)^2 \leq 4\beta \} \subset \interior (S(\alpha, \beta, \lambda))
\end{equation}
For $0 < -\beta < \rho^2$,
\begin{equation}
\{\lambda \in \C: 1 - \alpha \Re \lambda +\beta = 0,\, (\alpha \Im \lambda)^2\leq 4|\beta| \} \subset \interior (S(\alpha, \beta, \lambda))
\end{equation}
\end{lemma}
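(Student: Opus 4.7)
The plan is to avoid reopening the ellipse formula of \cref{lemma: momentum convergence region} and instead argue directly from the definition of $S(\alpha,\beta,\rho)$ together with continuity of polynomial roots. Concretely, let $P_\lambda(z) = z^2 - (1-\alpha\lambda+\beta)z + \beta$, and denote by $z_\pm(\lambda)$ its two (possibly coinciding) roots in $\C$. By Vieta's formulas, $z_+(\lambda)\, z_-(\lambda) = \beta$ for every $\lambda$. The roots depend continuously on the coefficients of $P_\lambda$, hence on $\lambda \in \C$, so the set $U = \{\lambda \in \C : |z_+(\lambda)| < \rho \text{ and } |z_-(\lambda)| < \rho\}$ is open and contained in $S(\alpha,\beta,\rho)$; therefore $U \subset \interior S(\alpha,\beta,\rho)$. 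It suffices to show that each of the two sets appearing in the lemma is contained in $U$.

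For the first inclusion, $\beta>0$ and $\lambda$ is real, so $P_\lambda$ has real coefficients, and its discriminant $(1-\alpha\lambda+\beta)^2 - 4\beta$ is by hypothesis non-positive. Hence $z_+(\lambda)$ and $z_-(\lambda)$ are complex conjugates of each other, giving $|z_\pm(\lambda)|^2 = z_+(\lambda) \overline{z_+(\lambda)} = z_+(\lambda) z_-(\lambda) = \beta$. Since $\beta < \rho^2$, we get $|z_\pm(\lambda)| = \sqrt{\beta} < \rho$, so $\lambda \in U$.

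For the second inclusion, $\beta<0$ and $1-\alpha\Re\lambda+\beta = 0$, so $1-\alpha\lambda+\beta = -i\alpha\Im\lambda$ is purely imaginary and $P_\lambda(z) = z^2 + i\alpha(\Im\lambda)\,z + \beta$. The roots are
\begin{equation*}
z_\pm(\lambda) = \tfrac{1}{2}\bigl(-i\alpha\Im\lambda \pm \sqrt{4|\beta| - (\alpha\Im\lambda)^2}\bigr),
\end{equation*}
where the square root is a non-negative real number by the hypothesis $(\alpha\Im\lambda)^2 \leq 4|\beta|$. A direct computation gives $|z_\pm(\lambda)|^2 = \tfrac{1}{4}(4|\beta|-(\alpha\Im\lambda)^2) + \tfrac{1}{4}(\alpha\Im\lambda)^2 = |\beta|$, and since $|\beta| < \rho^2$ by hypothesis, $|z_\pm(\lambda)| < \rho$, i.e., $\lambda \in U$.

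There is no real obstacle beyond this bookkeeping: the only subtle point is that the conclusion asserts containment in the \emph{interior} of $S(\alpha,\beta,\rho)$, which is precisely why the strict inequalities $\beta < \rho^2$ and $-\beta < \rho^2$ are crucial, and why we need the continuity-based observation that $U$ is open. Note that if one wanted to appeal instead to the ellipse formula of \cref{lemma: momentum convergence region}, the same numerical facts ($|z_\pm|^2 = |\beta| < \rho^2$) would translate, via AM--GM, into strict inequality in the characterising inequality of $\interior S(\alpha,\beta,\rho)$ from \cref{eq: proof lemma interior momentum convergence regions}, yielding an alternative but essentially equivalent argument.
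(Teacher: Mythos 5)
Your proof is correct, but it takes a different route from the paper's. The paper stays inside the ellipse parametrization: it invokes the characterization of $S(\alpha,\beta,\rho)$ from \cref{lemma: momentum convergence region} (and the strict-inequality description of the interior from \cref{lemma: interior momentum convergence regions}) and reduces the claim to the one-line algebraic identity $\rho^2(1+\tau)^2 - 4\beta = \left(\rho - \frac{\beta}{\rho}\right)^2 > 0$, which gives $4\beta < \rho^2(1+\tau)^2$ and hence strict membership in the ellipse inequality (the second case being symmetric). You instead argue straight from the root-modulus definition of $S(\alpha,\beta,\rho)$: on each of the two sets the quadratic $z^2-(1-\alpha\lambda+\beta)z+\beta$ has both roots of modulus exactly $\sqrt{|\beta|} < \rho$ (conjugate roots with product $\beta$ in the first case, an explicit computation in the second), and the open set $U$ of $\lambda$ with all roots of modulus $<\rho$ sits inside $\interior S(\alpha,\beta,\rho)$. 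This buys you independence from the Niethammer--Varga ellipse formula and from the interior characterization, and it yields the slightly sharper fact that the root moduli are exactly $\sqrt{|\beta|}$ on these sets; the paper's argument is shorter given that the ellipse machinery is already set up and reused elsewhere in the appendix. One small point of rigor in your version: individual root branches $z_\pm(\lambda)$ need not be globally continuous (they can only be chosen continuously away from the discriminant locus), so openness of $U$ should be justified via continuity of the root \emph{multiset}, e.g.\ by noting that $\lambda \mapsto \max(|z_+(\lambda)|,|z_-(\lambda)|)$ is continuous in the coefficients; with that phrasing the argument is complete.
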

\begin{proof}
First assume that $0 < \beta < \rho^2$.
Consider $\lambda \in \R$ such that  $(1 - \alpha \lambda + \beta)^2 \leq 4\beta$. Using the characterization of \cref{lemma: momentum convergence region}, we only need to show that $(1 - \alpha \lambda + \beta)^2 < \rho^2(1 + \tau)^2$ (as $-\rho^2 < \beta \implies \tau \neq -1$). But, from the definition of $\tau$ we get
\begin{equation}
\rho^2(1 + \tau)^2 - 4\beta = (\rho - \frac{\beta}{\rho})^2 > 0.
\end{equation}
Hence $4\beta < \rho^2(1 + \tau)^2$ and the result follows from the choice of $\lambda$.

The proof for the second point is similar.

\end{proof}
 We can now prove the proposition which was the target of this subsection. Note that the following proof does not encompass the case particular $\rho^2 = |\beta|$ in which the ellipse is degenerate. This falls into the case of segments, which is much simpler, see the aforementioned references (or \citet{nevanlinnaConvergenceIterationsLinear1993} for a didactic explanation).

\begin{prop}\label{prop: acf for momentum convergence region 2}
Assume $0 < |\beta| < \rho^2 < 1$ and $\alpha > 0$, then \begin{equation}
    \acf(S(\alpha, \beta, \rho)) = \rho
\end{equation}
\end{prop}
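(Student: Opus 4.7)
The plan is to compute $\acf(S(\alpha,\beta,\rho))$ via Theorem~\ref{thm: link asf green function}, which reduces the problem to evaluating at $\lambda = 0$ the Green function (with pole at $\infty$) of $G_\infty$, the unbounded connected component of $\bar{\C}\setminus S(\alpha,\beta,\rho)$. Lemma~\ref{lemma: zero not in momentum convergence region} ensures $0 \notin S(\alpha,\beta,\rho)$, and since $S(\alpha,\beta,\rho)$ is the (simply connected) filled ellipse described by Lemma~\ref{lemma: momentum convergence region}, we have $0 \in G_\infty$, so Theorem~\ref{thm: link asf green function} applies.

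The central step is to exhibit the Green function explicitly through a Joukowsky-type conformal map. Consider
\[
\Lambda(z) \;=\; \frac{1 + \beta - z - \beta/z}{\alpha},
\]
holomorphic on $\{|z| > \sqrt{|\beta|}\}$, and note that the roots $z_1,z_2$ of $z^2 - (1-\alpha\lambda+\beta) z + \beta = 0$ satisfy $z_1 z_2 = \beta$ by Vieta's formulas. Because $|\beta| < \rho^2$, combining Lemmas~\ref{lemma: interior momentum convergence regions} and~\ref{lemma: boundary of momentum convergence regions} shows that for every $\lambda \in G_\infty$, exactly one root has modulus strictly greater than $\rho$ (strictly equal to $\rho$ on $\partial S(\alpha,\beta,\rho)$) while the other has modulus strictly less than $\rho$. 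A short calculation — if $\Lambda(z_1)=\Lambda(z_2)$ then $z_1=z_2$ or $z_1 z_2 = \beta$, and the latter is incompatible with $|z_1|,|z_2|>\rho$ when $|\beta|<\rho^2$ — gives injectivity of $\Lambda$ on $\{|z|>\rho\}$. Since $\Lambda$ sends $\{|z|=\rho\}$ onto $\partial S(\alpha,\beta,\rho)$ and $\infty$ to $\infty$, a continuity/properness argument on the Riemann sphere shows $\Lambda:\{|z|>\rho\}\to G_\infty$ is a biholomorphism; let $\phi$ be its inverse.

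Now define $g(\lambda) = \log|\phi(\lambda)| - \log \rho$. Since $\phi$ is holomorphic and non-vanishing on $G_\infty$, the function $g$ is harmonic, and it vanishes on $\partial G_\infty$ by construction. The expansion $\Lambda(z) = -z/\alpha + (1+\beta)/\alpha + O(1/z)$ gives $\phi(\lambda) \sim -\alpha \lambda$ as $\lambda \to \infty$, hence $g(\lambda) = \log|\lambda| + \log(\alpha/\rho) + o(1)$, matching the three defining conditions of Definition~\ref{definition: green function}. Therefore $g$ is the Green function of $G_\infty$.

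It remains to evaluate $g(0)$. At $\lambda=0$ the polynomial becomes $z^2 - (1+\beta)z + \beta = (z-1)(z-\beta)$, with roots $\{1,\beta\}$. Because $|\beta| < \rho^2 < \rho < 1$, the root of modulus at least $\rho$ is $\phi(0) = 1$, so $g(0) = -\log\rho$ and Theorem~\ref{thm: link asf green function} yields $\acf(S(\alpha,\beta,\rho)) = \exp(-g(0)) = \rho$. The main obstacle is the rigorous verification that $\Lambda$ is a biholomorphism \emph{onto} $G_\infty$; injectivity is immediate from Vieta, but surjectivity requires combining the boundary analysis of Lemmas~\ref{lemma: interior momentum convergence regions}--\ref{lemma: boundary of momentum convergence regions} with a connectedness argument on the Riemann sphere. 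Once this is in hand, the remainder of the proof is an explicit root computation.
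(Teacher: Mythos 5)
Your proposal is correct and takes essentially the same route as the paper: both build the Green function of $\bar\C\setminus S(\alpha,\beta,\rho)$ from the dominant root of $z^2-(1-\alpha\lambda+\beta)z+\beta=0$ (a Joukowsky-type map), check the three conditions of \cref{definition: green function}, and conclude via \cref{thm: link asf green function} by evaluating at $\lambda=0$, where the roots are $1$ and $\beta$ so that $g(0)=-\log\rho$. The only cosmetic difference is that you obtain the biholomorphism $\Lambda:\{|z|>\rho\}\to G_\infty$ directly from Vieta's formulas (surjectivity being immediate since any $\lambda\notin S(\alpha,\beta,\rho)$ has, by definition, a root of modulus exceeding $\rho$), whereas the paper factors through the inverse Joukowsky map of \cref{lemma: joukowski inverse map} together with \cref{lemma: undefined set}.
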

\begin{proof}%
We will build the Green function for $\bar\C \setminus S(\alpha, \beta, \rho)$ using \cref{lemma: joukowski inverse map}.

First, we show that if $\lambda \notin \interior S(\alpha, \beta, \rho)$, then $\frac{1 - \alpha \lambda + \beta}{2\sqrt{\beta}} \notin [-1, 1]$ where $\sqrt{\beta}$ is a square root (with positive real part) of $\beta$. Indeed, assume for the sake of contradiction that it is not the case, i.e. there exists $\lambda \notin \interior S(\alpha, \beta, \rho)$ such that $\frac{1 - \alpha \lambda + \beta}{2\sqrt{\beta}} \in [-1, 1]$. Assume first that $\beta > 0$. This implies that $\Im(1 - \alpha \lambda + \beta) = 0$ and so that $\lambda \in \R$ as $\alpha \neq 0$. Moreover, as $\beta > 0$, $\lambda$ satisfies $(1 - \alpha \lambda + \beta)^2 \leq 4\beta$. By \cref{lemma: undefined set}, $\lambda \in \interior S(\alpha, \beta, \rho)$ which is a contradiction. If $\beta < 0$, $\sqrt{\beta} = i\sqrt{|\beta|}$. This implies that $\Re(1 - \eta \lambda + \beta) = 0$. Moreover, $\lambda$ satisfies $(\Im(1 - \alpha \lambda + \beta))^2 \leq 4|\beta|$. We get a similar contradiction using \cref{lemma: undefined set}.

Take $\lambda \notin \interior S(\alpha, \beta, \rho)$. Then, as $\frac{1 - \alpha \lambda + \beta}{2\sqrt{\beta}} \notin [-1, 1]$, we can consider $\phi\left(\frac{1 - \alpha \lambda + \beta}{2\sqrt{\beta}}\right)$. By \cref{lemma: joukowski inverse map}, $\phi\left(\frac{1 - \alpha \lambda + \beta}{2\sqrt{\beta}}\right)$ is the unique solution of modulus (strictly) greater than one of
\begin{equation}
z^2 - 2\frac{1 - \alpha \lambda + \beta}{2\sqrt{\beta}}z + 1 = 0\,. 
\end{equation}
Hence $\sqrt{\beta}\phi\left(\frac{1 - \alpha \lambda + \beta}{2\sqrt{\beta}}\right)$ is the unique solution of modulus (strictly) greater than $\sqrt{|\beta|}$ of
\begin{align}
&\frac{z^2}{\beta} - 2\frac{1 - \alpha \lambda + \beta}{2\beta}z + 1 = 0\\ 
\iff &{z^2} - (1 - \eta \lambda + \beta)z + \beta = 0\,.\label{eq: eq defining ellipsis}
\end{align}
Let $z_1 = \sqrt{\beta}\phi\left(\frac{1 - \alpha \lambda + \beta}{2\sqrt{\beta}}\right)$ and let $z_2$ be the other root of \cref{eq: eq defining ellipsis}. Then $z_1z_2 = \beta$ and so $|z_1z_2| = |\beta|$. Hence, as $|z_1| > \sqrt{|\beta|}$, we have $|z_2| < \sqrt{|\beta|}$. Hence $\sqrt{\beta}\phi\left(\frac{1 - \alpha \lambda + \beta}{2\sqrt{\beta}}\right)$ is the solution of greatest magnitude of \cref{eq: eq defining ellipsis}. We will see that this quantity is very regular as a function of $\lambda$ outside $S(\alpha, \beta, \rho)$.
Define 
\begin{equation}
\chi:\left\{\begin{aligned}
\C\setminus \interior S(\alpha, \beta, \rho) & \longrightarrow \C\\
\lambda\phantom{\beta, \rho)} & \longmapsto \sqrt{\beta}\phi\left(\frac{1 - \alpha \lambda + \beta}{2\sqrt{\beta}}\right)\,.\\
\end{aligned}\right.
\end{equation}
We can now build our Green function using $\chi$. Define, 
\begin{equation}
g:\left\{\begin{aligned}
\C\setminus \interior S(\alpha, \beta, \rho) & \longrightarrow \R\\
\lambda\phantom{\beta, \rho)} & \longmapsto \log\frac{|\chi(\lambda)|}{\rho}\,.\\
\end{aligned}\right.
\end{equation}
Note that as $\phi$ is continuous on its domain of definition $\chi$ is continuous too. Moreover, as $\beta \neq 0$ and $\chi(\lambda)$ is a root of \cref{eq: eq defining ellipsis}, $\chi(\lambda) \neq 0$ for $\lambda \notin \interior S(\alpha, \beta, \rho)$. Hence $g$ is well-defined and continuous too on $\C\setminus \interior S(\alpha, \beta, \rho)$.
We now show that $g$ is the Green function of $G_\infty = \C\setminus S(\alpha, \beta, \gamma)$ according to \cref{definition: green function}.
\begin{enumerate}
    \item By \cref{lemma: joukowski inverse map}, $\phi$ is analytic and so is $\chi$ on the open set $\C \setminus S(\alpha,\beta,\rho)$. Moreover, as mentioned above, $\chi(\lambda) \neq 0$ for $\lambda \notin S(\alpha, \beta, \rho)$ Hence, $g$ is harmonic on $\C \setminus S(\alpha,\beta,\rho) = G_\infty$.
    \item When $\lambda \rightarrow \infty$, $\frac{1 - \alpha \lambda + \beta}{2\sqrt{\beta}} \rightarrow \infty$ too as $\alpha \neq 0$. Hence, by \cref{lemma: joukowski inverse map},
    \begin{align}
        g(\lambda) &= \log \frac{|\chi(\lambda)|}{\rho}\\
                   &= \log \left|\phi\left(\frac{1 - \alpha \lambda + \beta}{2\sqrt{\beta}}\right)\right| + \bigO(1)\\
                   &= \log |\lambda + \bigO(1)| + \bigO(1)\\
                   &= \log |\lambda| + \bigO(1)\,.\\
    \end{align}
    \item Let $\zeta \in \partial \left(\C \setminus S(\alpha, \beta, \gamma)\right) = \partial S(\alpha, \beta, \gamma)$. Note that $\chi$ is defined on $\C\setminus \interior S(\alpha, \beta, \rho)$ on so on $\partial S(\alpha, \beta, \gamma)$.  Then, by \cref{lemma: boundary of momentum convergence regions} and the definition of $\chi$, $|\chi(\zeta)| = \rho$. By continuity of $g$, $g(\lambda) \xrightarrow[\lambda \rightarrow \zeta]{} g(\zeta) = 0$.
Hence $g$ is the Green function for $G_\infty$ by \cref{definition: green function}. Moreover, by \cref{lemma: zero not in momentum convergence region}, $0 \in  G_\infty$. We can now apply \cref{thm: link asf green function} to get that $\acf(S(\alpha,\beta,\rho)) = \exp(-g(0))$. Finally, we compute $g(0)$. Recall that, as $0 \notin S(\alpha, \beta, \rho)$,  $\chi(0)$ is the root of greatest magnitude of 
\begin{equation}
z^2 - (1+\beta)z + \beta = 0\,.    
\end{equation}
The two roots of this equation are $\beta$ and 1. As $0 < |\beta| < 1$, $\chi(0) = 1$, so $g(0) = \log \frac{1}{\rho}$ and $\acf(S(\alpha,\beta,\rho)) = \rho$.
\end{enumerate}
\end{proof}

\newpage

\section{Synthetic Experiments}\label{section: app synthetic exp}
In this section we evaluate the accelerated methods we studied on synthetic bilinear games.

We consider bilinear games of the form,
\begin{equation*}
    \min_{x \in \R^m}\max_{y \in \R^m} (x - x^*)^\top A(y - y^*)\,.
\end{equation*}
$A$, $x^*$, $y^*$ and the initial points are chosen randomly. More precisely, each of their coefficients is drawn from a standard normal distribution and $A$ is normalized such that $\frac{\sigma_{max}(A)}{\sigma_{min}(A)} = 100$. The total dimension of the parameter space is $d=2m$.

We compare the accelerated methods we presented to methods which are proved to converge on such games: EG, Hamiltonian gradient descent (HGD) \citep{abernethyLastiterateConvergenceRates2019a}, the alternating gradient method with negative momentum \citep{gidelNegativeMomentumImproved2018b} and optimistic mirror descent (OMD) \citep{daskalakisTrainingGANsOptimism2017a}. 

\begin{figure}[h]
\begin{tikzpicture}
\begin{groupplot}[group style={group size= 2 by 2,horizontal sep=3cm, vertical sep=3cm},height=7cm,width=8cm]

\nextgroupplot[title={$d=100$}, ylabel=Distance to optimum, xlabel=Iterations,
    legend pos = south west,
    xmin = 0, xmax = 1000,
    ymin = 0.0000001, ymax = 100, ymode=log
    ]
    \pgfplotstableread[col sep = comma]{xp-100.csv}\data
    \addplot +[mark=none] table[x index = {0}, y index = {1}]{\data};
	\addplot +[mark=none] table[x index = {0}, y index = {2}]{\data};
	\addplot +[mark=none] table[x index = {0}, y index = {3}]{\data}; 
	\addplot +[mark=none] table[x index = {0}, y index = {4}]{\data};
	\addplot +[mark=none, dashed] table[x index = {0}, y index = {5}]{\data};
	\addplot +[mark=none, dashed] table[x index = {0}, y index = {6}]{\data}; 
	\addplot +[mark=none, dashed] table[x index = {0}, y index = {7}]{\data};    
    \legend{Extragradient, HGD, Neg. Mom., OMD, This work \S 5.2, This work \S 5.4, This work \S 6.2}
    
  \nextgroupplot[title={$d=500$}, ylabel=Distance to optimum,xlabel=Iterations, xmin = 0, xmax = 1000,
    ymin = 0.0000001, ymax = 100, ymode=log,
    ]
    \pgfplotstableread[col sep = comma]{xp-500.csv}\data
    \addplot +[mark=none] table[x index = {0}, y index = {1}]{\data};
	\addplot +[mark=none] table[x index = {0}, y index = {2}]{\data};
	\addplot +[mark=none] table[x index = {0}, y index = {3}]{\data}; 
	\addplot +[mark=none] table[x index = {0}, y index = {4}]{\data};
	\addplot +[mark=none, dashed] table[x index = {0}, y index = {5}]{\data};
	\addplot +[mark=none, dashed] table[x index = {0}, y index = {6}]{\data}; 
	\addplot +[mark=none, dashed] table[x index = {0}, y index = {7}]{\data};      

    \nextgroupplot[title={$d=1000$}, ylabel=Distance to optimum, xlabel=Iterations,
    legend pos = south east,
    xmin = 0, xmax = 1000,
    ymin = 0.0000001, ymax = 100, ymode=log,
    ]
    \pgfplotstableread[col sep = comma]{xp-1000.csv}\data
    \addplot +[mark=none] table[x index = {0}, y index = {1}]{\data};
	\addplot +[mark=none] table[x index = {0}, y index = {2}]{\data};
	\addplot +[mark=none] table[x index = {0}, y index = {3}]{\data}; 
	\addplot +[mark=none] table[x index = {0}, y index = {4}]{\data};
	\addplot +[mark=none, dashed] table[x index = {0}, y index = {5}]{\data};
	\addplot +[mark=none, dashed] table[x index = {0}, y index = {6}]{\data}; 
	\addplot +[mark=none, dashed] table[x index = {0}, y index = {7}]{\data};        
  \end{groupplot}
\end{tikzpicture}
\caption{Distance to the optimum as a function of the number of iterations of the methods.}
\end{figure}

\end{document}